\documentclass[10pt]{article} 
\usepackage[accepted]{tmlr}


\usepackage{microtype}
\usepackage{graphicx}
\usepackage{subfigure}
\usepackage[normalem]{ulem}
\usepackage{booktabs}

\usepackage{algorithm}
\usepackage{algcompatible}

\usepackage{amsmath}
\usepackage{amssymb}
\usepackage{mathtools}
\usepackage{amsthm}

\usepackage{hyperref}
\usepackage{url}

\theoremstyle{plain}
\newtheorem{theorem}{Theorem}[section]

\newtheorem{lemma}[theorem]{Lemma}
\newtheorem{corollary}[theorem]{Corollary}
\theoremstyle{definition}
\newtheorem{definition}[theorem]{Definition}

\theoremstyle{remark}


\newcommand{\Real}{\mathbb{R}}
\newcommand{\norm}[1]{\left\lVert#1\right\rVert}
\newcommand{\abs}[1]{\left|#1\right|}
\newcommand{\vv}{\mathbf{v}}
\newcommand{\x}{\mathbf{x}}
\newcommand{\calx}{\mathcal{X}}
\newcommand{\z}{\mathbf{z}}
\newcommand{\y}{\mathbf{y}}

\newcommand{\res}{\mathbf{r}}

\newcommand{\w}{\mathbf{w}}

\newcommand{\f}{\mathbf{f}}
\newcommand{\kk}{\mathbf{k}}

\newcommand{\hh}{\mathcal{H}}
\newcommand{\calL}{\mathcal{L}}
\newcommand{\g}{\mathbf{g}}
\newcommand{\kr}{\boldsymbol{k}}
\DeclareMathOperator*{\argmin}{arg\,min}



\usepackage[textsize=tiny]{todonotes}

\title{Controlling the Inductive Bias of Wide Neural Networks by Modifying the Kernel’s Spectrum}


\author{\name Amnon Geifmann\footnote[1]{Equal Contribution} \email amnon.geifmann@weizmann.ac.il \\
      \addr Weizmann Institute of Science
      \AND
      \name Daniel Barzilai\footnote[1]{} \email daniel.barzilai@weizmann.ac.il \\
      \addr Weizmann Institute of Science
      \AND
      \name Ronen Basri \email ronen.basri@weizmann.ac.il \\
      \addr Weizmann Institute of Science \\ 
      Meta AI
      \AND
      \name Meirav Galun \email meirav.galun@weizmann.ac.il \\
      \addr Weizmann Institute of Science 
      }



\begin{document}

\maketitle
\renewcommand{\thefootnote}{\fnsymbol{footnote}}\footnotetext[1]{Equal Contribution}\renewcommand{\thefootnote}{\arabic{footnote}}\addtocounter{footnote}{-1}

\begin{abstract}
Wide neural networks are {\em biased} towards learning certain functions, influencing both the rate of convergence of gradient descent (GD) and the functions that are reachable with GD in finite training time. As such, there is a great need for methods that can modify this {\em bias} according to the task at hand. To that end, we introduce Modified Spectrum Kernels (MSKs), a novel family of constructed kernels that can be used to approximate kernels with desired eigenvalues for which no closed form is known. We leverage the duality between wide neural networks and Neural Tangent Kernels and propose a preconditioned gradient descent method, which alters the trajectory of GD. As a result, this allows for a polynomial and, in some cases, exponential training speedup without changing the final solution. Our method is both computationally efficient and simple to implement. 
\end{abstract}

\section{Introduction}

Recent years have seen remarkable advances in understanding the inductive bias of neural networks. Deep neural networks are biased toward learning certain types of functions. On the one hand, this bias may have a positive effect by inducing an implicit form of regularization. But on the other hand, it also implies that networks may perform poorly when the target function is not well aligned with this bias. 

A series of works showed that when the width of a neural network tends to infinity (and with a certain initialization and small learning rate), training a network with GD converges to a kernel regression solution with a kernel called the Neural Tangent Kernel (NTK) \citep{jacot2018neural, lee2019wide, allen-zhu2019, chizat2019lazy}. Subsequent work showed that the inductive bias of wide neural networks can be characterized by the spectral decomposition of NTK \citep{arora2019fine, basri2019convergence, yang2019fine}. Following this characterization, we will use the term \emph{spectral bias of neural networks} to refer to the inductive bias induced by their corresponding NTK spectrum. Specifically, it has been observed both theoretically and empirically that for a wide neural network, learning an eigen-direction of the NTK with GD requires a number of iterations that is inversely proportional to the corresponding eigenvalue \citep{bowman2022spectral, fridovich2021spectral, xu2022overview}. Thus, if this spectral bias can be modified, it could lead to accelerated network training of certain target functions. Typically, the eigenvalue of NTK decays at least at a polynomial rate, implying that many eigen-directions cannot be learned in polynomial time with gradient descent \citep{ma2017diving}. As such, modifying the spectral bias of a neural network is necessary to enable a feasible learning time, allowing learning target functions that are not well aligned with the top eigen-directions of NTK. This prompts the following motivating question:
\begin{center}
\emph{Is it possible to manipulate the spectral bias of neural networks?}
\end{center}

The spectrum of a kernel also determines the prediction function when using kernel ridge regression. Therefore, we introduce a family of kernel manipulations that enables generating a wide range of new kernels with nearly arbitrary spectra. These manipulations do not require explicitly working in feature space and are, therefore, computationally efficient.
We leverage this technique to design a kernel-based preconditioner that significantly modifies the linear training dynamics of wide neural networks with GD. This preconditioner modifies the spectral bias of the network so that the convergence speed is no longer tied to the NTK but rather to a kernel of our choice. This yields a polynomial or even exponential speedup (depending on the decay rate of the spectrum of the NTK) in the convergence time of GD. We subsequently show that the proposed accelerated convergence does not alter the network's prediction at convergence on test points.

In sum, our main contributions are: 
\begin{enumerate}
    \item We introduce Modified Spectrum Kernels (MSK's), which enable approximating the kernel matrix for many kernels for which a closed form is unknown (Section \ref{sec:spec_ker}). 
    \item We introduce preconditioned gradient descent and prove its convergence for wide neural networks (Section \ref{sec:PGD_conv}).
    \item We prove that our method is consistent, meaning that the preconditioning does not change the final network prediction on test points (Section \ref{sec:conv_analysis}).
    \item We provide an algorithm that enables a wide range of spectral bias manipulations, yielding a significant convergence speedup for training wide neural networks. We finally demonstrate this acceleration on synthetic data (Section \ref{sec:alg}).
\end{enumerate}

\section{Preliminaries and Background}\label{sec:prelim}
We consider positive definite kernels $\kr: \calx \times \calx \rightarrow \Real$ defined over a compact metric space $\calx$ endowed with a finite Borel measure $\mu$.
Given such $\kr$, its corresponding (linear) integral operator $L_{\kr}:L^2_\mu (\calx)\to L^2_\mu (\calx)$ is defined as
\begin{align*}
    L_{\kr}(f)(\x)=\int_{\z \in \calx} \kr(\x,\z) f(\z) d\mu(\z).
\end{align*}
This is a compact, positive self-adjoint operator that therefore has an eigen-decomposition of the form
\begin{align*}
    L_{\kr}(f)=\sum_{i\geq 0}\lambda_i\langle f,\Phi_i\rangle_\mu \Phi_i,
\end{align*}
where the inner product $\langle , \rangle_\mu $ is with respect to $L^2_\mu(\mathcal{X})$, and $\lambda_i,\Phi_i$ are the eigenvalues and eigenfunctions of the integral operator satisfying
\begin{align*}
    L_{\kr}(\Phi_i)=\lambda_i \Phi_i.
\end{align*}
According to Mercer's Theorem, $\kr$ can be written using the eigen-decomposition of the integral operator as
\begin{align}  \label{eq:mercer}
    \kr(\x,\z)=\sum_{i\in I} \lambda_i \Phi_i(\x) \Phi_i(\z), ~~~ \x,\z \in {\calx}.
\end{align}

Furthermore, each such kernel is associated with a unique Reproducing Kernel Hilbert Space (RKHS) $\hh_{\kr}\subseteq L^2_\mu (\calx)$ (which we denote by $\hh$ when the kernel is clear by context), consisting of functions of the form $f(\x)=\sum_{i \in I} \alpha_i \Phi_i(\x)$ whose RKHS norm is finite, i.e., $\|f\|_\hh:=\sum_{i \in I} \frac{\alpha_i^2}{\lambda_i}<\infty$. The latter condition restricts the set of functions in an RKHS, allowing only functions that are sufficiently smooth in accordance to the asymptotic decay of $\lambda_k$. 
For any positive definite kernel there exists a feature map $\phi:\mathcal{X}\to\hh$ s.t $\kr(\x,\z)=\langle \phi(\x),\phi(\z) \rangle_\hh$. As such, we may occasionally call the RKHS $\hh$ the \emph{feature space} of $\kr$.

Given training data $X=\{\x_1,...,\x_n\}$, $\x_i \in \calx$, corresponding labels  $\{y_i\}_{i=1}^n$, $y_i \in \Real$, and a regularization parameter $\gamma > 0$, the problem of \emph{Kernel Ridge Regression (KRR)} is formulated as
\begin{equation}  \label{eq:regression}
    \min_{f \in \hh}\sum_{i=1}^n(f(\x_i)-y_i)^2+ \gamma\|f\|_\hh.
\end{equation}
The solution satisfies $f(\x)=\kk_\x^T (K+\gamma I)^{-1}\y$, where the entries of $\kk_\x \in \Real^n$ are $\frac{1}{n}\kr(\x,\x_i)$, $K$ is the $n \times n$ kernel matrix with $K_{ij}=\frac{1}{n}\kr(\x_i,\x_j)$, and $\y=(y_1,...,y_n)^T$. 

\section{Modified Spectrum Kernels} \label{sec:spec_ker}
Our aim in this section is to describe and analyze the construction of novel kernels for which no closed form is known. The novel kernels are constructed by directly manipulating the kernel matrix of existing kernels, which are easy to compute. The theory in this section refers to arbitrary kernels, and the connection to NTK and wide neural networks is deferred to Sec. \ref{sec:pgd_intro}.

\begin{definition} {\bf Modified Spectrum Kernel (MSK).} \label{Def:MSK}
Let $\kr(\x,\z):=\sum_{k=1}^\infty \lambda_k\Phi_k(\x)\Phi_k(\z)$ be a Mercer kernel with eigenvalues $\lambda_i$ and eigenfunction $\Phi_i(\cdot)$ and $g:\Real\to\Real$ a function which is non-negative and $L$-Lipschitz. The Modified Spectrum Kernel (w.r.t.\ $\kr$) is defined as $\kr_g(\x,\z):=\sum_{k=1}^\infty g(\lambda_k)\Phi_k(\x)\Phi_k(\z)$. 
\end{definition}

The MSK has the same eigenfunctions as the source kernel $\kr$, while its eigenvalues are modified by $g$. Clearly, constructing a closed-form solution to the modified kernel is rarely possible. However, it is possible to approximate the modified kernel efficiently under certain conditions, given the values of the original kernel $\kr$, as proved in Theorem \ref{thm:norm_conv}.


\begin{theorem}\label{thm:norm_conv}
Let $g, \kr, \kr_g$ be as in Def. \eqref{Def:MSK} and assume that $\forall \x\in \mathcal X,\abs{\Phi_i(\x)}\leq M$. Let $K,K_g$ be the corresponding kernel matrices on i.i.d samples $\x_1,..,\x_n\in \mathcal X$. Define the kernel matrix $\tilde K_g=V  D V^T$ where $V=(\vv_1,..,\vv_n)$ with $\vv_i$ the i'th eigenvector of $K$ and $ D$ is a diagonal matrix with $ D_{ii}=g(\hat \lambda_i)$ where $\hat \lambda_i$ is the i'th eigenvalue of $K$. Then, for $n\rightarrow \infty $
\begin{align*}
\norm{\tilde K_g-K_g}_F\overset{a.s.}{\rightarrow} 0,
\end{align*}
where a.s.\ stands for almost surely.
\end{theorem}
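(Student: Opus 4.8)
The plan is to recognize that $\tilde K_g$ is exactly the matrix function $g(K)$ and then to compare $g(K)$ with $K_g$ through a truncation of the Mercer expansion. Since $V=(\vv_1,\dots,\vv_n)$ is an orthonormal eigenbasis of the symmetric matrix $K$ and $D_{ii}=g(\hat\lambda_i)$, functional calculus gives $\tilde K_g=VDV^T=g(K)$, so the statement reduces to $\norm{g(K)-K_g}_F\overset{a.s.}{\to}0$. Two elementary facts will be used repeatedly. First, an $L$-Lipschitz $g:\Real\to\Real$ is $L$-Lipschitz for the Frobenius norm on symmetric matrices, $\norm{g(A)-g(B)}_F\le L\norm{A-B}_F$, which follows from the identity $\norm{g(A)-g(B)}_F^2=\sum_{i,j}\abs{g(a_i)-g(b_j)}^2\abs{\langle u_i,v_j\rangle}^2$, where $\{(a_i,u_i)\}$ and $\{(b_j,v_j)\}$ are the spectral data of $A$ and $B$. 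Second, $0\le g(\lambda)\le L\lambda$ for $\lambda\ge 0$; this uses $g(0)=0$, which is implicit in the setup since otherwise $\mathrm{tr}\,g(K)\approx g(0)\,n\to\infty$ and no convergence could hold when the feature space is infinite-dimensional.

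Next I would truncate the Mercer expansion at level $N$: let $K^{\le N}$ and $K_g^{\le N}$ be the $n\times n$ Gram matrices of $\kr^{\le N}:=\sum_{k\le N}\lambda_k\Phi_k(\cdot)\Phi_k(\cdot)$ and $\kr_g^{\le N}:=\sum_{k\le N}g(\lambda_k)\Phi_k(\cdot)\Phi_k(\cdot)$ on the sample, and write
\[
\norm{g(K)-K_g}_F\ \le\ \norm{g(K)-g(K^{\le N})}_F+\norm{g(K^{\le N})-K_g^{\le N}}_F+\norm{K_g^{\le N}-K_g}_F.
\]
The first term is at most $L\norm{K-K^{\le N}}_F=L\norm{K^{>N}}_F$ by the Frobenius-Lipschitz bound; since $K^{>N}=\tfrac1n\sum_{k>N}\lambda_k\mathbf u_k\mathbf u_k^T\succeq 0$ with $\mathbf u_k=(\Phi_k(\x_1),\dots,\Phi_k(\x_n))^T$, its Frobenius norm is at most its trace $\tfrac1n\sum_{i=1}^n h_N(\x_i)$, where $h_N(\x):=\sum_{k>N}\lambda_k\Phi_k(\x)^2\in[0,\sup_\x\kr(\x,\x)]$ has mean $t_N:=\sum_{k>N}\lambda_k$; by the strong law of large numbers this average converges a.s.\ to $t_N$, so $\limsup_n$ of the first term is $\le L t_N$ a.s. The third term is handled the same way: $K_g^{>N}\succeq 0$ because $g\ge 0$, and $\mathrm{tr}\,K_g^{>N}$ converges a.s.\ to $\sum_{k>N}g(\lambda_k)\le Lt_N$.

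The heart of the argument is the middle term, for a fixed $N$, which is finite-dimensional. Let $\Psi\in\Real^{n\times N}$ have entries $\Psi_{ik}=\Phi_k(\x_i)$, set $S_n:=n^{-1/2}\Psi$ and $\Lambda_N:=\mathrm{diag}(\lambda_1,\dots,\lambda_N)$, so $K^{\le N}=S_n\Lambda_N S_n^T$ and $K_g^{\le N}=S_n\,g(\Lambda_N)\,S_n^T$; moreover $S_n^T S_n=\tfrac1n\Psi^T\Psi\overset{a.s.}{\to} I_N$ by the strong law of large numbers (i.i.d.\ bounded summands $\psi_i\psi_i^T$ with mean $I_N$, by orthonormality of the $\Phi_k$ in $L^2_\mu$). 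Take a thin SVD $S_n=U_n\Sigma_n W_n^T$ with $U_n^T U_n=I_N$ and $W_n\in O(N)$; then $\Sigma_n^2$ collects the eigenvalues of $S_n^T S_n$, so $\norm{\Sigma_n-I_N}_F\overset{a.s.}{\to}0$. Since $U_n$ has orthonormal columns and $g(0)=0$ (extend $U_n$ to an orthogonal matrix; the complementary block contributes $g(0)I=0$), $g(K^{\le N})=U_n\,g(\Sigma_n W_n^T\Lambda_N W_n\Sigma_n)\,U_n^T$, while $K_g^{\le N}=U_n\,\Sigma_n W_n^T g(\Lambda_N) W_n\Sigma_n\,U_n^T$; hence the middle term equals $\norm{g(\Sigma_n W_n^T\Lambda_N W_n\Sigma_n)-\Sigma_n W_n^T g(\Lambda_N) W_n\Sigma_n}_F$. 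Using $g(W_n^T\Lambda_N W_n)=W_n^T g(\Lambda_N) W_n$, the Frobenius-Lipschitz bound for $g$, and the fact that $M\mapsto\Sigma_n M\Sigma_n$ differs from the identity map by $O(\norm{\Sigma_n-I_N}_F)$ in Frobenius norm on matrices of operator norm $\le\max(\norm{\Lambda_N}_{op},\norm{g(\Lambda_N)}_{op})$, this quantity tends to $0$ a.s.

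Finally, on the probability-one event obtained by intersecting over all $N\in\mathbb N$ the events above, $\limsup_n\norm{g(K)-K_g}_F\le Lt_N+0+Lt_N=2Lt_N$ for every $N$; since $\sum_k\lambda_k<\infty$, $t_N\downarrow 0$, so the $\limsup$ vanishes, proving the claim. I expect the main obstacle to be exactly the infinite-dimensionality: interchanging the $n\to\infty$ and $N\to\infty$ limits requires almost-sure, essentially $n$-uniform control of the tails $\norm{K^{>N}}_F$ and $\norm{K_g^{>N}}_F$, and the device that makes it go through is bounding the Frobenius norm of these positive semidefinite tails by their traces, which are empirical averages of bounded functions and hence governed by the strong law of large numbers. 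The remaining pieces (Frobenius-Lipschitzness of matrix functions, and a finite-dimensional SVD perturbation for the middle term) are routine; alternatively one could invoke almost-sure convergence of the empirical integral operator to $L_{\kr}$ together with convergence of its spectral projections (in the spirit of Koltchinskii--Gin\'e / Rosasco--Belkin--De Vito), but the argument above is self-contained.
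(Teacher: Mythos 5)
Your proof is correct, and it follows a genuinely different route from the paper's. The key simplifying observation you make, which the paper does not exploit, is that $\tilde K_g=VDV^T$ is precisely the matrix function $g(K)$, which turns the claim into $\|g(K)-K_g\|_F\to 0$ and lets you bring in the Hilbert--Schmidt Lipschitz bound $\|g(A)-g(B)\|_F\le L\|A-B\|_F$ for symmetric matrices. From there you truncate the Mercer series, control both PSD tails by their traces via the strong law of large numbers, and handle the finite-rank middle term with a thin SVD of $S_n=n^{-1/2}\Psi$ together with $S_n^TS_n\to I_N$ a.s.; this sidesteps any statement about convergence of individual eigenvectors or spectral projections. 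The paper instead expands $\|\tilde K_g - K_g^{\le R}\|_F^2$ directly in the eigenbasis of $K$ and leans on two external results, namely Braun's localization of eigenvectors around eigenfunctions ($\sum_{\vv:K\vv=\lambda_k\vv}(\vv^T\Phi_i(X))^2\to\delta$) and Rosasco--Belkin--De Vito's a.s.\ convergence of empirical eigenvalues. Your argument is more self-contained and, in my view, cleaner; the paper's argument makes the mechanism (eigenvectors of $K$ concentrating on the eigenfunctions of $L_\kr$) more explicit, which is the intuition it emphasizes in the proof sketch. One thing you correctly surface that the paper leaves implicit is the need for $g(0)=0$: both proofs require $\sum_k g(\lambda_k)<\infty$ (your tail bound, the paper's $\|K_g^{>R}\|_F\to0$), which for a kernel with infinitely many nonzero eigenvalues forces $g(0)=0$ by continuity; in the paper this is buried in the hypothesis that $\kr_g$ is itself a Mercer kernel, and your observation that without it $\mathrm{tr}(\tilde K_g)\sim ng(0)$ would diverge pins the issue down precisely.
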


We next provide a proof sketch. The full proof is given in Appendix \ref{appendix:engineering}. For an eigenfunction $\Phi$ of $\kr(\x,\z)$, we let $\Phi(X):=(\Phi(\x_1),\ldots,\Phi(\x_n))^T\in\Real^n$ be the evaluation of the eigenfunctions on the data. By the definition of the kernels it can be shown that $K=\sum_{k=1}^\infty\lambda_k\Phi_k(X)\Phi_k(X)^T$,  and similarly, $K_g=\sum_{k=1}^\infty g(\lambda_k)\Phi_k(X)\Phi_k(X)^T$. Since $\tilde K_g$ is composed of the eigenvectors of $K$ with the eigenvalues $g(\lambda_k)$, we would like to show that the eigenvalues of $K_g$ are close to $g(\lambda_k)$ and that the eigenvectors of $K$ are close to those of $K_g$. It is already known that the eigenvalues of a kernel matrix converge to those of its integral operator (with the suitable normalization) \citep{rosasco2010learning}, and as such, we know that the eigenvalues of $\tilde K_g$ should be close to those of $K_g$. The challenge is that the eigenvectors $\vv_k$ do not have to be close to $\Phi_k(X)$, the evaluations of the eigenfunctions on the data (for example when there is an eigenvalue of multiplicity greater than 1). This means that the eigenvectors of $\tilde K_g$ can be very different from those of $K_g$. We work around this by showing that for large $n$, the eigenvectors of $K$ corresponding to an eigenvalue $\lambda_k$ are related to an eigenfunction $\Phi_i$ as
\begin{align*}
\sum_{\vv:K\vv=\lambda_k \vv}(\vv^T\Phi_i(X))^2\underset{n\to\infty}{\longrightarrow} 
\begin{cases} 1~ \text{ if } \Phi_i \text{ is an eigenfunction of } \lambda_k\\
0 ~ \text{ else}
\end{cases}.
\end{align*}
This allows us to show that the norm between the eigenspaces of $\tilde K_g$ and $K_g$ tends to 0. \\

\noindent\textbf{Kernel Construction with MSKs}.
Generating new kernels from existing ones is a long-standing problem in kernel theory \citep{saitoh2016theory}. The classical kernel theory uses arithmetic operations such as addition and multiplication of kernels to generate new kernels with unique RKHSs. Recent papers provided tools to building data dependent kernels based on training points \citep{simon2022kernel,sindhwani2005beyond,ionescu2017large} . Nevertheless, there are still many kernels for which a closed form is unknown. 

MSKs allow extending existing RKHS theory by computing new kernels with predetermined Mercer decomposition even when a  closed form is unknown and, specifically, solve KRR with these new kernels. Suppose we would like to solve the problem of KRR as defined in \eqref{eq:regression}. Let $\kr(\x,\z)=\sum_{k=1}^\infty \lambda_k\Phi_k(\x)\Phi_k(\z)$ and $\kr_g(\x,\z)=\sum_{k=1}^\infty g(\lambda_k)\Phi_k(\x)\Phi_k(\z)$ be two Mercer kernels where $\kr(\x,\z)$ has a known closed form, whereas $\kr_g(\x,\z)$ does not. Assuming we obtain i.i.d.\ samples of training points $\x_1,..,\x_n$ and a test point $\x$, we can build $\tilde K_g$ (as in Theorem \ref{thm:norm_conv}) using the $n+1$ points $\x_1,..,\x_n,\x$. Then, by a continuity argument, Theorem  \ref{thm:norm_conv} guarantees that the predictor $f^*(\x)=[\tilde K_g]_{n+1,1:n}([\tilde K_g]_{1:n,1:n}+\gamma I)^{-1}\y$ converges to the KRR prediction with the kernel $\kr_g$, where $[~\cdot~]_{:,:}$ corresponds to the sub-matrix induced by the specified indices.

\section{Provable NTK Based Preconditioning for Neural Networks}\label{sec:pgd_intro}

In this section, we develop and analyze a preconditioning scheme for accelerating the convergence rate of gradient descent in the training phase of neural networks while minimizing the Mean Squares Error (MSE) loss. The acceleration is achieved by manipulating the spectrum of the NTK, overcoming the spectral bias of neural networks. We begin by explaining how the convergence rate of neural networks is related to the spectrum of the NTK. Then, we introduce a preconditioning scheme for wide neural networks and prove that it attains a global optimum. We further prove that in the infinite width limit and when training the network to completion, preconditioned and standard gradient descent converge to the same global minimizer.

We consider a fully connected neural network parameterized as follows:
\begin{align*}
    \g^{(0)}(\x) & = \x  \\
    \f^{(l)}(\x) & = W^{(l)}\g^{(l-1)} (\x) + \mathbf{b}^{(l)}  \in \Real^{d_l}, ~~~~~l=1,\ldots L \\
    \g^{(l)}(\x) & = 
    \rho\left(\f^{(l)}(\x)\right)\in \Real^{d_l}, ~~~~~l=1,\ldots L \\
    f(\x,\w) & = f^{(L+1)}(\x) = W^{(L+1)} \cdot \g^{(L)}(\x) + b^{(L+1)}.
\end{align*}
where $\w\in \Real^p$ is the set of all the network parameters. We select a simple architecture and note that our results can easily be extended to many other architectures. We denote by $m$ the width of the network and assume that $d_1=d_2=..=d_L=m$. The activation function is denoted by $\rho$ and the following quantities  $|\rho(0)|$, $\norm{\rho'}_\infty$, and $\sup_{x\neq x'}|\rho'(x)-\rho'(x')|/|x-x'|$ should be finite. The initialization follows the standard NTK parametrization \citep{lee2020finite} (see more details in Appendix \ref{appendix:convergence}).

We denote the vector of labels for all the data points  $(y_1, \ldots,y_n)$  by $\y\in \Real^n$ and the vector of network predictions $f(\x_i,\w)$  by $f(X,\w) \in \Real^n$.  The residual at time $t$ is $\res_t:=f(X,\w_t)-\y$. Letting $\calL$ be the squared error loss, $\calL(\w_{t})=\frac{1}{2}\norm{\res_t}^2$, a gradient descent iteration for optimizing $\w \in \Real^p$ is given by
\begin{align}\label{eq:GD_NN}
    \w_{t+1} - \w_{t} = - \eta \nabla_\w \calL(\w_{t}) = -\eta \nabla_\w f(X,\w_t)^T\res_t,
\end{align}
where $\eta$ is the learning rate, and  $\nabla_\w f(X,\w_t)\in \Real^{n\times p}$ is the Jacobian of the network. 


The empirical NTK matrix at time $t$, $K_t\in \Real^{n\times n}$, and the NTK matrix, $K\in \Real^{n\times n}$, are defined as 
\begin{align}\label{def:ntk}
    K_t =& \frac{1}{m}\nabla_\w f(X,\w_t)\nabla_\w f(X,\w_t)^T \\
    K =& \text{lim}_{m\rightarrow \infty}K_{0}.
\end{align}

We assume that $\lambda_{\min}(K) > 0$. A simple case where this condition is satisfied is whenever $\calx = \mathbb{S}^{d-1}$ and $\rho$ grows non-polynomially (e.g ReLU) \cite{jacot2018neural}, and also various other settings as given by \cite{oymak2020toward, wang2021deformed, nguyen2021tight, montanari2022interpolation, barzilai2023generalization}.

\cite{arora2019fine} showed that for a wide neural network with ReLU activation, small initialization, and a small learning rate, it holds that for every $\epsilon>0$, the residual at time $t$ evolves with the following linear dynamics 
\begin{align}\label{eq:linear_dynamics}
    \norm{\res_t}
    =& \norm{(I-\eta K)^t \y} \pm \epsilon
    = \sqrt{\sum_{i=1}^{n}(1-\eta \lambda_i)^{2t}(\vv_i^T\y)^2}\pm \epsilon,
\end{align}
where $\{\lambda_i\}_{i=1}^n$ and $\{\vv_i\}_{i=1}^n$ respectively are the eigenvalues and eigenvectors of the NTK matrix $K$. 

Eq. \eqref{eq:linear_dynamics} reveals the relation between the inductive bias of neural networks and the spectrum of the NTK matrix~\citep{basri2020frequency}. Specifically, to learn an eigen-direction $\vv_i$ of a target function within accuracy $\delta>0$, it is required that $(1-\eta \lambda_i)^t < \delta + \epsilon$. When the learning rate is sufficiently small to imply convergence, $0< \eta < \frac{2}{\lambda_1}$, the number of iterations needed is
\begin{align*}
    t>-\log(\delta+\epsilon)/\eta \lambda_i = O\left(\frac{\lambda_1}{\lambda_i}\right).
\end{align*} 
The eigenvalues and eigenvectors of $K$ depend on the data distribution and are not yet fully characterized for arbitrary distributions (e.g., \cite{basri2020Nonuniform}). In the case of a fully connected network with ReLU activation and with data points distributed uniformly on the sphere $\mathbb{S}^{d-1}$, 
the eigenvectors are discretizations of the spherical harmonics and the eigenvalues asymptotically decay as $\lambda_k\approx k^{-d}$, where $k$ is the frequency \citep{basri2020frequency,bietti2020deep}. In this scenario, learning a high-frequency eigenvector with gradient descent is computationally prohibitive, even for a low-dimension sphere. With other activation functions, the asymptotic decay of the eigenvalues might be even exponential \citep{murray2022characterizing}, yielding an infeasible computational learning cost for learning high frequency target functions. 

\subsection{Preconditioned Gradient Descent} \label{sec:PGD_conv}
To accelerate the convergence of standard gradient descent \eqref{eq:GD_NN}, we propose a \textit{preconditioned gradient descent} (PGD). The update rule of our PGD is given by
\begin{align} \label{eq:PGD}
    \w_{t+1} = \w_{t}-\eta \nabla_\w f(X,\w_t)^TS\res_t,
\end{align}
where $S\in \Real^{n\times n}$ is a preconditioning matrix that satisfies $S\succ 0$. 

Standard preconditioning techniques multiply the network's gradient from the left by a $p\times p$ matrix (where $p$ is the number of parameters in the network, usually huge). In contrast, our preconditioner multiplies the network's gradient from the right by an $n\times n$ matrix, reducing the cost per iteration from $p^2$ to $n^2$. This is significant since in the over-parameterized case $n \ll p$.

We next derive the linear dynamics of the form of \eqref{eq:linear_dynamics} for PGD. 
One of the key properties of PGD, is that carefully choosing $S$ allows modifying the dynamics of gradient descent almost arbitrarily.

\begin{theorem}\label{Thm:precond_dynamic}
Suppose assumptions \ref{assumption:1}-\ref{assumption:4} are satisfied. Let $\eta_0, \delta_0, \epsilon > 0$,  $S \in \Real^{n \times n}$ such that $S \succ 0$ and $\eta_0 < \frac{2}{\lambda_{min}(KS)+\lambda_{max}(KS)}$. Then, there
exists $N \in  \mathbb{N}$ such that for every $m \geq N$, the following holds with probability at
least $1-\delta_0$ over random initialization when applying preconditioned GD with learning rate $\eta=\eta_0/m$ 
\begin{align*}
    \res_t=(I-\eta_0 K S)^t \y \pm \xi(t), 
\end{align*}
where $\norm{\xi}_2\leq \epsilon$. 
\end{theorem}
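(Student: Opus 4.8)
The plan is to adapt the standard linearization argument for wide networks \citep{arora2019fine, lee2019wide} to the preconditioned update \eqref{eq:PGD}, the new ingredient being that $S$ replaces the symmetric dynamics matrix $K$ by the generally non-symmetric matrix $KS$. I would proceed in four steps: (i) derive the exact one-step residual recursion and peel off a linear part governed by the empirical NTK $K_t$ plus a quadratic Taylor remainder; (ii) show that $I-\eta_0 K_t S$ is a strict contraction in a fixed $K^{1/2}$-weighted norm once $m$ is large, using the hypothesis on $\eta_0$; (iii) bootstrap on that contraction to show the iterates never leave a small ball around initialization, so $K_t$ stays uniformly close to $K$ throughout training; and (iv) unroll the recursion, replace every $K_t$ by $K$, discard the remainders, and bound the accumulated error by $\epsilon$.

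For step (i), substituting \eqref{eq:PGD} into $\res_{t+1}-\res_t=f(X,\w_{t+1})-f(X,\w_t)$ and Taylor expanding around $\w_t$ gives, using $\eta=\eta_0/m$ and the definition of $K_t$ in \eqref{def:ntk},
\[
  \res_{t+1}=\res_t+\nabla_\w f(X,\w_t)\,(\w_{t+1}-\w_t)+\boldsymbol{\zeta}_t=(I-\eta_0 K_t S)\,\res_t+\boldsymbol{\zeta}_t .
\]
Since $\|\w_{t+1}-\w_t\|\le\tfrac{\eta_0}{\sqrt m}\,\|K_t\|^{1/2}\|S\|\,\|\res_t\|$ and the assumed regularity of $\rho$ (finite $|\rho(0)|$, $\|\rho'\|_\infty$, and Lipschitz constant of $\rho'$) bounds the operator norm of the network Hessian uniformly on a neighborhood of initialization under the NTK parametrization \citep{lee2020finite}, the remainder satisfies $\|\boldsymbol{\zeta}_t\|\le\tfrac{C}{m}\|\res_t\|^2$ for a constant $C$ depending only on the architecture, $\|S\|$, $\eta_0$, and the data.

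For step (ii), although $K_tS$ is not symmetric, conjugating by the limiting $K^{1/2}$ turns $I-\eta_0 K_t S$ into $I-\eta_0 K^{1/2}K_tSK^{-1/2}$, which differs from the symmetric positive-definite matrix $I-\eta_0 K^{1/2}SK^{1/2}$ by an operator of norm at most $\eta_0\,\kappa(K)^{1/2}\|S\|\,\|K_t-K\|$, where $\kappa(K)$ is the condition number of $K$. The eigenvalues of $K^{1/2}SK^{1/2}$ coincide with those of $KS$, and the hypothesis $\eta_0<\tfrac{2}{\lambda_{\min}(KS)+\lambda_{\max}(KS)}$ is exactly what makes $\|I-\eta_0 K^{1/2}SK^{1/2}\|_{\mathrm{op}}=1-\eta_0\lambda_{\min}(KS)=:1-c<1$ (this step size lies below the optimal one, so the largest of the $|1-\eta_0\mu_i|$ over eigenvalues $\mu_i$ of $KS$ is attained at the smallest eigenvalue). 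Since $K_0\to K$ as $m\to\infty$ and, on a fixed ball around initialization, $\|K_t-K\|$ can be made arbitrarily small by taking $m$ large, one obtains $\|I-\eta_0 K_t S\|_K\le 1-c/2$ along the trajectory, where $\|\cdot\|_K$ is the norm induced by $K^{1/2}$.

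For steps (iii) and (iv), run the usual induction: as long as $\res_0,\dots,\res_t$ are bounded, step (ii) and the remainder bound yield geometric decay $\|\res_{s+1}\|_K\le(1-c/3)\|\res_s\|_K$, hence the cumulative parameter displacement $\sum_s\|\w_{s+1}-\w_s\|\lesssim\tfrac{\eta_0}{\sqrt m}\|S\|\sum_s\|\res_s\|$ is $O(m^{-1/2})$, so for $m$ large the iterates stay in the ball, closing the bootstrap. Unrolling,
\[
  \res_t=\Big(\prod_{s=0}^{t-1}(I-\eta_0 K_s S)\Big)\res_0+\sum_{s=0}^{t-1}\Big(\prod_{j=s+1}^{t-1}(I-\eta_0 K_j S)\Big)\boldsymbol{\zeta}_s ;
\]
replacing each factor $I-\eta_0 K_sS$ by $I-\eta_0 KS$ costs a geometric sum of terms $\propto\|K_s-K\|$, and the remainder sum is $O(m^{-1/2})$, so both $\to0$ as $m\to\infty$; the initial residual $\res_0=f(X,\w_0)-\y$ is treated exactly as in the derivation of \eqref{eq:linear_dynamics}, leaving $\res_t=(I-\eta_0 KS)^t\y\pm\xi(t)$. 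To obtain $\|\xi\|\le\epsilon$ uniformly in $t$, note that both $(I-\eta_0 KS)^t\y$ and $\res_t$ decay geometrically at a rate $\le 1-c/3$ independent of $m$, hence are within $\epsilon/2$ of $0$ after a fixed time $T$; one then picks $N$ so that for every $m\ge N$ the accumulated error over $\{0,\dots,T\}$ is below $\epsilon$. The main obstacle is precisely the coupled bootstrap of step (iii): the contraction that keeps the residuals — and hence the parameters — under control itself presupposes $K_t\approx K$, and the preconditioner makes this delicate because $S$ may be ill-conditioned and $KS$ is non-symmetric; the resolution is to work throughout in the fixed $K^{1/2}$-norm, in which $I-\eta_0 K_t S$ is provably contractive for large $m$ under the stated bound on $\eta_0$, after which the Taylor remainders, geometric sums, and initialization term are all routine.
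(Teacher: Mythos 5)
Your overall strategy matches the paper's: linearize the residual recursion via a mean-value/Taylor argument, establish a per-step contraction, bootstrap the contraction with the parameter-displacement bound to keep $K_t$ near $K$, and unroll while absorbing remainders into $\xi(t)$. The paper does exactly this via its Lemma~\ref{lemma:conv_proof} and Lemma~\ref{lemma:precond_jac} (which import the Jacobian Lipschitz bounds from \citet{lee2019wide}); your proposal is not a fundamentally different route.

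The one genuine contribution of your version is the explicit weighted-norm device for handling the non-symmetry of $KS$. The paper bounds $\norm{I-\eta_0 KS}_2$ directly by $1-\eta_0\lambda_{\min}(KS)$, which is correct for symmetric matrices but is not a valid bound on the operator $2$-norm of a general non-symmetric matrix with positive real spectrum (the spectral radius can be strictly less than the $2$-norm, e.g.\ for non-normal $KS$). Conjugating to a similar symmetric matrix and measuring contraction in the induced norm is the clean way to make this step airtight, and it does not change the final statement because the transformed and untransformed norms are equivalent with a constant depending only on $\kappa(K)$. So on this point your plan is actually more careful than the paper's.

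Two slips, both easily repaired. First, an algebra error in the symmetrization: conjugating by $K^{1/2}$ gives $K^{1/2}(KS)K^{-1/2}=K^{3/2}SK^{-1/2}$, which is not $K^{1/2}SK^{1/2}$ and is not symmetric. You want to conjugate by $K^{-1/2}$, since $K^{-1/2}(KS)K^{1/2}=K^{1/2}SK^{1/2}$ is symmetric positive definite; equivalently, conjugate by $S^{1/2}$ to get the symmetric matrix $S^{1/2}KS^{1/2}$, which is more natural here since the preconditioned loss in Eq.~\eqref{eq:modified_loss} already tracks $S^{1/2}\res_t$. With the right conjugation the perturbation bound in terms of $\kappa(K)^{1/2}\norm{S}\norm{K_t-K}$ goes through as you wrote. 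Second, the Taylor remainder order: the Jacobian $J(\w)$ has Lipschitz constant $\Theta(\sqrt{m})$ (only $J/\sqrt{m}$ is uniformly Lipschitz), so the second-order term is $O(\sqrt{m}\cdot\norm{\w_{t+1}-\w_t}^2)=O(m^{-1/2})\norm{\res_t}^2$, not $O(m^{-1})\norm{\res_t}^2$. This still vanishes as $m\to\infty$ and does not change the conclusion, but the claim that the Hessian is ``bounded uniformly'' in $m$ is inaccurate.
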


Recall that we assume $K \succ 0$ and $S \succ 0$, so while $KS$ is not necessarily positive definite or symmetric, it has positive real eigenvalues. As such, the term $\frac{2}{\lambda_{min}(KS)+\lambda_{max}(KS)}$ is positive and defines the maximal feasible learning rate.
The formal proof of Theorem \ref{Thm:precond_dynamic} is given in Appendix \ref{appendix:convergence}, and here we give some key points of the proof.
The proof relies on Lemma \ref{lemma:conv_proof}, which shows that PGD finds a global minimum in which the weights are close to their initial values. In particular, for any iteration $t$, $K_tS\approx K_0S\approx KS$. Based on the results of Lemma \ref{lemma:conv_proof}, Theorem \ref{Thm:precond_dynamic} carefully bounds the distance $\xi(t)$ between $\res_t$ and the linear dynamics $(I-\eta_0 K S)^t\y$. 

\begin{lemma}\label{lemma:conv_proof} 
Suppose assumptions \ref{assumption:1}-\ref{assumption:4} are satisfied. For $\delta_0 > 0$, $\frac{2}{\lambda_{min}(KS)+\lambda_{max}(KS)}>\eta_0 $ and $S$ such that $S\succ 0$, there
exist $C > 0$, $N \in  \mathbb{N}$ and $\kappa > 1$, such that for every $m \geq N$, the following holds with probability at
least $1-\delta_0$ over random initialization. When applying  preconditioned gradient descent as in \eqref{eq:PGD} with learning rate $\eta=\eta_0/m$
\begin{enumerate}
    \item $\norm{\res_t}_2\leq \left(1-\frac{\eta \lambda_{min}}{3}\right)^tC$
    \item $\sum_{j=1}^t\norm{\w_j-\w_{j-1}}_2\leq \frac{3\kappa C}{\lambda_{min}}m^{-1/2}$
    \item $\sup_t \norm{(K_0-K_t)S}_F\leq \frac{6\kappa^3C}{\lambda_{min}}m^{-1/2}$,
\end{enumerate}
where $\lambda_{min}$ is the minimal eigenvalue of $KS$. 
\end{lemma}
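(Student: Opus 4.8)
The plan is to prove Claims~1--3 \emph{simultaneously} by induction on $t$, following the standard lazy-training analysis of wide networks \citep{arora2019fine, lee2019wide} adapted to the right-preconditioned update \eqref{eq:PGD}. The base case $t=0$ comes from the concentration facts at initialization that Assumptions~\ref{assumption:1}--\ref{assumption:4} guarantee with probability at least $1-\delta_0$: the initial residual satisfies $\norm{\res_0}_2\le C_0$ with $C_0$ independent of $m$; the empirical kernel satisfies $\norm{K_0-K}_{\mathrm{op}}=o(1)$ as $m\to\infty$, so (as $S\succ0$) the spectrum of $K_0S$ lies in $[\tfrac23\lambda_{\min},\,2\lambda_{\max}(KS)]$ and, by the \emph{strict} hypothesis $\eta_0<\tfrac{2}{\lambda_{\min}(KS)+\lambda_{\max}(KS)}$ together with continuity, $\eta_0$ remains a valid step size for $K_0S$; and the Jacobian is bounded and Lipschitz on a ball around $\w_0$, i.e.\ $\norm{\nabla_\w f(X,\w_0)}_{\mathrm{op}}\le C_1\sqrt m$ and $\norm{\nabla_\w f(X,\w)-\nabla_\w f(X,\w')}_{\mathrm{op}}\le C_2\norm{\w-\w'}_2$ (the latter using the assumed finiteness of $\abs{\rho(0)}$, $\norm{\rho'}_\infty$ and the Lipschitz constant of $\rho'$). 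The constants $C,\kappa,N$ are fixed once, in terms of $C_0,C_1,C_2,\lambda_{\min},\lambda_{\max}(KS),\eta_0$ and $\norm{S}_{\mathrm{op}}$, and the three bounds are then checked for all $m\ge N$.

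Because $KS$ and $K_tS$ need not be symmetric, the one nonstandard ingredient is a contraction estimate in the appropriate norm. Since $K_t\succ0$ for $m$ large and $S\succ0$, we have $K_tS=S^{-1/2}\bigl(S^{1/2}K_tS^{1/2}\bigr)S^{1/2}$, so $K_tS$ is similar --- via the \emph{fixed} transformation $S^{1/2}$ --- to the symmetric positive-definite matrix $S^{1/2}K_tS^{1/2}$, whose eigenvalues coincide with those of $K_tS$. Hence, in the equivalent norm $\norm{v}_\ast:=\norm{S^{1/2}v}_2$, $\norm{(I-\eta_0K_tS)v}_\ast\le\bigl(\max_i\abs{1-\eta_0\mu_i}\bigr)\norm{v}_\ast$ with $\mu_i$ ranging over $\mathrm{spec}(K_tS)$. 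The initialization estimate and Claim~3 at step $t$ keep $K_tS$ within $o(1)$ of $KS$, so, combined with the strictness of the step-size hypothesis, this gives $\norm{I-\eta_0K_tS}_\ast\le1-\tfrac{\eta_0\lambda_{\min}}{2}$ for $m\ge N$; the slack relative to the ideal factor $1-\eta_0\lambda_{\min}$ will absorb the Taylor remainder and accounts for the $\tfrac13$ in Claim~1.

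The inductive step is then routine. With $\Delta_t:=\w_{t+1}-\w_t$, expanding $f(X,\cdot)$ along $[\w_t,\w_{t+1}]$ and using \eqref{eq:PGD}, the definition \eqref{def:ntk} of $K_t$ (so $\nabla_\w f(X,\w_t)\nabla_\w f(X,\w_t)^T=mK_t$) and $\eta m=\eta_0$,
\begin{align*}
\res_{t+1}=\res_t+\nabla_\w f(X,\w_t)\Delta_t+\e_t=(I-\eta_0K_tS)\res_t+\e_t,
\end{align*}
where $\norm{\e_t}_2\le C_2\norm{\Delta_t}_2^2$ by the Jacobian-Lipschitz bound. Claim~1 at step $t$, the update rule, and $\norm{\nabla_\w f(X,\w_t)}_{\mathrm{op}}\le2C_1\sqrt m$ (which follows from Claim~2 at step $t$ and the Jacobian bounds) give $\norm{\Delta_t}_2\le\tfrac{C_3}{\sqrt m}\norm{\res_t}_2$, hence $\norm{\e_t}_2\le\tfrac{C_4}{m}\norm{\res_t}_2^2\le\tfrac{C_4C}{m}\norm{\res_t}_2$, which for $m\ge N$ is $\le\tfrac{\eta_0\lambda_{\min}}{6}\norm{\res_t}_\ast$ after passing to $\norm{\cdot}_\ast$ (absorbing the condition number of $S$ into $N$). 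Feeding the contraction estimate into the recursion yields $\norm{\res_{t+1}}_\ast\le\bigl(1-\tfrac{\eta_0\lambda_{\min}}{3}\bigr)\norm{\res_t}_\ast$, i.e.\ Claim~1 at step $t+1$ after converting back to $\norm{\cdot}_2$ (folding the norm-equivalence constant into $C$); note the per-step factor is an $m$-independent constant $<1$. Summing the resulting geometric bound gives Claim~2, $\sum_{j=1}^{t+1}\norm{\Delta_j}_2\le\tfrac{C_3}{\sqrt m}\sum_{j\ge0}\bigl(1-\tfrac{\eta_0\lambda_{\min}}{3}\bigr)^jC\le\tfrac{3\kappa C}{\lambda_{\min}}m^{-1/2}$ for suitable $\kappa$; and Claim~3 at step $t+1$ then follows from $\norm{(K_0-K_{t+1})S}_F\le\norm{S}_{\mathrm{op}}\norm{K_0-K_{t+1}}_F$, the identity $AA^T-BB^T=A(A-B)^T+(A-B)B^T$ with $A=\tfrac{1}{\sqrt m}\nabla_\w f(X,\w_0)$, $B=\tfrac{1}{\sqrt m}\nabla_\w f(X,\w_{t+1})$, the Jacobian bounds, and Claim~2, giving $\norm{(K_0-K_{t+1})S}_F\le\tfrac{C_5}{\sqrt m}\sum_{j=1}^{t+1}\norm{\Delta_j}_2\le\tfrac{6\kappa^3C}{\lambda_{\min}}m^{-1/2}$. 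The constants are chosen so that all three inequalities close with exactly the stated coefficients, uniformly in $t$.

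I expect the main obstacle to be precisely this closure: the three claims are mutually dependent across steps --- Claim~1 at step $t{+}1$ needs Claim~3 at step $t$ (to keep $K_tS$ near $KS$ for the contraction), Claim~3 needs Claim~2, and Claim~2 needs Claim~1 at earlier steps --- while $C,\kappa,N$ must be fixed once, independently of $t$. This is what forces the argument through the $\norm{\cdot}_\ast$-contraction rather than a crude energy estimate: the \emph{geometric} decay in Claim~1 makes $\sum_j\norm{\Delta_j}_2$ finite \emph{uniformly in $t$}, so a single threshold $N$ renders every $O(m^{-1/2})$ error term small enough that none of the inductive bounds is ever violated (in particular the per-step contraction factor stays bounded away from $1$, which is also why Claim~2 is of size $\Theta(m^{-1/2})$ and not larger). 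The remaining inputs --- initialization concentration of $K_0$, the near-zero initial residual, and the Jacobian regularity bounds --- are standard for the NTK parametrization and are established as in prior work; the non-normality of $KS$, the only genuinely new feature, is handled by the fixed change of variables $S^{1/2}$ above.
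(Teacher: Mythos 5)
Your proof follows the same overall inductive skeleton as the paper's: bound $\norm{\w_{t+1}-\w_t}$ geometrically using Claim~1 at step $t$, sum to get Claim~2, close Claim~1 at step $t+1$ from a per-step contraction, and derive Claim~3 from Claim~2 and Jacobian Lipschitzness. The substantive difference is how the per-step contraction is obtained. The paper uses the mean-value theorem to write $\res_{t+1}=(I-\eta J(\tilde\w_t)J(\w_t)^TS)\res_t$ exactly (no remainder term) and then bounds this by $\norm{I-\eta_0 KS}_2+\text{(perturbation terms)}$, taking $\norm{I-\eta_0 KS}_2\leq 1-\eta_0\lambda_{\min}$; you instead Taylor-expand with a quadratic remainder $\e_t$ and work in the weighted norm $\norm{v}_*=\norm{S^{1/2}v}_2$, where $K_tS$ is similar to the symmetric PD matrix $S^{1/2}K_tS^{1/2}$ and the contraction factor is the spectral radius $\max_i\abs{1-\eta_0\mu_i}$ rather than a spectral norm.

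Your weighted-norm detour actually addresses a point the paper leaves implicit. The lemma allows arbitrary $S\succ0$, and the paper itself notes that $KS$ need not be symmetric; but for a nonsymmetric, nonnormal matrix $A$ with positive real eigenvalues, the operator norm $\norm{I-\eta_0 A}_2$ is governed by singular values, not eigenvalues, and can exceed $1-\eta_0\lambda_{\min}(A)$ (indeed it can exceed $1$) even for admissible step sizes. So the paper's inequality $\norm{I-\eta_0 KS}_2\leq 1-\eta_0\lambda_{\min}$ is only valid as stated when $KS$ is symmetric --- which is the case for the preconditioner they actually construct in \eqref{eq:preconditioner}, since there $S$ commutes with $K$, but not for the general $S\succ0$ the lemma asserts. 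Your $S^{1/2}$-conjugation closes this gap cleanly at the cost of a norm-equivalence constant (the condition number of $S^{1/2}$) that you correctly fold into $C$ and $N$. The paper's MVT formulation has the minor advantage of avoiding the quadratic remainder term $\e_t$, which you then have to absorb using $\norm{\res_t}_2\leq C$ and $m\geq N$, but that is a routine extra step. Overall your proposal is correct and, on the symmetry point, more careful than the paper's own argument.
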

The full proof of Lemma \ref{lemma:conv_proof} is given in Appendix \ref{appendix:convergence}.

 Theorem \ref{Thm:precond_dynamic} implies that the dynamics of preconditioned gradient descent are characterized by $KS$ instead of $K$. In particular, if $KS$ is symmetric, PGD follows the dynamics 
\begin{align}\label{eq:linear_dynamics_PGD}
    \norm{\res_t}
    =\norm{(I-\eta KS)^t \y} \pm \epsilon
    =\sqrt{\sum_{i=1}^{n}(1-\eta \hat \lambda_i)^{2t}(\hat \vv_i^T\y)^2}\pm \epsilon,
\end{align}
where now, in contrast to \eqref{eq:linear_dynamics}, $\hat \lambda_i,\hat \vv_i$ are the eigenvalues and eigenvectors of $KS$. The role of the preconditioner $S$ is to reduce the condition number of $K$, i.e., improving the worst-case convergence rate. Moreover, it can be chosen to accelerate the convergence rate in directions corresponding to any eigenvector of the NTK matrix. 

The results of \citet{zhang2019fast} and \citet{cai2019gram} can be viewed as a special case of \eqref{eq:linear_dynamics_PGD}. Specifically, these works characterize the convergence of an approximate second-order method (Gauss-Newton) when applied to a wide, two-layer network. In this case, the update rule is of the form 
\begin{align*}
        \w_{t+1} - \w_{t} = - \eta F(\w_t)^{-1}\nabla_\w \calL(\w_{t}),  
\end{align*}
with $F$ being the Fisher information matrix associated with the network's predictive distribution. Their derivation leads to similar convergence results as shown in \eqref{eq:linear_dynamics_PGD} but is limited to the case of $S=K_t^{-1}$. For sufficiently wide networks, $K_t$ is very close to $K$ (in spectral norm). Thus, taking $S=K^{-1}$ suffices for achieving the same convergence rates as if $S=K_t^{-1}$. Furthermore, our method requires computing the preconditioning matrix only once and is therefore more efficient. In fact, Figure \ref{fig:numeric} demonstrates that taking $S=K^{-1}$ leads to an even faster convergence.

We show in Sec. \ref{sec:alg} how $S$ can be chosen to reduce by a polynomial (or even exponential) factor the number of iterations required to converge in the directions corresponding to small eigenvalues of $K$. This is highly beneficial when the projections of the target function on the eigenvectors of $K$, $(\hat \vv_i^T\y)$ are relatively large in directions corresponding to small eigenvalues. For example, for image reconstruction or rendering, this equates to learning more efficiently the fine details,  which correspond to high frequencies in the image \citep{tancik2020fourier}. Another example arises in Physics Informed Neural Networks (PINN), which were shown to suffer significantly from the slow convergence rate in the directions corresponding to small eigenvalues \citep{wang2022and}.


\subsection{Convergence Analysis} \label{sec:conv_analysis}
In this section, we characterize the resulting prediction function of the network on unseen test points and show that PGD generates a consistent prediction function in the following sense. At convergence, the prediction function of a network trained with PGD is close to the prediction function of a network trained with standard GD. 

We start by defining a preconditioned loss, $\calL_S(f,\w)$, which modifies the standard MSE loss
\begin{equation}\label{eq:modified_loss}
\mathcal{L}_S(f, \w)=\frac{1}{2} \norm{S^{1/2}(f(X,\w)-\y)}^2_2,
\end{equation}
where $S \succ 0$ is the preconditioning matrix. An iteration of standard GD w.r.t $\calL_S(f,\w)$ yields 
\begin{equation}
    \w_{t+1}- \w_{t} = -\eta\nabla_{\w} \mathcal{L}_S(f, \w_t) = -\eta \nabla_{\w} f(X,\w_t)^T S \res_t,
\end{equation}
which is equivalent to a PGD iteration \eqref{eq:PGD} with the standard MSE loss. This has two implications. First, it implies that PGD can easily be implemented by simply modifying the loss function. Second, it enables us to analyze the prediction function generated by PGD.

Specifically, given an initialization $\w_0\in\Real^p$, let $\phi(\x):=\nabla f(\x,\w_0)$ and $h(\x,\w'):=\langle \w'
, \phi(\x)\rangle$. For simplicity, we denote $h(X,\w'
):=(h(\x_1,\w'
),\ldots,h(\x_n,\w'
))^T\in\Real^n$.

The minimizer of the kernel ridge regression objective  with respect to the preconditioned loss is defined as 
\begin{align} \label{eq:w_pgd}
\w^{*}_{\gamma}:=\argmin_{\w'\in\Real^p} \frac{1}{2}\norm{S^{1/2}(h(X,\w'
)-\y)}_2^2 + \frac{1}{2}\gamma \norm{\w'
}_2^2,
\end{align}
and the minimizer of a kernel ridge regression with respect to the standard MSE loss is defined as 
\begin{align}\label{eq:w_gd}
\w^{**}_{\gamma}:=\argmin_{\w'\in\Real^p} \frac{1}{2}\norm{(h(X,\w'
)-\y)}_2^2 + \frac{1}{2}\gamma \norm{\w'
}_2^2.
\end{align}
We denote by $\w^*$ and $\w^{**}$ the limits of $\w^{*}_{\gamma}$ and $\w^{**}_{\gamma}$, respectively, when $\gamma\to 0$. In Lemma \ref{lem:conv_analysis} we show that the two minimizers are equal, which means that the preconditioned loss does not change the prediction function in kernel regression.

For a given learning rate $\eta$ and loss $\mathcal{L}_S$, $h$ can be optimized through gradient descent with respect to the loss $\mathcal{L}_S$ via the iterations
\begin{align}\label{eq:pgd_h}
    \w'_0=0 ~~~~;~~~~ \w'_{t+1} =\w'_{t}-\eta\nabla\mathcal{L}_S(h,\w'_t).
\end{align}
We show in Lemma \ref{lem:conv_analysis} that this iterative procedure yields a prediction function that remains close to the neural network prediction throughout PGD.

\begin{lemma} \label{lem:conv_analysis}
    \begin{enumerate}
        \item Let $\w_0\in\Real^p$ and $\phi(\x):=\nabla f(\x,\w_0)$, it holds that $\w^*=\w^{**}$. 
        \item  Let $\delta_0 > 0$, $\epsilon>0$, a test point $\x\in\Real^d$ and $T>0$ number of iteration. Under the conditions of Theorem \ref{Thm:precond_dynamic}, $\exists N\in \mathbb{N}$ s.t $\forall m>N$, it holds with probability at least $1-\delta_0$ that
        \[
        \abs{h(\x,\w'_T)- f(\x,\w_T)} \leq \epsilon,
        \]
        where $\w'$ is as in \eqref{eq:pgd_h} and $f$ is optimized with PGD.
    \end{enumerate}
\end{lemma}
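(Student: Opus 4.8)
The plan is to handle the two parts separately, since part 1 is a statement about the linearized (kernel) model while part 2 is about transferring that statement to the actual neural network via the NTK approximation.

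\textbf{Part 1 ($\w^*=\w^{**}$).} The plan is to compute both minimizers explicitly. Writing $J:=\phi(X)=\nabla f(X,\w_0)\in\Real^{n\times p}$, the map $\w'\mapsto h(X,\w')=J\w'$ is linear, so both \eqref{eq:w_pgd} and \eqref{eq:w_gd} are ridge-regularized least squares problems whose solutions are given in closed form. For the preconditioned problem, first-order optimality gives $J^TS(J\w^*_\gamma-\y)+\gamma\w^*_\gamma=0$, i.e. $\w^*_\gamma=(J^TSJ+\gamma I)^{-1}J^TS\y$; for the standard problem $\w^{**}_\gamma=(J^TJ+\gamma I)^{-1}J^T\y$. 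I would then take $\gamma\to 0$: by the standard minimum-norm-solution limit, $\w^*_\gamma\to J^+_S\y$ and $\w^{**}_\gamma\to J^+\y$, each being the minimum-$\ell_2$-norm element of the corresponding set of least-squares minimizers. The key point is that both are minimizing $\norm{J\w'-\y}$ over the same affine constraint set (since $S\succ 0$, the minimizers of $\norm{S^{1/2}(J\w'-\y)}^2$ coincide with those of $\norm{J\w'-\y}^2$: both equal the set $\{\w' : J\w' = \Pi_{\mathrm{range}(J)}\y\}$), and among that common set both pick the one of minimal $\ell_2$ norm — hence they are equal. I would phrase this either via the SVD of $J$ or via the identity $\mathrm{range}(J^TS) = \mathrm{range}(J^T)$ for $S\succ 0$, which forces the limiting solutions into the same subspace $\mathrm{range}(J^T)$.

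\textbf{Part 2 (closeness of $h(\x,\w'_T)$ and $f(\x,\w_T)$).} Here the plan is a discrete Grönwall / telescoping argument comparing the two trajectories iteration by iteration, leaning on Lemma \ref{lemma:conv_proof} (and the assumptions/NTK-parametrization behind Theorem \ref{Thm:precond_dynamic}). Since $h$ is exactly linear, its PGD iterates satisfy $h(X,\w'_{t+1}) - \y = (I-\eta K_0^{\mathrm{lin}})(h(X,\w'_t)-\y)$ with $K_0^{\mathrm{lin}} = \tfrac1m J J^T S$ (up to the normalization conventions in \eqref{def:ntk}), while the network residual satisfies $\res_{t+1} = \res_t - \eta (\tfrac1m \nabla_\w f(X,\w_t)\nabla_\w f(X,\w_{t+1})^T)S\res_t$ (or a similar expression obtained from a first-order Taylor expansion of $f(X,\cdot)$ between $\w_t$ and $\w_{t+1}$, with a second-order remainder controlled by the bounded curvature assumption on $\rho$). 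The two recursions differ by (i) the gap between the empirical NTK $K_t$ and $K_0$, which by item 3 of Lemma \ref{lemma:conv_proof} is $O(m^{-1/2})$ uniformly in $t$, and (ii) the Taylor remainder, which is $O(\norm{\w_t-\w_{t-1}}^2)$ and hence summably small by item 2 of Lemma \ref{lemma:conv_proof}. Propagating these per-step errors over the fixed horizon $T$ (a finite, $m$-independent number) gives $\norm{h(X,\w'_T)-f(X,\w_T)} = O(T\cdot m^{-1/2})$ with high probability. Finally, to pass from the training points to a test point $\x$, I would write $f(\x,\w_T)-f(\x,\w_0) \approx \langle \nabla f(\x,\w_0),\w_T-\w_0\rangle = h(\x,\w_T) - h(\x,\w_0)$ — again a Taylor expansion with remainder bounded using item 2 of Lemma \ref{lemma:conv_proof} and the Lipschitz-gradient property — noting $f(\x,\w_0)=h(\x,\w_0)$ by definition of $h$ and $\phi$, and combine with the training-point bound through the stability of the (inverted) kernel matrices. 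Choosing $m$ large enough makes the whole thing $\le\epsilon$.

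\textbf{Main obstacle.} The routine-but-delicate part is part 2: carefully bookkeeping the accumulation of the $O(m^{-1/2})$ per-iteration discrepancies over $T$ steps while simultaneously keeping the network weights inside the region where Lemma \ref{lemma:conv_proof} applies, and in particular making sure the test-point Taylor remainder is controlled by the \emph{same} weight-movement bound (item 2) rather than something that could blow up off the training set. Since $T$ is fixed and all constants in Lemma \ref{lemma:conv_proof} are $m$-independent, no Grönwall blow-up occurs and the argument closes, but the constants must be tracked honestly. Part 1, by contrast, is essentially a linear-algebra identity once the closed forms are written down.
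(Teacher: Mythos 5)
Your plan for Part 1 takes a genuinely different route from the paper (which computes the minimizer in the span of the training features via the Representer Theorem, gets $\alpha^* = (K_0 + \gamma S^{-1})^{-1}\y$, and then lets $\gamma\to 0$), whereas you work in weight space and take the Moore--Penrose limit. That is a perfectly reasonable alternative, but as written it has a real gap: the claim that ``the minimizers of $\norm{S^{1/2}(J\w'-\y)}^2$ coincide with those of $\norm{J\w'-\y}^2$, both equal to $\{\w': J\w' = \Pi_{\mathrm{range}(J)}\y\}$'' is false in general. Minimizing $\norm{S^{1/2}(J\w'-\y)}^2$ projects $\y$ onto $\mathrm{range}(J)$ in the $S$-inner product, not the Euclidean one, and these projections differ unless $\mathrm{range}(J)$ is $S$-invariant. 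The argument is saved only by the assumption $\lambda_{\min}(K)>0$ (so that for large $m$, $K_0 = \frac{1}{m}JJ^T \succ 0$, i.e.\ $J$ is surjective and both projections are trivially the identity); this is exactly what the paper uses implicitly when it writes $\alpha^* = (K_0+\gamma S^{-1})^{-1}\y \to K_0^{-1}\y$. You need to invoke that assumption explicitly, e.g.\ by the push-through identity $(J^TSJ+\gamma I)^{-1}J^TS = J^TS(JJ^TS+\gamma I)^{-1}$ together with $(JJ^TS)^{-1}=S^{-1}(JJ^T)^{-1}$. Your suggested patch, that $\mathrm{range}(J^TS)=\mathrm{range}(J^T)$ forces both limits into the same subspace, is not sufficient: lying in the same subspace does not make two vectors equal.

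For Part 2 the overall spirit (telescope the two trajectories, bound per-step discrepancies with Lemma~\ref{lemma:conv_proof}, and accumulate them over the fixed horizon $T$) matches the paper. Two issues. First, the identity you lean on, ``$f(\x,\w_0) = h(\x,\w_0)$ by definition,'' is not correct: $h(\x,\w_0) = \langle \w_0, \nabla f(\x,\w_0)\rangle$ has no reason to equal $f(\x,\w_0)$ for a deep network with biases and nonlinearity. The paper instead uses the linearized predictor $f^{\mathrm{lin}}(\x,\w'_t) = f(\x,\w_0)+\tilde J(\w_0)\w'_t$ and pays the price $\abs{f(\x,\w_0)}$, which is made small by shrinking the last-layer variance $\nu$. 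Second, and more substantive, your one-step Taylor expansion gives you $f(\x,\w_T)\approx f(\x,\w_0) + \langle \w_T - \w_0, \phi(\x)\rangle$, i.e.\ it relates $f(\x,\w_T)$ to the \emph{network's} weight increment $\w_T-\w_0$, but the lemma is about $h(\x,\w'_T) = \langle \w'_T, \phi(\x)\rangle$ for the separately-defined linearized trajectory $\w'$. Bridging these requires an additional inductive argument that $\langle \w_T-\w_0-\w'_T, \phi(\x)\rangle$ is small, which in effect reproduces the paper's telescoping comparison of the test-point residuals $\tilde\res_t$ and $\tilde\res^{\mathrm{lin}}_t$; the paper's per-step error $\epsilon(\x,t)$ and the training-residual gap $\res_i-\res^{\mathrm{lin}}_i$ (bounded through Theorem~\ref{Thm:precond_dynamic}, applied with $K_0$ in place of $K$) are precisely the quantities you would have to reconstruct. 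As stated, your ``combine with the training-point bound through the stability of the (inverted) kernel matrices'' leaves exactly this part unargued, and it is the crux of the lemma.
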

The full proof of Lemma \ref{lem:conv_analysis} can be found in Appendix \ref{appendix:conv_analysis}.
Point (1) is proved by deriving $\w^*_\gamma$ in closed form, i.e.,  $\w^*_\gamma = \sum_{i=1}^{n}\alpha^*_i\phi(\x_i)$, with
\[
\alpha^* = (K_0+\gamma S^{-1})^{-1}\y.
\]
This serves as a generalization of the standard kernel ridge regression solution, which is obtained by substituting $S=I$. Therefore, using the preconditioned loss yields a corresponding change of the regularization. When taking $\gamma\to 0$, the solution becomes equivalent to that of the standard kernel ridge regression problem.

Point (2) states that in the limit of infinite width, the value of $f(\cdot,\w_T)$ at a test point is very close to $h(\cdot,\w'_T)$. We note that as $\mathcal{L}_{S}$ is convex, $h(\x,\w'_T)$  approaches $h(\x,\w^*)$ for sufficiently large $T$. 

By combining these two points we obtain that, with sufficiently many iterations and large width, $f$ well approximates the solution of the standard kernel ridge regression problem, and thus of the standard non-preconditioned gradient descent on the network itself \citep{lee2019wide}.


\subsection{An Algorithm for Modifying the Spectral Bias} \label{sec:alg}
In light of Theorem \ref{Thm:precond_dynamic}, $S$ can be chosen to mitigate any negative effect arising from NTK. Specifically, we can modify the spectral bias of neural networks in a way that enables them to efficiently learn the eigen-directions of the NTK that correspond to small eigenvalues. To this end, we use a MSK based precondition, as outlined in the PGD algorithm \ref{alg:algorithm}.

First, given the NTK matrix $K$ of size $n \times n$, we construct a pre-conditioner from $K$ by applying a spectral decomposition, obtaining the top $k+1$ eigenvalues and eigenvectors. We denote the top $k+1$ eigenvalues by $\lambda_1 \geq \ldots \geq \lambda_k \geq \lambda_{k+1} > 0$, and their corresponding eigenvectors by $\vv_1,\ldots,\vv_k,\vv_{k+1}$. The proposed pre-conditioner is of the form 
\begin{equation}\label{eq:preconditioner}
    S = I - \sum_{i=1}^k \left(1-\frac{g(\lambda_{i})}{\lambda_i}\right) \vv_i \vv_i^T.
\end{equation}
It can be readily observed that $S$ and $K$ share the same eigenvectors, and as long as $g(\lambda_i)>0$ then $S\succ 0$ and its eigenvalues given in descending order are: 
\begin{align*}
    \left[1, \ldots, 1,\frac{g(\lambda_{k})}{\lambda_k},\ldots,\frac{g(\lambda_{1})}{\lambda_1} \right].
\end{align*}
Since $S$ and $K$ share the same eigenvectors, it holds that the eigenvalues of $KS$ are 
\begin{align*}[g(\lambda_{1}),\ldots,g(\lambda_{k}),\lambda_{k+1},\lambda_{k+2},..,\lambda_{n}].
\end{align*}
The connection between the proposed preconditioner  and Theorem \ref{thm:norm_conv} can now be clearly seen, as the product $KS$ yields a MSK matrix, and therefore approximates a kernel that shares the same eigenfunctions of  $\kr$ and its eigenvalues are modified by $g$ (where for $\lambda \geq \lambda_{k+1}$ we let $g(\lambda)=\lambda$). Together with Theorem \ref{Thm:precond_dynamic}, the dynamics of the neural network are controlled by this modified kernel, and as such we obtain a way to alter the spectral bias of neural networks. From Eq. \eqref{eq:linear_dynamics_PGD}, we obtain the following corollary, stating that the top $k+1$ eigenvectors $\vv_i$ of $K$ can be learnt in $O(1)$ time.

\begin{algorithm}[tb]
\caption{Preconditioned Gradient descent}\label{alg:algorithm}
\begin{algorithmic}
\REQUIRE $X,\y,f(\x,\w),K,\epsilon,g(\cdot),w_0$
\STATE Decompose $K\gets V^TDV$
\STATE Define $S \gets I - \sum_{i=1}^k \left(1-\frac{g(\lambda_{i})}{\lambda_i}\right) \vv_i \vv_i^T$
\STATE $t\gets 0$
\WHILE{$\norm{\res_t}>\epsilon$}
\STATE $\w_{t+1}\gets \w_{t}-\eta \nabla_\w f(X,\w_t)^TS\res_t$
\STATE $t \gets t+1$
\ENDWHILE
\end{algorithmic}
\end{algorithm}

\begin{corollary}
    Let $g(\lambda_i)=\lambda_{k+1}$, $1 \le i \le k$, and $S$ be given by \eqref{eq:preconditioner} (making the top $k+1$ eigenvalues of $KS$ equal to each other). Under the conditions of Theorem \ref{Thm:precond_dynamic}, by picking the learning rate $\eta=\frac{2}{\lambda_{k+1}+\lambda_{n}}$, $\exists N\in \mathbb{N}$ s.t $\forall m>N$, it holds with probability at least $1-\delta_0$ that for every $i\leq k+1$, $\vv_i$ can be learnt (up to $\epsilon$ error) in $O(1)$ time (number of iterations).
\end{corollary}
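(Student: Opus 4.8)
The plan is to treat this as a direct read‑off from Theorem~\ref{Thm:precond_dynamic}, once the spectrum of $KS$ for the stated $g$ has been pinned down.

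First I would record the spectral data of $KS$. Because $S$ in \eqref{eq:preconditioner} is symmetric and diagonal in the orthonormal eigenbasis $\vv_1,\dots,\vv_n$ of $K$, the matrix $KS$ is symmetric with the same eigenvectors and eigenvalues $\hat\lambda_i=\lambda_i\,(g(\lambda_i)/\lambda_i)$. For the choice $g(\lambda_i)=\lambda_{k+1}$, $1\le i\le k$ (and $g(\lambda)=\lambda$ otherwise), this yields $\hat\lambda_1=\dots=\hat\lambda_{k+1}=\lambda_{k+1}$ and $\hat\lambda_i=\lambda_i$ for $i\ge k+2$, so $\lambda_{\max}(KS)=\lambda_{k+1}$, $\lambda_{\min}(KS)=\lambda_n$, and the chosen step size $\eta_0:=\tfrac{2}{\lambda_{k+1}+\lambda_n}$ is the classical optimal rate for $KS$ (to respect the strict inequality in Theorem~\ref{Thm:precond_dynamic} one may take $\eta_0$ an arbitrarily small amount below this value; nothing downstream changes). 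A one‑line computation then gives that the spectral radius of $I-\eta_0 KS$ equals $\rho:=\tfrac{\lambda_{k+1}-\lambda_n}{\lambda_{k+1}+\lambda_n}<1$ and, crucially, that $1-\eta_0\hat\lambda_i=-\rho$ for every $i\le k+1$.

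Next I would apply Theorem~\ref{Thm:precond_dynamic} with this $S$ and $\eta_0$, but at accuracy $\epsilon/2$ rather than $\epsilon$: this supplies $N$ such that for all $m\ge N$, with probability at least $1-\delta_0$, $\res_t=(I-\eta_0 KS)^t\y\pm\xi(t)$ with $\norm{\xi(t)}_2\le\epsilon/2$. Projecting onto $\vv_i$ for any $i\le k+1$ and using the previous step,
\[
\abs{\vv_i^T\res_t}=\abs{(-\rho)^t(\vv_i^T\y)+\vv_i^T\xi(t)}\le\rho^{\,t}\norm{\y}_2+\tfrac{\epsilon}{2},
\]
so that $\abs{\vv_i^T\res_t}\le\epsilon$ for all $t\ge T_0:=\big\lceil \log(2\norm{\y}_2/\epsilon)/\log(1/\rho)\big\rceil$, simultaneously for every $i\le k+1$ (this is just the per‑direction reading of \eqref{eq:linear_dynamics_PGD}, which is legitimate because $KS$ is symmetric here). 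Since $T_0$ depends only on $\lambda_{k+1}$, $\lambda_n$, $\norm{\y}_2$ and $\epsilon$ — and not on $i$ nor on the original NTK eigenvalue $\lambda_i$, which is precisely the $O(\lambda_1/\lambda_i)$ factor plain GD pays — it is $O(1)$ iterations, which is the claim.

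I do not expect a genuine obstacle; the work is entirely bookkeeping. The two choices needing care are (i) fixing what ``$\vv_i$ can be learnt up to $\epsilon$ error'' means — I would take it to be $\abs{\vv_i^T\res_t}\le\epsilon$, matching the usage around \eqref{eq:linear_dynamics} — and (ii) splitting the single $\epsilon$ of the statement between the geometric term $\rho^{\,t}\norm{\y}_2$ and the infinite‑width error $\xi(t)$, which is why Theorem~\ref{Thm:precond_dynamic} is invoked at accuracy $\epsilon/2$ (and hence with a larger threshold width $N$). The mild mismatch between the step size $\tfrac{2}{\lambda_{k+1}+\lambda_n}$ and the strict inequality in Theorem~\ref{Thm:precond_dynamic} is cosmetic and is absorbed by the infinitesimal shift noted above.
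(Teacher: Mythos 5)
Your proof is correct and follows the route the paper intends: the paper presents the corollary as a direct read-off from the preconditioned linear dynamics \eqref{eq:linear_dynamics_PGD} and supplies no separate proof, and yours is a careful filling-in of exactly that derivation (spectrum of $KS$ so that $1-\eta_0\hat\lambda_i=-\rho$ uniformly for $i\le k+1$, then projection of the residual onto $\vv_i$). The two points you flag --- that $\eta=\frac{2}{\lambda_{k+1}+\lambda_n}$ sits on the boundary of the strict inequality in Theorem \ref{Thm:precond_dynamic} and should be taken infinitesimally smaller, and that the $\epsilon$ must be split between the geometric decay $\rho^t\norm{\y}_2$ and the finite-width error $\xi(t)$ --- are precisely the bookkeeping that turns the informal statement into a proof, and your reading of ``$O(1)$'' as uniformity in $i$ over $i\le k+1$ (in contrast with the $O(\lambda_1/\lambda_i)$ cost for vanilla GD) matches the paper's framing; it is worth keeping in mind that the resulting iteration count still depends on $\rho=(\lambda_{k+1}-\lambda_n)/(\lambda_{k+1}+\lambda_n)$, i.e.\ on $\lambda_n$, which the preconditioner leaves untouched.
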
 

This starkly contrasts the typical spectral bias of neural networks, under which for large $i$, learning $\vv_i$ may be infeasible in polynomial time. For example, when the activation function $\rho$ is tanh, \citet{murray2022characterizing} showed that $\lambda_k=O\left(k^{-d+1/2}e^{-\sqrt{k}}\right)$. Applying \eqref{eq:linear_dynamics}, vanilla GD requires $O\left(k^{d+1/2}e^{\sqrt{k}}\right)$ iterations to converge, exponentially slower in $k$ compared to PGD.

Another natural choice for $g$, is given some $\tilde{\gamma}>0$, to let $g(\lambda) := \lambda + \tilde{\gamma}$ and $k=n$, implying that $KS=K+\tilde{\gamma}I$. Thus, we obtain a direct correspondence between PGD with such a choice of $g$, and \emph{regularized} kernel regression. Such a choice of $g$ may help prevent overfitting, even without early stopping. This is demonstrated in Figure \ref{fig:g_comparison}. 

The NTK matrix may be ill-conditioned with the smallest eigenvalues very close to 0. This issue is magnified by the fact that some approximation may be needed when computing the NTK for arbitrary architectures \cite{novak2022fast, mohamadi2023fast}. However, a small $k$ can help avoid numerical instabilities in the preconditioner by only modifying sufficiently large eigenvalues. Furthermore, since only the top $k$ eigenvalues and eigenvectors need to be calculated, choosing a small $k$ allows for a more efficient calculation of $S$. By contrast, choosing $k=n$ implies that $S=K^{-1}$. We leave the choice of $k$, the number of modified eigenvalues, to the practitioners, but in general, one should think of $k$ as providing a trade-off between the worst-case rate of convergence, and computational stability and efficiency. 

Although our method involves computing the preconditioner, it still results in more efficient optimization in the over-parameterized regime. To see this, we note that for $L>2$, if the width $m$ of every layer is at least $\sim n^q$ for some $q>0$, then the number of parameters of a fully-connected network is $\Omega(n^{2q})$. This implies that the worst-case number of operations for GD to converge is $\omega\left(\frac{n^{2q}}{\lambda_{min}(K)}\right)$ where ${\lambda_{min}(K)}$ is the minimal eigenvalue of $K$. For a fully connected ReLU network, upper bounds on $\lambda_{\min}(K)$ decay polynomially in $n$ \citep{barzilai2023generalization}, implying that the complexity is $\omega(n^{2q+1})$. Under the most general assumptions, \citet{song2019quadratic} achieved $q=4$. In this case, we have a clear computational advantage since inverting $K$ or calculating its eigenvalues costs $O(n^3)$. Nevertheless, under stricter assumptions, smaller widths than $q=4$ may suffice, but our method is still more efficient even for linear width ($q=1$).
Our method is also significantly more efficient than a standard preconditioner matrix whose size is quadratic in the number of parameters.
Furthermore, our preconditioner needs to be computed only once, so its computational complexity is unrelated to the number of iterations needed to converge, which as already discussed, can be exponential. Lastly, in practice, one may choose $k$, the number of eigenvalues to modify, to be small, leading to further speedups.


\begin{figure}[tb]
    \centering
    \includegraphics[width=0.50\textwidth]{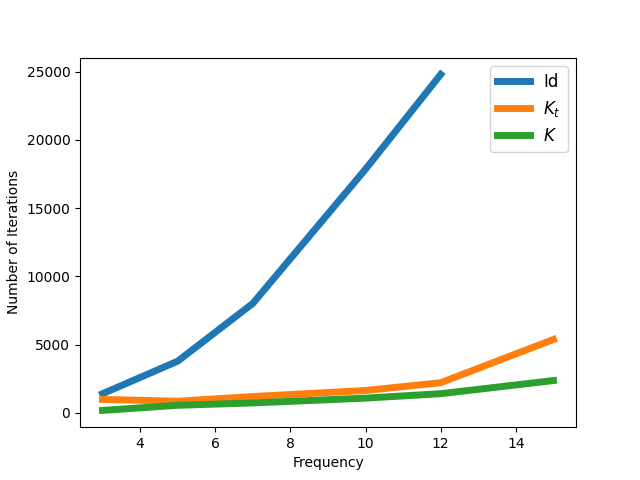}\\
    \includegraphics[width=0.23\textwidth]{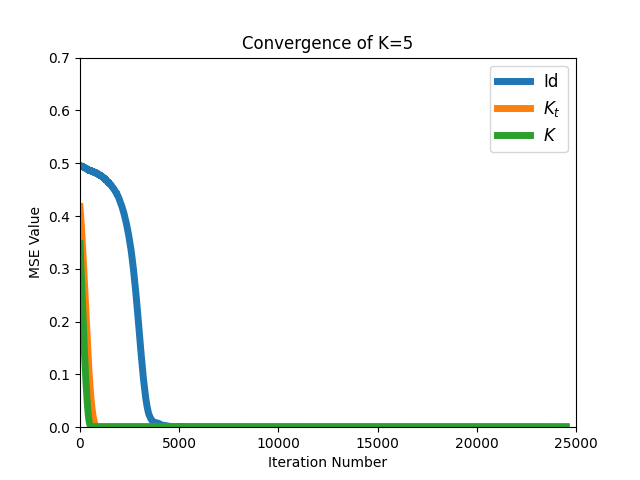}
    \includegraphics[width=0.23\textwidth]{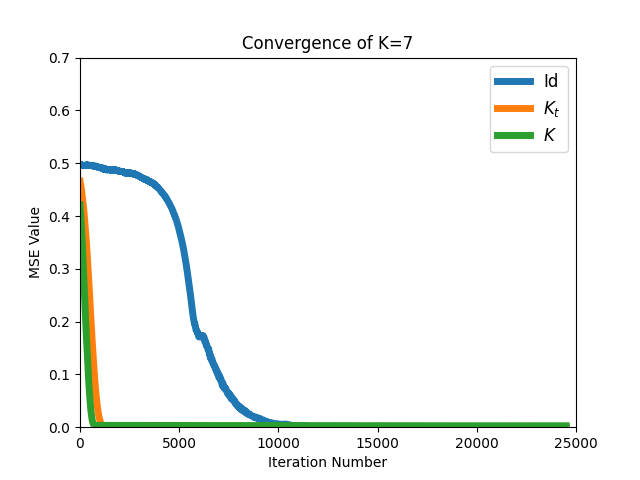}
    \includegraphics[width=0.23\textwidth]{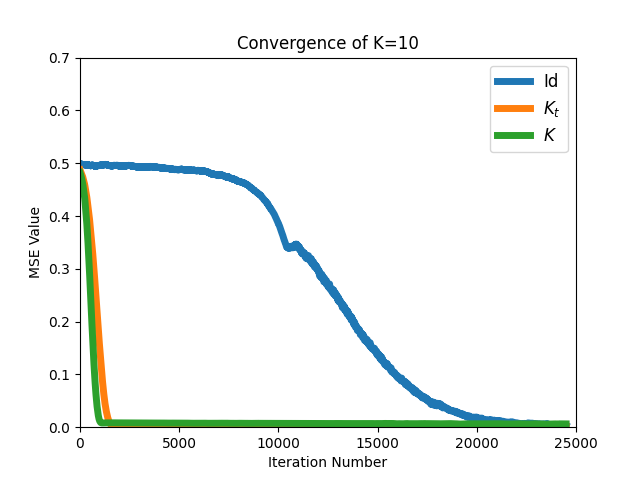}
    \includegraphics[width=0.23\textwidth]{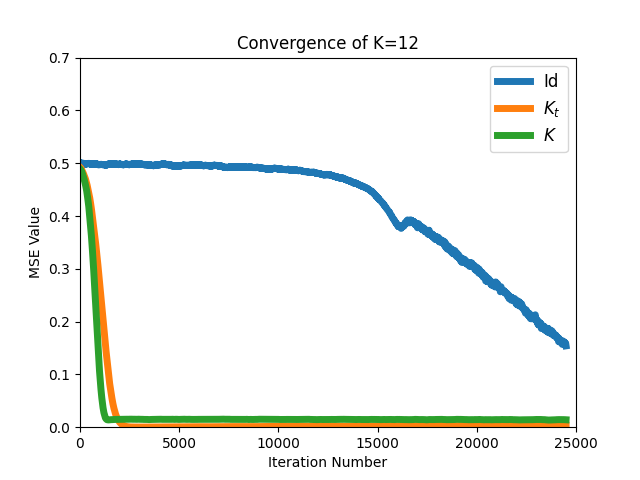}
    \caption{\small Numerical validation. Top: The number of iterations required to learn different Fourier components as a function of frequency. Standard SGD is shown in blue, and (stochastic) PGD with a preconditioner derived from the NTK matrix $K$ and the empirical NTK $K_t$ respectively are shown in green and orange. Bottom: Training curves with four different frequencies ($k=5,7,10,12$). The graphs show the MSE loss as a function of iteration number with stochastic GD and PGD.}
    \label{fig:numeric}
\end{figure}

\begin{figure}[tb]
    \centering
    \includegraphics[width=0.50\textwidth]{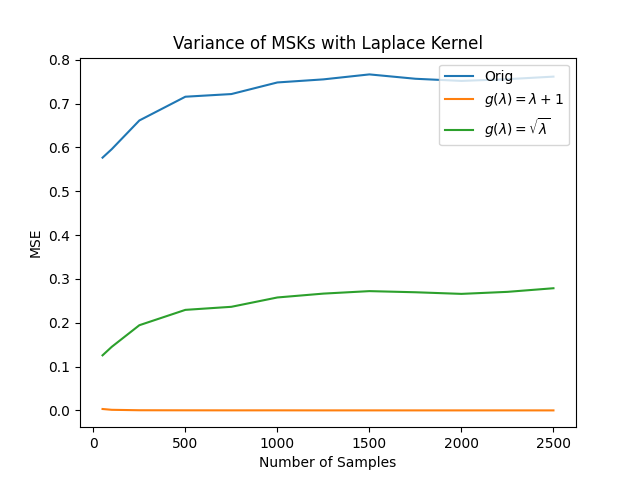}
    \caption{The effect of various MSKs on overfitting to noise. Choosing an MSK with a slower eigenvalue decay helps prevent overfitting. Starting from a Laplace kernel, for each choice of $g$, we perform unregularized kernel regression (with $\gamma=0$) using an MSK as defined in \ref{Def:MSK}. The target function is identically $0$, with i.i.d Gaussian noise added to the training set ($y_i\sim \mathcal{N}(0,1)$). The $x$ axis denotes the number of samples, and $y$ axis the MSE on the test set (for which the target is $0$). }
    \label{fig:g_comparison}
\end{figure}

\textbf{Experiments}. 
We validated our results numerically by tracking the speed of convergence for a neural network both with and without a preconditioner. We generated inputs from a uniform distribution on the circle, $\x_i\sim U(\mathbb{S}^1)$. With each input we associated a target value $y_i$ by taking a Fourier component of frequency $k$ (we repeated this experiment with different values of $k$), i.e., if $\x_i=(\cos(\theta),\sin(\theta))^T$ then $y_i=\sin( k \theta)$. Note that under the uniform distribution on the circle, the Fourier components are the eigenfunctions of the integral operator of 
NTK for fully connected networks \citep{basri2019convergence}. We then trained a fully connected network to regress the target function. For efficiency, we trained the network with stochastic gradient descent (SGD). The preconditioner $S$ is chosen to be either $I_n$ (no preconditioning), $K^{-1}$ (inverse NTK matrix), or $K_t^{-1}$ (inverse empirical NTK matrix). Figure \ref{fig:numeric} shows that without preconditioning, the number of standard GD iterations is roughly quadratic in the frequency of the target function and quickly reaches the iteration limit. In contrast, using preconditioners based on either $K$ or $K_t$ boosts the convergence, yielding a near-constant number of iterations, irrespective of the frequency of the target function. 
It can be seen that without preconditioning, the number of standard GD iterations is roughly quadratic in the frequency of the target function and quickly reaches the iteration limit. In contrast, using preconditioners based on either $K$ or $K_t$ boosts the convergence, yielding a near constant number of iterations, irrespective of the frequency of the target function. The lower panel in Figure \ref{fig:numeric} also shows a convergence plot for a few different functions.

Furthermore, we analyze the \emph{variance} of kernel regression under different choices of MSKs in Figure \ref{fig:g_comparison}. The error of kernel regression is often decomposed into a bias and variance term, where the variance is the error when training on labels given by a constant $0$ target function with noise added to the training set \citep{tsigler2023benign}. In our experiment, starting from a Laplace kernel, for each choice of $g$, we perform unregularized kernel regression (with $\gamma=0$) using an MSK as defined in \ref{Def:MSK}. We take inputs drawn uniformly from $\mathbb{S}^2$ and  $y_i\sim \mathcal{N}(0,1)$. The $y$ axis depicts the MSE over a test set of $1000$ samples (where the test set target labels are $0$), averaged over $25$ trials. Various works suggest that a slower eigenvalue decay helps induce an implicit regularization in the kernel, helping it to avoid overfitting \citep{mallinar2022benign, tsigler2023benign}. We can observe this phenomenon in Figure \ref{fig:g_comparison}, where the MSKs with a slower eigenvalue decay achieve a smaller variance.


\section{Related Works}
A long string of works established the connection between kernel regression and neural networks. Early works considered a Bayesian inference setting for neural networks, showing that randomly initialized networks are equivalent to Gaussian processes \citep{williams1997computing,neal2012bayesian}. In this direction, \citet{cho2009kernel} introduced the Arc-cosine kernel, while \citet{daniely2016toward} studied general compositional kernels and its relation to neural networks. Recently, a family of kernels called the Neural Tangent Kernels (NTK) was introduced \citep{jacot2018neural,allen-zhu2019,arora2019exact}. These papers proved that training an over-paramterized neural network is equivalent to using gradient decent to solve kernel regression with NTK. Follow-up works defined analogous kernels for residual networks \citep{huang2020deep}, convolutional networks \citep{arora2019exact,li2019enhanced}, and other standard architectures \citep{yang2020tensor}.

Subsequent work used the NTK theory to characterize the spectral bias of neural networks. \citet{bach2017breaking, basri2019convergence, cao2019, bietti2020deep, xu2019frequency} have studied the eigenfunctions and eigenvalues of NTK under the uniform distribution and further showed that fully connected neural networks learn low-frequency functions faster than higher-frequency ones. \citet{basri2020Nonuniform} further derived the eigenfunctions of NTK with non-uniform data distribution. \citet{yang2019fine, misiakiewicz2021learning} analyzed the eigenvalues of NTK over the Boolean cube. More recent studies investigated this bias with other network architectures, including fully-connected ResNets \citep{belfer2021spectral, tirer2022kernel}, convolutional networks \citep{geifman2022spectral,cagnetta2022wide, xiao2022eigenspace}, and convolutional ResNets \citep{barzilai2022kernel}. \citep{murray2022characterizing} characterized the spectrum of NTK using its power series expansion.


Preconditioning is a widely used approach to accelerating convex optimization problems. A preconditioner is typically constructed using the inverse of the Hessian or its approximation \citep{nocedal1999numerical} to improve the condition number of the problem. Recent works tried to improve the condition number of kernel ridge regression. \citet{ma2017diving,ma2018power} suggested using a left preconditioner in conjunction with Richardson iterations. Another line of work analyzed the speed of convergence of an approximate second order method for two-layer neural networks  \citep{zhang2019fast,cai2019gram}. They showed that natural gradient descent improves convergence in a factor proportional to the condition number of the NTK Gram matrix.

Several studies aim to construct preconditioners for neural networks. \citet{carlson2015preconditioned} built a diagonal preconditioner based on Adagrad and RMSProp updates. Other heuristic precondtioiners use layer-wise adaptive learning rates \citep{you2017large,you2019large}. In this line of work, it is worth mentioning \citep{lee2020generalized}, who used the convergence results of \cite{arora2019exact} to incorporate leverage score sampling into training neural networks. \citet{amari2020does} studied the generalization of overparameterized ridgeless regression under a general class of preconditioners via the bias-variance trade-off.

Past studies explored the concept of data-dependent kernels. \citet{simon2022kernel} suggested using the posterior of the target function to produce a new type of data-dependent kernel. \citet{sindhwani2005beyond} studied a wide class of kernel modifications conditioned on the training data and explored their RKHS. \citet{ionescu2017large} extended this method to build a new class of kernels defined by multiplying the random feature approximation with a weighted covariance matrix. \citet{kennedy2013classification} constructed a new family of kernels for a variety of data distributions using the Spherical Harmonic functions on the 2-Sphere.

\section{Conclusion}

In this work we addressed the problem of manipulating the spectral bias of neural networks. We formulated a unique preconditioning scheme which enables manipulating the speed of convergence of gradient descent in each eigen-direction of the NTK. This preconditioning scheme can be efficiently implemented by a slight modification to the loss function.  Furthermore, for sufficient training time and width, we showed that the predictor obtained by standard GD is approximately the same as that of PGD. We also showed how to construct novel kernels with nearly arbitrary eigenvalues through the use of kernel spectrum manipulations. Our theory is supported by experiments on synthetic data. In future work we plan to explore other forms of spectral manipulations and apply our method in real-world scenarios.


\subsubsection*{Acknowledgments}
Research at the Weizmann Institute was partially supported by the Israel Science Foundation, grant No. 1639/19, by the Israeli Council for Higher Education (CHE) via the Weizmann Data Science Research Center, by the MBZUAI-WIS Joint Program for Artificial Intelligence Research and by research grants from the Estates of Tully and Michele Plesser and the Anita James Rosen and Harry Schutzman Foundations.

\bibliography{main}

\begin{thebibliography}{65}
\providecommand{\natexlab}[1]{#1}
\providecommand{\url}[1]{\texttt{#1}}
\expandafter\ifx\csname urlstyle\endcsname\relax
  \providecommand{\doi}[1]{doi: #1}\else
  \providecommand{\doi}{doi: \begingroup \urlstyle{rm}\Url}\fi

\bibitem[Allen-Zhu et~al.(2019)Allen-Zhu, Li, and Song]{allen-zhu2019}
Zeyuan Allen-Zhu, Yuanzhi Li, and Zhao Song.
\newblock A convergence theory for deep learning via over-parameterization.
\newblock In Kamalika Chaudhuri and Ruslan Salakhutdinov (eds.), \emph{Proceedings of the 36th International Conference on Machine Learning}, volume~97, pp.\  242--252, 2019.

\bibitem[Amari et~al.(2020)Amari, Ba, Grosse, Li, Nitanda, Suzuki, Wu, and Xu]{amari2020does}
Shun-ichi Amari, Jimmy Ba, Roger Grosse, Xuechen Li, Atsushi Nitanda, Taiji Suzuki, Denny Wu, and Ji~Xu.
\newblock When does preconditioning help or hurt generalization?
\newblock \emph{arXiv preprint arXiv:2006.10732}, 2020.

\bibitem[Arora et~al.(2019{\natexlab{a}})Arora, Du, Hu, Li, and Wang]{arora2019fine}
Sanjeev Arora, Simon Du, Wei Hu, Zhiyuan Li, and Ruosong Wang.
\newblock Fine-grained analysis of optimization and generalization for overparameterized two-layer neural networks.
\newblock In \emph{International Conference on Machine Learning}, pp.\  322--332. PMLR, 2019{\natexlab{a}}.

\bibitem[Arora et~al.(2019{\natexlab{b}})Arora, Du, Hu, Li, Salakhutdinov, and Wang]{arora2019exact}
Sanjeev Arora, Simon~S Du, Wei Hu, Zhiyuan Li, Ruslan Salakhutdinov, and Ruosong Wang.
\newblock On exact computation with an infinitely wide neural net.
\newblock \emph{arXiv preprint arXiv:1904.11955}, 2019{\natexlab{b}}.

\bibitem[Bach(2017)]{bach2017breaking}
Francis Bach.
\newblock Breaking the curse of dimensionality with convex neural networks.
\newblock \emph{The Journal of Machine Learning Research}, 18\penalty0 (1):\penalty0 629--681, 2017.

\bibitem[Barzilai \& Shamir(2023)Barzilai and Shamir]{barzilai2023generalization}
Daniel Barzilai and Ohad Shamir.
\newblock Generalization in kernel regression under realistic assumptions.
\newblock \emph{arXiv preprint arXiv:2312.15995}, 2023.

\bibitem[Barzilai et~al.(2022)Barzilai, Geifman, Galun, and Basri]{barzilai2022kernel}
Daniel Barzilai, Amnon Geifman, Meirav Galun, and Ronen Basri.
\newblock A kernel perspective of skip connections in convolutional networks.
\newblock \emph{arXiv preprint arXiv:2211.14810}, 2022.

\bibitem[Basri et~al.(2019)Basri, Jacobs, Kasten, and Kritchman]{basri2019convergence}
Ronen Basri, David Jacobs, Yoni Kasten, and Shira Kritchman.
\newblock The convergence rate of neural networks for learned functions of different frequencies.
\newblock In \emph{Advances in Neural Information Processing Systems}, pp.\  4763--4772, 2019.

\bibitem[Basri et~al.(2020{\natexlab{a}})Basri, Galun, Geifman, Jacobs, Kasten, and Kritchman]{basri2020Nonuniform}
Ronen Basri, Meirav Galun, Amnon Geifman, David Jacobs, Yoni Kasten, and Shira Kritchman.
\newblock Frequency bias in neural networks for input of non-uniform density.
\newblock In \emph{International Conference on Machine Learning}, 2020{\natexlab{a}}.

\bibitem[Basri et~al.(2020{\natexlab{b}})Basri, Galun, Geifman, Jacobs, Kasten, and Kritchman]{basri2020frequency}
Ronen Basri, Meirav Galun, Amnon Geifman, David Jacobs, Yoni Kasten, and Shira Kritchman.
\newblock Frequency bias in neural networks for input of non-uniform density.
\newblock In \emph{International Conference on Machine Learning}, pp.\  685--694. PMLR, 2020{\natexlab{b}}.

\bibitem[Belfer et~al.(2021)Belfer, Geifman, Galun, and Basri]{belfer2021spectral}
Yuval Belfer, Amnon Geifman, Meirav Galun, and Ronen Basri.
\newblock Spectral analysis of the neural tangent kernel for deep residual networks.
\newblock \emph{arXiv preprint arXiv:2104.03093}, 2021.

\bibitem[Bietti \& Bach(2020)Bietti and Bach]{bietti2020deep}
Alberto Bietti and Francis Bach.
\newblock Deep equals shallow for relu networks in kernel regimes.
\newblock \emph{arXiv preprint arXiv:2009.14397}, 2020.

\bibitem[Bowman \& Montufar(2022)Bowman and Montufar]{bowman2022spectral}
Benjamin Bowman and Guido Montufar.
\newblock Spectral bias outside the training set for deep networks in the kernel regime.
\newblock \emph{arXiv preprint arXiv:2206.02927}, 2022.

\bibitem[Braun(2005)]{braun2005spectral}
Mikio~Ludwig Braun.
\newblock \emph{Spectral properties of the kernel matrix and their relation to kernel methods in machine learning}.
\newblock PhD thesis, Universit{\"a}ts-und Landesbibliothek Bonn, 2005.

\bibitem[Cagnetta et~al.(2022)Cagnetta, Favero, and Wyart]{cagnetta2022wide}
Francesco Cagnetta, Alessandro Favero, and Matthieu Wyart.
\newblock How wide convolutional neural networks learn hierarchical tasks.
\newblock \emph{arXiv preprint arXiv:2208.01003}, 2022.

\bibitem[Cai et~al.(2019)Cai, Gao, Hou, Chen, Wang, He, Zhang, and Wang]{cai2019gram}
Tianle Cai, Ruiqi Gao, Jikai Hou, Siyu Chen, Dong Wang, Di~He, Zhihua Zhang, and Liwei Wang.
\newblock Gram-gauss-newton method: Learning overparameterized neural networks for regression problems.
\newblock \emph{arXiv preprint arXiv:1905.11675}, 2019.

\bibitem[Cao et~al.(2019)Cao, Fang, Wu, Zhou, and Gu]{cao2019}
Yuan Cao, Zhiying Fang, Yue Wu, Ding-Xuan Zhou, and Quanquan Gu.
\newblock Towards understanding the spectral bias of deep learning.
\newblock \emph{arXiv preprint arXiv:2009.01198}, 2019.

\bibitem[Carlson et~al.(2015)Carlson, Collins, Hsieh, Carin, and Cevher]{carlson2015preconditioned}
David~E Carlson, Edo Collins, Ya-Ping Hsieh, Lawrence Carin, and Volkan Cevher.
\newblock Preconditioned spectral descent for deep learning.
\newblock \emph{Advances in neural information processing systems}, 28, 2015.

\bibitem[Chizat et~al.(2019)Chizat, Oyallon, and Bach]{chizat2019lazy}
Lenaic Chizat, Edouard Oyallon, and Francis Bach.
\newblock On lazy training in differentiable programming.
\newblock \emph{Advances in neural information processing systems}, 32, 2019.

\bibitem[Cho \& Saul(2009)Cho and Saul]{cho2009kernel}
Youngmin Cho and Lawrence Saul.
\newblock Kernel methods for deep learning.
\newblock In Y.~Bengio, D.~Schuurmans, J.~Lafferty, C.~Williams, and A.~Culotta (eds.), \emph{Advances in Neural Information Processing Systems}, volume~22. Curran Associates, Inc., 2009.

\bibitem[Daniely et~al.(2016)Daniely, Frostig, and Singer]{daniely2016toward}
Amit Daniely, Roy Frostig, and Yoram Singer.
\newblock Toward deeper understanding of neural networks: The power of initialization and a dual view on expressivity.
\newblock In \emph{Advances In Neural Information Processing Systems}, pp.\  2253--2261, 2016.

\bibitem[Fridovich-Keil et~al.(2021)Fridovich-Keil, Gontijo-Lopes, and Roelofs]{fridovich2021spectral}
Sara Fridovich-Keil, Raphael Gontijo-Lopes, and Rebecca Roelofs.
\newblock Spectral bias in practice: The role of function frequency in generalization.
\newblock \emph{arXiv preprint arXiv:2110.02424}, 2021.

\bibitem[Geifman et~al.(2022)Geifman, Galun, Jacobs, and Basri]{geifman2022spectral}
Amnon Geifman, Meirav Galun, David Jacobs, and Ronen Basri.
\newblock On the spectral bias of convolutional neural tangent and gaussian process kernels.
\newblock \emph{arXiv preprint arXiv:2203.09255}, 2022.

\bibitem[Huang et~al.(2020)Huang, Wang, Tao, and Zhao]{huang2020deep}
Kaixuan Huang, Yuqing Wang, Molei Tao, and Tuo Zhao.
\newblock Why do deep residual networks generalize better than deep feedforward networks?---a neural tangent kernel perspective.
\newblock \emph{Advances in neural information processing systems}, 33:\penalty0 2698--2709, 2020.

\bibitem[Ionescu et~al.(2017)Ionescu, Popa, and Sminchisescu]{ionescu2017large}
Catalin Ionescu, Alin Popa, and Cristian Sminchisescu.
\newblock Large-scale data-dependent kernel approximation.
\newblock In \emph{Artificial Intelligence and Statistics}, pp.\  19--27. PMLR, 2017.

\bibitem[Jacot et~al.(2018)Jacot, Gabriel, and Hongler]{jacot2018neural}
Arthur Jacot, Franck Gabriel, and Cl{\'e}ment Hongler.
\newblock Neural tangent kernel: Convergence and generalization in neural networks.
\newblock \emph{Advances in neural information processing systems}, 31, 2018.

\bibitem[Kennedy et~al.(2013)Kennedy, Sadeghi, Khalid, and McEwen]{kennedy2013classification}
Rodney~A Kennedy, Parastoo Sadeghi, Zubair Khalid, and Jason~D McEwen.
\newblock Classification and construction of closed-form kernels for signal representation on the 2-sphere.
\newblock In \emph{Wavelets and sparsity XV}, volume 8858, pp.\  169--183. SPIE, 2013.

\bibitem[Lee et~al.(2019)Lee, Xiao, Schoenholz, Bahri, Novak, Sohl-Dickstein, and Pennington]{lee2019wide}
Jaehoon Lee, Lechao Xiao, Samuel Schoenholz, Yasaman Bahri, Roman Novak, Jascha Sohl-Dickstein, and Jeffrey Pennington.
\newblock Wide neural networks of any depth evolve as linear models under gradient descent.
\newblock \emph{Advances in neural information processing systems}, 32, 2019.

\bibitem[Lee et~al.(2020{\natexlab{a}})Lee, Schoenholz, Pennington, Adlam, Xiao, Novak, and Sohl-Dickstein]{lee2020finite}
Jaehoon Lee, Samuel Schoenholz, Jeffrey Pennington, Ben Adlam, Lechao Xiao, Roman Novak, and Jascha Sohl-Dickstein.
\newblock Finite versus infinite neural networks: an empirical study.
\newblock \emph{Advances in Neural Information Processing Systems}, 33:\penalty0 15156--15172, 2020{\natexlab{a}}.

\bibitem[Lee et~al.(2020{\natexlab{b}})Lee, Shen, Song, Wang, et~al.]{lee2020generalized}
Jason~D Lee, Ruoqi Shen, Zhao Song, Mengdi Wang, et~al.
\newblock Generalized leverage score sampling for neural networks.
\newblock \emph{Advances in Neural Information Processing Systems}, 33:\penalty0 10775--10787, 2020{\natexlab{b}}.

\bibitem[Li et~al.(2019)Li, Wang, Yu, Du, Hu, Salakhutdinov, and Arora]{li2019enhanced}
Zhiyuan Li, Ruosong Wang, Dingli Yu, Simon~S Du, Wei Hu, Ruslan Salakhutdinov, and Sanjeev Arora.
\newblock Enhanced convolutional neural tangent kernels.
\newblock \emph{arXiv preprint arXiv:1911.00809}, 2019.

\bibitem[Ma \& Belkin(2017)Ma and Belkin]{ma2017diving}
Siyuan Ma and Mikhail Belkin.
\newblock Diving into the shallows: a computational perspective on large-scale shallow learning.
\newblock \emph{Advances in neural information processing systems}, 30, 2017.

\bibitem[Ma et~al.(2018)Ma, Bassily, and Belkin]{ma2018power}
Siyuan Ma, Raef Bassily, and Mikhail Belkin.
\newblock The power of interpolation: Understanding the effectiveness of sgd in modern over-parametrized learning.
\newblock In \emph{International Conference on Machine Learning}, pp.\  3325--3334. PMLR, 2018.

\bibitem[Mallinar et~al.(2022)Mallinar, Simon, Abedsoltan, Pandit, Belkin, and Nakkiran]{mallinar2022benign}
Neil Mallinar, James~B Simon, Amirhesam Abedsoltan, Parthe Pandit, Mikhail Belkin, and Preetum Nakkiran.
\newblock Benign, tempered, or catastrophic: A taxonomy of overfitting.
\newblock \emph{arXiv preprint arXiv:2207.06569}, 2022.

\bibitem[Misiakiewicz \& Mei(2021)Misiakiewicz and Mei]{misiakiewicz2021learning}
Theodor Misiakiewicz and Song Mei.
\newblock Learning with convolution and pooling operations in kernel methods.
\newblock \emph{arXiv preprint arXiv:2111.08308}, 2021.

\bibitem[Mohamadi et~al.(2023)Mohamadi, Bae, and Sutherland]{mohamadi2023fast}
Mohamad~Amin Mohamadi, Wonho Bae, and Danica~J Sutherland.
\newblock A fast, well-founded approximation to the empirical neural tangent kernel.
\newblock In \emph{International Conference on Machine Learning}, pp.\  25061--25081. PMLR, 2023.

\bibitem[Montanari \& Zhong(2022)Montanari and Zhong]{montanari2022interpolation}
Andrea Montanari and Yiqiao Zhong.
\newblock The interpolation phase transition in neural networks: Memorization and generalization under lazy training.
\newblock \emph{The Annals of Statistics}, 50\penalty0 (5):\penalty0 2816--2847, 2022.

\bibitem[Murray et~al.(2022)Murray, Jin, Bowman, and Montufar]{murray2022characterizing}
Michael Murray, Hui Jin, Benjamin Bowman, and Guido Montufar.
\newblock Characterizing the spectrum of the ntk via a power series expansion.
\newblock \emph{arXiv preprint arXiv:2211.07844}, 2022.

\bibitem[Neal(2012)]{neal2012bayesian}
Radford~M Neal.
\newblock \emph{Bayesian learning for neural networks}, volume 118.
\newblock Springer Science \& Business Media, 2012.

\bibitem[Nguyen et~al.(2021)Nguyen, Mondelli, and Montufar]{nguyen2021tight}
Quynh Nguyen, Marco Mondelli, and Guido~F Montufar.
\newblock Tight bounds on the smallest eigenvalue of the neural tangent kernel for deep relu networks.
\newblock In \emph{International Conference on Machine Learning}, pp.\  8119--8129. PMLR, 2021.

\bibitem[Nocedal \& Wright(1999)Nocedal and Wright]{nocedal1999numerical}
Jorge Nocedal and Stephen~J Wright.
\newblock \emph{Numerical optimization}.
\newblock Springer, 1999.

\bibitem[Novak et~al.(2022)Novak, Sohl-Dickstein, and Schoenholz]{novak2022fast}
Roman Novak, Jascha Sohl-Dickstein, and Samuel~S Schoenholz.
\newblock Fast finite width neural tangent kernel.
\newblock In \emph{International Conference on Machine Learning}, pp.\  17018--17044. PMLR, 2022.

\bibitem[Oymak \& Soltanolkotabi(2020)Oymak and Soltanolkotabi]{oymak2020toward}
Samet Oymak and Mahdi Soltanolkotabi.
\newblock Toward moderate overparameterization: Global convergence guarantees for training shallow neural networks.
\newblock \emph{IEEE Journal on Selected Areas in Information Theory}, 1\penalty0 (1):\penalty0 84--105, 2020.

\bibitem[Rosasco et~al.(2010)Rosasco, Belkin, and De~Vito]{rosasco2010learning}
Lorenzo Rosasco, Mikhail Belkin, and Ernesto De~Vito.
\newblock On learning with integral operators.
\newblock \emph{Journal of Machine Learning Research}, 11\penalty0 (2), 2010.

\bibitem[Saitoh \& Sawano(2016)Saitoh and Sawano]{saitoh2016theory}
Saburou Saitoh and Yoshihiro Sawano.
\newblock \emph{Theory of reproducing kernels and applications}.
\newblock Springer, 2016.

\bibitem[Sch{\"o}lkopf et~al.(2001)Sch{\"o}lkopf, Herbrich, and Smola]{scholkopf2001generalized}
Bernhard Sch{\"o}lkopf, Ralf Herbrich, and Alex~J Smola.
\newblock A generalized representer theorem.
\newblock In \emph{International conference on computational learning theory}, pp.\  416--426. Springer, 2001.

\bibitem[Simon(2022)]{simon2022kernel}
James~B Simon.
\newblock On kernel regression with data-dependent kernels.
\newblock \emph{arXiv preprint arXiv:2209.01691}, 2022.

\bibitem[Sindhwani et~al.(2005)Sindhwani, Niyogi, and Belkin]{sindhwani2005beyond}
Vikas Sindhwani, Partha Niyogi, and Mikhail Belkin.
\newblock Beyond the point cloud: from transductive to semi-supervised learning.
\newblock In \emph{Proceedings of the 22nd international conference on Machine learning}, pp.\  824--831, 2005.

\bibitem[Song \& Yang(2019)Song and Yang]{song2019quadratic}
Zhao Song and Xin Yang.
\newblock Quadratic suffices for over-parametrization via matrix chernoff bound.
\newblock \emph{arXiv preprint arXiv:1906.03593}, 2019.

\bibitem[Steinwart \& Scovel(2012)Steinwart and Scovel]{steinwart2012mercer}
Ingo Steinwart and Clint Scovel.
\newblock Mercer’s theorem on general domains: On the interaction between measures, kernels, and rkhss.
\newblock \emph{Constructive Approximation}, 35:\penalty0 363--417, 2012.

\bibitem[Tancik et~al.(2020)Tancik, Srinivasan, Mildenhall, Fridovich-Keil, Raghavan, Singhal, Ramamoorthi, Barron, and Ng]{tancik2020fourier}
Matthew Tancik, Pratul Srinivasan, Ben Mildenhall, Sara Fridovich-Keil, Nithin Raghavan, Utkarsh Singhal, Ravi Ramamoorthi, Jonathan Barron, and Ren Ng.
\newblock Fourier features let networks learn high frequency functions in low dimensional domains.
\newblock \emph{Advances in Neural Information Processing Systems}, 33:\penalty0 7537--7547, 2020.

\bibitem[Tirer et~al.(2022)Tirer, Bruna, and Giryes]{tirer2022kernel}
Tom Tirer, Joan Bruna, and Raja Giryes.
\newblock Kernel-based smoothness analysis of residual networks.
\newblock In \emph{Mathematical and Scientific Machine Learning}, pp.\  921--954. PMLR, 2022.

\bibitem[Tsigler \& Bartlett(2023)Tsigler and Bartlett]{tsigler2023benign}
Alexander Tsigler and Peter~L Bartlett.
\newblock Benign overfitting in ridge regression.
\newblock \emph{J. Mach. Learn. Res.}, 24:\penalty0 123--1, 2023.

\bibitem[Wang et~al.(2022)Wang, Yu, and Perdikaris]{wang2022and}
Sifan Wang, Xinling Yu, and Paris Perdikaris.
\newblock When and why pinns fail to train: A neural tangent kernel perspective.
\newblock \emph{Journal of Computational Physics}, 449:\penalty0 110768, 2022.

\bibitem[Wang \& Zhu(2021)Wang and Zhu]{wang2021deformed}
Zhichao Wang and Yizhe Zhu.
\newblock Deformed semicircle law and concentration of nonlinear random matrices for ultra-wide neural networks.
\newblock \emph{arXiv preprint arXiv:2109.09304}, 2021.

\bibitem[Williams(1997)]{williams1997computing}
Christopher~KI Williams.
\newblock Computing with infinite networks.
\newblock In \emph{Advances in neural information processing systems}, pp.\  295--301, 1997.

\bibitem[Xiao(2022)]{xiao2022eigenspace}
Lechao Xiao.
\newblock Eigenspace restructuring: a principle of space and frequency in neural networks.
\newblock In \emph{Conference on Learning Theory}, pp.\  4888--4944. PMLR, 2022.

\bibitem[Xu et~al.(2019)Xu, Zhang, Luo, Xiao, and Ma]{xu2019frequency}
Zhi-Qin~John Xu, Yaoyu Zhang, Tao Luo, Yanyang Xiao, and Zheng Ma.
\newblock Frequency principle: Fourier analysis sheds light on deep neural networks.
\newblock \emph{arXiv preprint arXiv:1901.06523}, 2019.

\bibitem[Xu et~al.(2022)Xu, Zhang, and Luo]{xu2022overview}
Zhi-Qin~John Xu, Yaoyu Zhang, and Tao Luo.
\newblock Overview frequency principle/spectral bias in deep learning.
\newblock \emph{arXiv preprint arXiv:2201.07395}, 2022.

\bibitem[Yang(2019)]{yang2019scaling}
Greg Yang.
\newblock Scaling limits of wide neural networks with weight sharing: Gaussian process behavior, gradient independence, and neural tangent kernel derivation.
\newblock \emph{arXiv preprint arXiv:1902.04760}, 2019.

\bibitem[Yang(2020)]{yang2020tensor}
Greg Yang.
\newblock Tensor programs ii: Neural tangent kernel for any architecture.
\newblock \emph{arXiv preprint arXiv:2006.14548}, 2020.

\bibitem[Yang \& Salman(2019)Yang and Salman]{yang2019fine}
Greg Yang and Hadi Salman.
\newblock A fine-grained spectral perspective on neural networks.
\newblock \emph{arXiv preprint arXiv:1907.10599}, 2019.

\bibitem[You et~al.(2017)You, Gitman, and Ginsburg]{you2017large}
Yang You, Igor Gitman, and Boris Ginsburg.
\newblock Large batch training of convolutional networks.
\newblock \emph{arXiv preprint arXiv:1708.03888}, 2017.

\bibitem[You et~al.(2019)You, Li, Reddi, Hseu, Kumar, Bhojanapalli, Song, Demmel, Keutzer, and Hsieh]{you2019large}
Yang You, Jing Li, Sashank Reddi, Jonathan Hseu, Sanjiv Kumar, Srinadh Bhojanapalli, Xiaodan Song, James Demmel, Kurt Keutzer, and Cho-Jui Hsieh.
\newblock Large batch optimization for deep learning: Training bert in 76 minutes.
\newblock \emph{arXiv preprint arXiv:1904.00962}, 2019.

\bibitem[Zhang et~al.(2019)Zhang, Martens, and Grosse]{zhang2019fast}
Guodong Zhang, James Martens, and Roger~B Grosse.
\newblock Fast convergence of natural gradient descent for over-parameterized neural networks.
\newblock \emph{Advances in Neural Information Processing Systems}, 32, 2019.

\end{thebibliography}
\bibliographystyle{tmlr}

\newpage
\appendix
\section{Convergence of PGD}\label{appendix:convergence}
In this section we prove Theorem \ref{Thm:precond_dynamic}. We begin with assumptions and notations for the proof. The assumptions follows those of \citet{lee2019wide}:
\begin{enumerate}
    \item \label{assumption:1} The width of the hidden layers are identical and equal to $m$
    \item The analytic NTK $K$ is full rank with $\lambda_{min}(K)>0$. 
    \item \label{assumption:3} The training set $\{\x_i,y_i\}_{i=1}^n$ is contained in some compact set.
    \item \label{assumption:4} The activation function $\rho$ satisfies
    \begin{align*}
        |\rho(0)|,~\norm{\rho'}_\infty, ~sup_{x\neq x'}|\rho'(x)-\rho'(x')|/|x-x'|<\infty  
    \end{align*}
\end{enumerate}
The network parametrization is as follows
\begin{align*}
    \g^{(0)}(\x) & = \x  \\
    \f^{(l)}(\x) & = W^{(l)}\g^{(l-1)} (\x) + \beta \mathbf{b}^{(l)}  \in \Real^{d_l}, ~~~~~l=1,\ldots L \\
    \g^{(l)}(\x) & = 
    \rho\left(\f^{(l)}(\x)\right)\in \Real^{d_l}, ~~~~~l=1,\ldots L \\
    f(\x,\w) & = f^{(L+1)}(\x) = W^{(L+1)} \cdot \g^{(L)}(\x) + b^{(L+1)}
\end{align*}
The network parameters $\w$ include $W^{(L+1)},W^{(L)},...,W^{(1)}$, where $W^{(l)} \in \Real^{d_l \times d_{l-1}}$, $\mathbf{b}^{(l)}\in \Real^{d_l\times 1}$, $W^{(L+1)} \in \Real^{1 \times d_L}$, $b^{(L+1)}\in \Real$, $\rho$ is the activation function. The  network parameters are initialized with ${\cal N}(0,\frac{c_\rho}{d_l})$ for $c_{\rho} = 1/\left( \mathbb{E}_{z \sim \mathcal{N}(0,1)} [\rho(z)^2] \right) $, except for the biases $\{\mathbf{b}^{(1)}, \ldots,\mathbf{b}^{(L)},b^{(L+1)}\}$, which are initialized with ${\cal N}(0,c_\rho)$. We further set at the last layer $c_\rho=\nu$ for a small constant $\nu$.

For the ease of notation we use the following short-hands:
\begin{equation}\label{eq:app_notationsF}
    f(\w_t) = f(X,\w_t)\in \Real ^n
\end{equation}
\begin{equation}\label{eq:app_notationsG}
   \res_t = f(X,\w_t) - Y \in \Real ^n
\end{equation}
\begin{equation}\label{eq:app_notationsJ}
    J(\w_t) = \nabla f(X,\w_t)\in \Real ^{n\times p}
\end{equation}
The empirical and analytical NTK is defined as 
\begin{align*}
    K_t &:= \frac{1}{m}J(\w_t)J(\w_t)^T\\
    K &:= lim_{m\rightarrow \infty } K_0
\end{align*}
Since $ f(X,\w_0)$ converge in distribution to gaussian process with zero mean, we denote by $R_0$ the constant such that with probability at least $(1-\delta_0)$ over the random initialization
\begin{align}\label{eq:g_init}
    \norm{\res_t}\leq R_0
\end{align}
Likewise we have that with the same probability $\norm{f(X,\w_0)}\leq \nu C_0 $ for a constant $C_0$. 

Finally, our PGD updates are
\begin{equation}\label{eq:pgd_updates}
    \w_{t+1}=\w_t-\eta J(\w_t)^TS\res_t
\end{equation}
We let $\lambda_{\min}:=\lambda_{\min}(KS)$. Our proofs are carried out for a fixed preconditioner $S$ which does not depend on $t$. However, the same proofs would hold for $S_t$ which changes with $t$, if one makes the additional assumption that $\norm{S_t}_F$ is uniformly bounded for all $t$. Of course, if $S$ is fixed, then $\norm{S}_F$ is finite and thus trivially bounded. This is used in Lemma $\ref{lemma:precond_jac}$ and we will make its use clear.

The bounded dataset of Assumption \ref{assumption:3} is used for simplicity in two places and can in both cases, be weakened. The assumption is first used to ensure that the kernel has a Mercer decomposition, as in Eq. \eqref{eq:mercer}. Weaker assumptions (which are slightly more complicated) can be used in this case \citep{steinwart2012mercer}. For example, one has a Mercer decomposition if $X$ is a Hausdorff space (any metric space suffices), the kernel function is continuous, $\mu$ is a probability measure, and the function $h(x):=k(x,x)$ has finite L2 norm. The boundedness assumption is also used to bound $f$ at initialization. However, even for many unbounded distributions, $f$ can be bounded with high probability. As such, both requirements are satisfied by many unbounded distributions, such as Gaussians, and our statements can thus be naturally extended to many such unbounded distributions.

\begin{lemma}\label{lemma:precond_jac} There is a $\kappa > 0$ such that for every $C > 0$,
with high probability over random initialization the following holds $\forall \w,\tilde \w \in B(\w_0,C m^{-1/2})$:
\begin{enumerate}
    \item $\frac{1}{\sqrt{m}}\norm{(J(\w)-J(\tilde\w))S}_F\leq \kappa\norm{\w-\tilde \w}$
    \item $\frac{1}{\sqrt{m}}\norm{J(\w)S}_F\leq \kappa$
\end{enumerate}
\end{lemma}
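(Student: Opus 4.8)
The plan is to reduce both claims to known local-Lipschitz and local-boundedness properties of the bare Jacobian $J(\w)$, and then absorb the fixed preconditioner $S$ using submultiplicativity of the Frobenius norm. Concretely, under Assumptions \ref{assumption:1}--\ref{assumption:4} the standard NTK-parametrization analysis (as in \citet{lee2019wide}) gives constants $K_0, L_0 > 0$ such that, with high probability over the random initialization, for all $\w, \tilde\w \in B(\w_0, Cm^{-1/2})$ one has $\frac{1}{\sqrt m}\norm{J(\w)}_F \le K_0$ and $\frac{1}{\sqrt m}\norm{J(\w)-J(\tilde\w)}_F \le L_0 \norm{\w-\tilde\w}$; the first is the statement that the empirical NTK $K_0 = \frac1m J(\w_0)J(\w_0)^T$ concentrates (so $\norm{J(\w_0)}_F^2 = m\,\mathrm{tr}(K_0)$ is $O(m)$) together with the local Lipschitz bound, and the second is the Jacobian-Lipschitz estimate that underlies the ``linearization'' regime. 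I would cite these as the inputs rather than re-derive them.

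First I would establish claim (2): since $\norm{J(\w)S}_F \le \norm{J(\w)}_F \norm{S}_{op} \le \norm{J(\w)}_F \norm{S}_F$, dividing by $\sqrt m$ and using the bound above gives $\frac{1}{\sqrt m}\norm{J(\w)S}_F \le K_0 \norm{S}_F$, so it suffices to set $\kappa \ge K_0\norm{S}_F$. Here I use exactly the remark preceding the lemma: $S$ is fixed and independent of $t$, hence $\norm{S}_F < \infty$ (and in the time-varying case one instead invokes the assumed uniform bound $\sup_t \norm{S_t}_F < \infty$). Then claim (1) follows the same way: $\norm{(J(\w)-J(\tilde\w))S}_F \le \norm{J(\w)-J(\tilde\w)}_F \norm{S}_F$, so $\frac{1}{\sqrt m}\norm{(J(\w)-J(\tilde\w))S}_F \le L_0\norm{S}_F\,\norm{\w-\tilde\w}$, and taking $\kappa := \max\{K_0, L_0\}\,\norm{S}_F$ (a single constant, chosen before $m$ and independent of $\w,\tilde\w$) makes both inequalities hold simultaneously on the same high-probability event.

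There is essentially no deep obstacle here — the content is entirely imported from the standard wide-network Jacobian estimates, and the only genuinely new ingredient is the observation that right-multiplication by a fixed $n\times n$ matrix is a bounded operation in Frobenius norm, uniform over the ball $B(\w_0, Cm^{-1/2})$. The one point that deserves care is the order of quantifiers: $\kappa$ must be produced first (depending only on $S$ and the architecture, not on $C$ or $m$ beyond what the cited estimates allow), and then for \emph{every} $C$ the high-probability event is the intersection of the two events coming from the bare-Jacobian bounds, which still has probability $\ge 1 - \delta_0$ after adjusting constants. I would also note explicitly that the same argument goes through verbatim with $S$ replaced by a time-dependent $S_t$ provided $\sup_t\norm{S_t}_F < \infty$, which is the use of the remark flagged in the text.
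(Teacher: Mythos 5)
Your proof is correct and follows essentially the same route as the paper: import the bare-Jacobian local-Lipschitz and local-boundedness estimates from Lemma 1 of \citet{lee2019wide}, then absorb $S$ by submultiplicativity of the Frobenius norm with $\kappa$ proportional to $\norm{S}_F$. The only cosmetic difference is that you spell out both parts and the quantifier order, whereas the paper proves part (1) and remarks that part (2) follows by the same two steps.
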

\begin{proof}
Using sub-multiplicativity of the Frobenius norm,
\begin{align*}
    &\frac{1}{\sqrt{m}}\norm{(J(\w)-J(\tilde\w))S}_F \\
    \leq & \frac{1}{\sqrt{m}}\norm{(J(\w)-J(\tilde\w))}_F\norm{S}_F\\
    \leq & ^{(1)} \norm{S}_F \tilde\kappa\norm{\w-\tilde \w}=: \kappa \norm{\w-\tilde \w}
\end{align*}
where $^{(1)}$ is given by Lemma (1) of \citet{lee2019wide} and $\kappa:=\tilde \kappa \norm{S}_F$ which is constant since $\norm{S}_F$ is bounded. The second part of the theorem can be shown using the same two arguments.  
\end{proof}

\begin{lemma}(Lemma \ref{lemma:conv_proof} from the paper) 
For $\delta_0 > 0$, $\frac{2}{\lambda_{min}(KS)+\lambda_{max}(KS)}>\eta_0 $  and $S$ such that $S\succ 0$, there
exist $C > 0$, $N \in  \mathbb{N}$ and $\kappa > 1$, such that for every $m \geq N$, the following holds with probability at
least $1-\delta_0$ over random initialization. When applying  preconditioned gradient descent as in \eqref{eq:PGD} with learning rate $\eta=\eta_0/m$
\begin{enumerate}
    \item $\norm{\res_t}_2\leq \left(1-\frac{\eta \lambda_{min}}{3}\right)^tC$
    \item $\sum_{j=1}^t\norm{\w_j-\w_{j-1}}_2\leq \frac{3\kappa C}{\lambda_{min}}m^{-1/2}$
    \item $\sup_t \norm{(K_0-K_t)S}_F\leq \frac{6\kappa^3C}{\lambda_{min}}m^{-1/2}$,
\end{enumerate}
where $\lambda_{min}$ is the minimal eigenvalue of $KS$.
\end{lemma}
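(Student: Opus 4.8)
# Proof Proposal for Lemma \ref{lemma:conv_proof}

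\textbf{Overall strategy.} The plan is to run a simultaneous induction on $t$ proving items (1) and (2), and then to derive item (3) as an easy consequence. This is the standard ``weights stay close to initialization'' argument (as in \citet{lee2019wide}), adapted to the preconditioned update rule \eqref{eq:pgd_updates} and with the Gram matrix $K$ replaced by $KS$. The key technical inputs are Lemma \ref{lemma:precond_jac} (Lipschitzness and boundedness of $J(\w)S$ on the ball $B(\w_0, Cm^{-1/2})$) and the fact that, although $KS$ is neither symmetric nor positive definite, it is similar to $S^{1/2}KS^{1/2}\succ 0$ and hence has positive real eigenvalues; in particular $\lambda_{\min}:=\lambda_{\min}(KS)>0$ and the contraction factor $1-\eta_0\lambda_{\min}/3 \in (0,1)$ is well-defined once $\eta_0 < 2/(\lambda_{\min}(KS)+\lambda_{\max}(KS))$.

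\textbf{Setting up the induction.} First I would fix $C := 3\|\res_0\|/2 \le 3R_0/2$ (with $R_0$ as in \eqref{eq:g_init}), which holds with probability $\ge 1-\delta_0$, and choose $\kappa$ from Lemma \ref{lemma:precond_jac} applied with this $C$ (shrinking to a common high-probability event). For the induction step, assume items (1) and (2) hold up to iteration $t$; then item (2) at time $t$ guarantees $\w_t, \w_{t+1} \in B(\w_0, Cm^{-1/2})$ for $m$ large, so Lemma \ref{lemma:precond_jac} is applicable along the whole trajectory. The core computation is to expand the residual update: writing $\res_{t+1} = f(X,\w_{t+1}) - \y$ and using a first-order Taylor expansion of $f$ along the segment $[\w_t,\w_{t+1}]$,
\begin{align*}
\res_{t+1} = \res_t - \eta\, \Big(\tfrac{1}{m} J(\w_t) J(\w_t)^T\Big) S \res_t + \mathbf{e}_t = (I - \eta_0 K_t S)\res_t + \mathbf{e}_t,
\end{align*}
where $\mathbf{e}_t$ collects the second-order remainder. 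The remainder is controlled by the Lipschitz bound in Lemma \ref{lemma:precond_jac}(1) together with $\|\res_t\|$: one gets $\|\mathbf{e}_t\| \le \eta^2 \cdot (\text{const}\cdot\kappa^3) \cdot \|\res_t\|^2 \cdot \|S\|$, which is $O(m^{-1})$ relative to the main term. Then I would split $(I-\eta_0 K_t S) = (I - \eta_0 KS) + \eta_0(K - K_t)S$ and, to bound $\|(I-\eta_0 KS)\res_t\|$, move to the $S^{1/2}$-weighted coordinates: $\|(I-\eta_0 KS)v\| \le \|S^{-1/2}\|\,\|(I-\eta_0 S^{1/2}KS^{1/2})S^{1/2}v\| \le \|S^{-1/2}\|\,\|S^{1/2}\|\,\rho(I-\eta_0 S^{1/2}KS^{1/2})\,\|v\|$, and the spectral radius of $I-\eta_0 S^{1/2}KS^{1/2}$ is $\le 1-\eta_0\lambda_{\min}$ by the learning-rate assumption. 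The factor $\|S^{-1/2}\|\|S^{1/2}\|$ is absorbed by relaxing the rate from $1-\eta_0\lambda_{\min}$ to $1-\eta_0\lambda_{\min}/3$ once $m$ is large enough that the $\|K-K_t\|$ perturbation term and the remainder $\mathbf{e}_t$ together eat the slack — here I would invoke that $\|(K_0-K)S\|_F \to 0$ (continuity of the NTK at initialization, \citet{lee2019wide}) plus the inductive item (3)-type bound $\|(K_0-K_t)S\|_F = O(m^{-1/2})$, the latter itself following from item (2) via Lemma \ref{lemma:precond_jac}(1)(2) and a telescoping argument on $K_t - K_0 = \frac1m(J(\w_t)J(\w_t)^T - J(\w_0)J(\w_0)^T)$. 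Chaining the resulting estimate $\|\res_{t+1}\| \le (1-\eta\lambda_{\min}/3)\|\res_t\|$ closes item (1). Item (2) then closes by $\|\w_{t+1}-\w_t\| = \eta\|J(\w_t)^T S \res_t\| \le \eta \cdot \sqrt{m}\kappa \cdot \|\res_t\|$ and summing the geometric series $\sum_j (1-\eta\lambda_{\min}/3)^j C \le \frac{3C}{\eta\lambda_{\min}}$, giving $\sum_j\|\w_j-\w_{j-1}\| \le \eta\sqrt m\kappa \cdot \frac{3C}{\eta\lambda_{\min}} = \frac{3\kappa C}{\lambda_{\min}} m^{-1/2}$ — note the $\eta=\eta_0/m$ cancels one power of $m$, leaving $m^{-1/2}$.

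\textbf{Item (3) and the main obstacle.} Given items (1)–(2), item (3) follows uniformly in $t$: write $K_0 - K_t = \tfrac1m\big(J(\w_0)(J(\w_0)-J(\w_t))^T + (J(\w_0)-J(\w_t))J(\w_t)^T\big)$, multiply by $S$, apply Lemma \ref{lemma:precond_jac}(1) to the difference factors and Lemma \ref{lemma:precond_jac}(2) to the remaining factors, and bound $\|\w_0-\w_t\|$ by item (2); this yields $\sup_t\|(K_0-K_t)S\|_F \le 2\kappa \cdot \kappa \cdot \frac{3\kappa C}{\lambda_{\min}} m^{-1/2} = \frac{6\kappa^3 C}{\lambda_{\min}} m^{-1/2}$, matching the stated constant. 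The main obstacle is the circularity/bootstrap: items (1), (2), and the $\|(K_0-K_t)S\|$ bound are mutually dependent within the induction step, so one must order the estimates carefully — first use the inductive hypothesis (2) to ensure the trajectory is in the good ball and derive the current-step $\|(K_0-K_t)S\|_F = O(m^{-1/2})$ bound, then feed that into the residual recursion to get (1), then integrate (1) to re-establish (2) at $t+1$ with the \emph{same} constant $C$ (this is where fixing $C=\frac32\|\res_0\|$ and choosing $N$ large enough to make all the $O(m^{-1/2})$ and $O(m^{-1})$ slack terms small is essential — they must be small enough that the rate stays at $1-\eta\lambda_{\min}/3$ and that $\w_{t+1}$ does not escape $B(\w_0,Cm^{-1/2})$). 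A secondary subtlety worth flagging is that $\|S^{-1/2}\|$ could be large if $S$ is ill-conditioned, but since $S$ is fixed and positive definite this is a finite constant, harmlessly absorbed into the choice of $N$.
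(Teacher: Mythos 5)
Your overall structure — a simultaneous induction on items (1)--(2) followed by item (3) via telescoping $K_0-K_t$ and Lemma~\ref{lemma:precond_jac} — matches the paper's proof, and your decomposition of the residual recursion (first-order Taylor plus a perturbation $\eta_0(K-K_t)S$) is essentially what the paper does via the mean value theorem. The genuine gap is in how you close the per-step contraction. You rightly observe that $KS$ is similar to the symmetric matrix $S^{1/2}KS^{1/2}$, that the spectral radius of $I-\eta_0 S^{1/2}KS^{1/2}$ is at most $1-\eta_0\lambda_{\min}$ under the stated learning-rate condition, and that undoing the conjugation introduces the factor $\norm{S^{1/2}}_2\norm{S^{-1/2}}_2$. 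But your claim that this factor is ``absorbed by relaxing the rate from $1-\eta_0\lambda_{\min}$ to $1-\eta_0\lambda_{\min}/3$ once $m$ is large enough'' does not hold: $\norm{S^{1/2}}_2\norm{S^{-1/2}}_2$ is a fixed constant independent of $m$ and multiplies the \emph{leading} term, not an $O(m^{-1/2})$ slack term. Because the constraint $\eta_0 < 2/(\lambda_{\min}(KS)+\lambda_{\max}(KS))$ forces $\eta_0\lambda_{\min}$ to be small whenever $KS$ is ill-conditioned, $\norm{S^{1/2}}_2\norm{S^{-1/2}}_2(1-\eta_0\lambda_{\min})$ can exceed $1$, in which case no per-step Euclidean contraction is recoverable at any width, and the induction on $\norm{\res_t}$ cannot close.

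The correct repair is to run the induction on the weighted residual $\tilde{\res}_t := S^{1/2}\res_t$. The recursion becomes $\tilde\res_{t+1} = (I-\eta_0 S^{1/2}K_t S^{1/2})\tilde\res_t + S^{1/2}\mathbf{e}_t$, with a \emph{symmetric} matrix in front, so its spectral norm equals its spectral radius and the $(1-\eta_0\lambda_{\min}/3)$ per-step bound on $\norm{\tilde\res_t}$ holds genuinely once the $O(m^{-1/2})$ perturbations are small. Only at the end do you convert $\norm{\res_t} \le \norm{S^{-1/2}}_2\norm{\tilde\res_t} \le \norm{S^{-1/2}}_2\norm{S^{1/2}}_2(1-\eta_0\lambda_{\min}/3)^t\norm{\res_0}$, absorbing the condition-number factor into $C$; accordingly your choice $C=\tfrac{3}{2}\norm{\res_0}$ must also be enlarged by $\norm{S^{1/2}}_2\norm{S^{-1/2}}_2$. (Worth noting: the paper's own proof simply asserts $\norm{I-\eta_0 KS}_2 \le 1-\eta_0\lambda_{\min}$, which conflates spectral norm with spectral radius for a non-normal matrix; you correctly flagged the subtlety, but the patch as stated does not go through.)
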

\begin{proof}
We begin by proving parts (1) and (2) in the theorem with  $\kappa$ defined by Lemma \ref{lemma:precond_jac}. We do that by induction where the base case for $t=0$ is clear. For $ t>0$ we have that
\begin{align*}
    \norm{\w_{t+1}-\w_t}\leq \eta \norm{J(\w_t)S}_{op}\norm{\res_t}_2\leq \left(1-\frac{\eta_0 \lambda_{min}}{3}\right)^tR_0 \kappa \eta_0 m^{-1/2}
\end{align*}
Implying that 
\begin{align*}
    \sum_{j=0}^{t-1}\norm{\w_{j+1}-\w_j}\leq  R_0 \kappa \eta_0 m^{-1/2} \sum_{j=0}^{t-1}\left(1-\frac{\eta_0\lambda_{min}}{3}\right)^{j}\leq \frac{3\kappa R_0}{\lambda_{min}}m^{-1/2}
\end{align*}
Which concludes part (2) of the theorem. 

For part (1) we use the mean value theorem to get that 
\begin{align*}
    \norm{\res_{t+1}}&=\norm{\res_{t+1}-\res_t+\res_t}\\
    =&\norm{J(\tilde \w_t)(\w_{t+1}-\w_t)+\res_t}=\norm{\eta J(\tilde \w_t)J(  \w_t)^T S \res_t+\res_t }\\
    \leq & \norm{I-\eta J(\tilde \w_t)J(  \w_t)^T S}_{2}\norm{\res_{t}}\\
    \leq& \norm{I-\eta J(\tilde \w_t)J(  \w_t)^T S}_{2} \left(1-\frac{\eta_0 \lambda_{min}}{3}\right)^tR_0
\end{align*}
where $\tilde \w_{t}$ is some point on the line between $\w_{t+1}$ and $\w_{t}$. It remains to show that with probability at least $(1-\delta_0/2)$ it holds that 
\begin{align*}
    \norm{I-\eta J(\tilde \w_t)J(  \w_t)^T S}_{2}\leq \left(1-\frac{\eta_0 \lambda_{min}}{3}\right)
\end{align*}
We show that by observing that $S$ doesn't change the convergence in  probability so $K_0 S \rightarrow K S $ as in \citet{yang2019scaling}, therefore we can choose large enough $m$ such that
\begin{align*}
    \norm{K S-K_0 S}_F\leq \frac{\eta_0  \lambda_{min}}{3}
\end{align*}
Therefore we get
\begin{align*}
    &\norm{I-\eta J(\tilde \w_t)J(  \w_t)^T S}_{2}\\
    &\leq \norm{I-\eta_0K S}_{2} + \eta_0 \norm{(K-K_0)S}_{2}
    + \eta \norm{(J(\w_0)J(\w_0)^T-J(\tilde \w_t)J(\w_t))S}_{2}\\
    &\leq (1-\eta_0 \lambda_{min}+\frac{\eta_0 \lambda_{min}}{3} +2\eta_0\kappa^2\frac{3\kappa R_0}{ \lambda_{min}}m^{-1/2}\leq 1-\frac{\eta_0 \lambda_{min}}{3}
\end{align*}
which concludes part (1).

For part (3) we verify that using Lemma \ref{lemma:precond_jac}
\begin{align}\label{eq:prec_proof_transition}
    &\norm{(K_0-K_t)S}_F=\frac{1}{m}\norm{(J(\w_0)J(\w_0)^T-J(\w_t)J(\w_t)^T)S}_F\\
    &\leq \frac{1}{m}\left(\norm{J(\w_0)S}_{2}\norm{J(\w_0)^T-J(\w_t)^T}_F
    +\norm{J(\w_t)-J(\w_0)}_{2}\norm{J(\w_t)^TS}_F\right)\\
    &\leq 2\kappa^2\norm{\w_0-\w_t}_2\leq \frac{6\kappa^3R_0}{ \lambda_{min}}{m^{-1/2}}
\end{align}
\end{proof}
Next we prove the preconditioned dynamics given in \ref{Thm:precond_dynamic}
\begin{theorem}(Theorem \ref{Thm:precond_dynamic} from the paper) 
Let $\delta_0 > 0$, $\frac{2}{\lambda_{min}(KS)+\lambda_{max}(KS)}>\eta_0 $ , $\epsilon>0$ and $S$ such that $S\succ 0$. Then, there
exists $N \in  \mathbb{N}$ such that for every $m \geq N$, the following holds with probability at
least $1-\delta_0$ over random initialization when applying preconditioned gradient descent with learning rate $\eta=\eta_0/m$ 
\begin{align*}
    \res_t=(I-\eta_0 K S)^t \y \pm \xi(t), 
\end{align*}
where $\norm{\xi}_2\leq \epsilon$. 
\end{theorem}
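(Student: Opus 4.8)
The plan is to track the residual $\res_t$ along the PGD trajectory and compare it, iteration by iteration, to the idealized linear dynamics $\res_t^{\lin} := (I-\eta_0 KS)^t \y$. The starting point is Lemma \ref{lemma:conv_proof}, which guarantees (with probability at least $1-\delta_0$ and for $m$ large enough) that the weights stay in a ball of radius $O(m^{-1/2})$ around $\w_0$, that $\sup_t\norm{(K_0-K_t)S}_F = O(m^{-1/2})$, and that $\norm{\res_t}$ decays geometrically. The first step is to write the exact one-step recursion for the true residual: by the mean value theorem, $\res_{t+1} = \res_t - \eta J(\tilde\w_t)J(\w_t)^T S\res_t = (I - \eta_0 K_0 S)\res_t + E_t\res_t$, where $E_t := \eta_0(K_0 - \tfrac{1}{m}J(\tilde\w_t)J(\w_t)^T)S$ is an error matrix with $\norm{E_t}_2 = O(m^{-1/2})$ uniformly in $t$, again by Lemma \ref{lemma:precond_jac} and part (2) of Lemma \ref{lemma:conv_proof}. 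Simultaneously, I also need to replace $K_0 S$ by $KS$: since $K_0 S \to KS$ (convergence in probability of the empirical NTK, as used in the proof of Lemma \ref{lemma:conv_proof}), for $m$ large we have $\norm{(K_0 - K)S}_2 \le \epsilon'$ for any prescribed $\epsilon'$.

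The second step is to unfold the recursion. Writing $A := I - \eta_0 KS$ and absorbing both the $E_t$ term and the $(K_0-K)S$ discrepancy into a single per-step perturbation $P_t$ with $\norm{P_t}_2 \le \beta m^{-1/2} + \eta_0\epsilon'$ for a constant $\beta$, we get $\res_{t+1} = (A + P_t)\res_t$, hence $\res_t = A^t\y + \sum_{j=0}^{t-1} A^{t-1-j} P_j \res_j$, so that $\xi(t) = \res_t - A^t\y = \sum_{j=0}^{t-1} A^{t-1-j} P_j \res_j$. To bound $\norm{\xi(t)}_2$ I would use three ingredients: (i) the choice $\eta_0 < 2/(\lambda_{\min}(KS) + \lambda_{\max}(KS))$ ensures $\rho(A) < 1$, and more usefully that $\norm{A^k \res_j}$ is controlled — here one should be slightly careful because $A$ need not be symmetric, so instead of a clean operator-norm bound $\norm{A}_2 < 1$ I would either pass to the symmetrized bound used in Lemma \ref{lemma:conv_proof} (the quantity $\norm{I - \eta J(\tilde\w)J(\w)^T S}_2 \le 1 - \eta_0\lambda_{\min}/3$ is exactly what is established there) or bound $\norm{A^k}_2 \le c\,(1 - \eta_0\lambda_{\min}/3)^{k}$ via a fixed similarity transform diagonalizing $KS$; (ii) part (1) of Lemma \ref{lemma:conv_proof}, $\norm{\res_j}_2 \le (1-\eta\lambda_{\min}/3)^j C$; and (iii) the uniform bound on $\norm{P_j}_2$. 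Combining, $\norm{\xi(t)}_2 \le \sum_{j=0}^{t-1} c\,(1-\eta_0\lambda_{\min}/3)^{t-1-j}\,(\beta m^{-1/2}+\eta_0\epsilon')\,(1-\eta\lambda_{\min}/3)^j C$; the geometric factors multiply to something like $t\,(1-c'\lambda_{\min})^{t}$ times the perturbation size, which is bounded uniformly in $t$ by $\sup_{t\ge 0} t\,q^{t} < \infty$ for $q<1$. So $\norm{\xi(t)}_2 \le C'(\beta m^{-1/2} + \eta_0\epsilon')$ uniformly in $t$, and choosing $m$ large (hence $\epsilon'$ small via the NTK convergence, and $m^{-1/2}$ small) makes this $\le \epsilon$.

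The main obstacle I anticipate is the non-symmetry of $KS$: the naive estimate would want $\norm{I - \eta_0 KS}_2 < 1$, but $KS$ is only similar to a symmetric PSD matrix, so its operator norm can exceed its spectral radius, and powers $A^t$ can transiently grow before decaying. The cleanest fix is to never actually introduce $A = I-\eta_0 KS$ as the propagator in operator norm, and instead run the whole telescoping argument with the contraction factor $(1 - \eta_0\lambda_{\min}/3)$ that Lemma \ref{lemma:conv_proof} already proves holds in operator norm for the \emph{true} one-step map $I - \eta J(\tilde\w_t)J(\w_t)^T S$; then $\res_t - (I-\eta_0 KS)^t\y$ is estimated by writing $(I-\eta_0 KS)^t = (I-\eta_0 K_0 S + \text{small})^t$ and bounding the difference of the two products of nearly-equal matrices, which is the standard ``product of perturbed matrices'' lemma. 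A secondary, purely bookkeeping, obstacle is making all the ``for $m$ large enough'' choices consistent: the constant $C$, $\kappa$, $N$ from Lemma \ref{lemma:conv_proof} must be fixed first, then $m$ chosen large enough that both $\beta C' m^{-1/2} \le \epsilon/2$ and the NTK-convergence term $\eta_0 C' \norm{(K-K_0)S}_2 \le \epsilon/2$ hold on the same high-probability event, which requires intersecting the good events and possibly shrinking $\delta_0$ to $\delta_0/2$ in the invocations. None of this is deep, but it has to be stated in the right order.
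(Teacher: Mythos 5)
Your proposal follows essentially the same route as the paper: a mean-value-theorem one-step recursion $\res_{t+1}=(I-\eta_0 KS)\res_t + (\text{perturbation})$, followed by unrolling the recursion to express $\xi(t)$ as a sum of propagated per-step errors, and then a geometric-series bound using the contraction rate $(1-\eta_0\lambda_{\min}/3)$ from Lemma \ref{lemma:conv_proof} together with the $O(m^{-1/2})$ perturbation bounds from Lemmas \ref{lemma:conv_proof} and \ref{lemma:precond_jac}. The one place where you go beyond the paper is your explicit concern about the non-symmetry of $KS$: the paper's proof directly invokes $\norm{(I-\eta_0 KS)^i}_{\mathrm{op}}\le(1-\eta_0\lambda_{\min})^i$ when bounding $\norm{\sum_{i=1}^t (I-\eta_0 KS)^i}_{\mathrm{op}}$, which is not literally true for a non-symmetric matrix; your similarity-transform fix (conjugating by $S^{1/2}$ to reduce to the SPD matrix $S^{1/2}KS^{1/2}$) is the right repair, and it shows the paper's displayed bound is correct only up to a hidden $\sqrt{\kappa(S)}$ factor — a constant that is harmlessly absorbed by taking $m$ larger, so the theorem stands, but it is a genuine imprecision in the paper that you correctly caught.
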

\begin{proof}
By using the mean value theorem we have that
\begin{align*}
    \res_{t+1}&=\res_{t+1}+\res_{t}-\res_{t} =J(\tilde \w )(\w_{t+1}-\w_t)+\res_{t}\\
    =^{(1)}&-\eta J(\tilde \w ) J(\w_t)^T S \res_t+\res_{t}
    =(I-\eta J(\w_t)J(\w_t)^T S)\res_t-\eta(J(\tilde \w )-J(\w_t))J(\w_t)^T S \res_t\\
    =^{(2)}&(I-\eta_0 K S)\res_t
    -\underbrace{\eta_0((K_t-K)S)\res_t-\eta(J(\tilde \w )-J(\w_t))J(\w_t)^T S \res_t}_{\epsilon(t)}
\end{align*}
where $(1)$ is by the gradient descent definition and $(2)$ is by $K_t$ definition.

Now applying the theorem inductively we get 
\begin{align*}
    \res_{t+1}=(I-\eta_0 K S)^t \res_0 + \underbrace{\sum_{i=1}^t (I-\eta_0 K S)^i\epsilon(t-i)}_{\xi(t)}
\end{align*}
Next we bound the norm of $\epsilon(t)=\underbrace{\eta_0(K_t-K)S\res_t}_A-\underbrace{\eta(J(\tilde \w )-J(\w_t))J(\w_t)^TS \res_t}_B$. 
For term $A$ we have
\begin{align*}
    \norm{\eta_0(K_t-K)S\res_t}_2 &\leq \eta_0( \norm{(K_t-K_0)S}_{op}+\norm{(K_0-K)S}_{op})\norm{\res_t}_2\\
    \leq^{(1)} &\eta_0(\frac{6\kappa^3R_0}{ \lambda_{min}}m^{-1/2}+\frac{ \lambda_{min}\epsilon}{2R_0})R_0=\frac{6\kappa^3R_0}{ \lambda_{min}}m^{-1/2}+\frac{\epsilon \lambda_{min}\eta_0}{2}
\end{align*}
where $(1)$ follows from Lemma \ref{lemma:conv_proof} and the fact that $K_0 S \rightarrow K  S$ in probability \citet{yang2019scaling}. So $(1)$ holds when $m$ is sufficiently large. 

For term $B$ we have that 
\begin{align}\label{eq:jac_conv}
    \norm{\eta(J(\tilde \w )-J(\w_t))J(\w_t)^TS\res_t}& \leq \frac{\eta_0}{m}\norm{J(\tilde \w )-J(\w_t)}_{op}\norm{J(\w_t)S\res_t}_2 \nonumber\\
    &\leq\frac{\eta_0}{m}\norm{J(\tilde \w )-J(\w_t)}_{op}\norm{J(\w_t)S}_{op}\norm{\res_t}_2 \leq^{(1)} \eta_0 \kappa^2\norm{\tilde \w - \w_t}R_0 \nonumber\\
    &\leq^{(2)} \eta_0 \kappa\norm{ \w_{t+1} - \w_t} R_0\leq \frac{\eta_0^2 \kappa^3 R_0^3}{\sqrt{m}}
\end{align}
where $(1)$ is by Lemma \ref{lemma:precond_jac} and $(2)$ it by Lemma \ref{lemma:conv_proof}. 
Therefore we have
\begin{align*}
    \res_t=(I-\eta K S)^t \y + \xi(t)
\end{align*}
where 
\begin{align*}
    \norm{\xi(t)}&=\norm{\sum_{i=1}^t (I-\eta_0 K S)^i\epsilon(t-i)+(I-\eta K S)^tf(X,\w_0)}\\
    \leq& \norm{\sum_{i=1}^t (I-\eta_0 K S)^i}_{op}(\frac{\eta_0^2 \kappa^3 R_0^3}{\sqrt{m}}+\frac{6\kappa^3R_0}{ \lambda_{min}\sqrt{m}} +\frac{\epsilon  \lambda_{min}\eta_0}{2})
    +\norm{(I-\eta K S)^tf(X,\w_0)}\\
    \leq& (\frac{\eta_0^2 \kappa^3 R_0^3}{\sqrt{m}}+\frac{6\kappa^3R_0}{ \lambda_{min}\sqrt{m}} +\frac{\epsilon \lambda_{min}\eta_0}{2}) \sum_{i=1}^t(1-\eta_0 \lambda_{min})^i+(1-\eta_0 \lambda_{min})^t\nu C_0\\
    &\leq (\frac{\eta_0^2 \kappa^3 R_0^3}{\sqrt{m}}+\frac{6\kappa^3R_0}{ \lambda_{min}\sqrt{m}} +\frac{\epsilon \lambda_{min}\eta_0}{2})\frac{1}{\eta_0 \lambda_{min}}+(1-\eta_0 \lambda_{min})^t\nu C_0\\
    =&\epsilon/2 +\frac{\eta_0^2 \kappa^3 R_0^3}{\sqrt{m}}+\frac{6\kappa^3R_0}{ \lambda_{min}\sqrt{m}}+(1-\eta_0 \lambda_{min})^t\nu C_0
\end{align*}
So by choosing $m$ to be large enough and $\nu <\frac{\epsilon}{3C_0}$ we conclude the theorem.
\end{proof}

\section{Convergence Analysis}\label{appendix:conv_analysis}
Under the settings of Sec. \ref{sec:conv_analysis}, we here prove the following lemma.
\begin{lemma} (Lemma \ref{lem:conv_analysis} from the paper)
    \begin{enumerate}
        \item Let $\w_0\in\Real^p$ and $\phi(\x):=\nabla f(\x,\w_0)$, it holds that $\w^*=\w^{**}$.
        \item Let $\delta_0 > 0$, $\eta_0 < \frac{2}{\lambda_{min}(KS)+\lambda_{max}(KS)}$ , $\epsilon>0$, $x\in\Real^d$ be a test point, $T>0$ number of iterations and $S$ such that $S\succ0$. Then there
        exists $N \in  \mathbb{N}$ such that for every $m \geq N$, the following holds with probability at least $1-\delta_0$ over random initialization of $\w_0$ when applying preconditioned gradient descent to both $f$ and $h$ with learning rate $\eta=\eta_0/m$ 
        \[
        \abs{h(\x,\w'_T)- f(\x,\w_T)} \leq \epsilon,
        \]
        where $\w'$ is as in \eqref{eq:pgd_h}.
    \end{enumerate}
\end{lemma}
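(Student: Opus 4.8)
The plan is to treat the two parts separately; Part~(1) is a finite-dimensional ridge-regression identity, and Part~(2) is a ``network tracks its linearization'' argument adapted to preconditioned GD, built on Lemma~\ref{lemma:conv_proof} and Theorem~\ref{Thm:precond_dynamic}.

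For Part~(1), write $\Phi:=\nabla f(X,\w_{0})\in\Real^{n\times p}$, so $h(X,\w')=\Phi\w'$ and $h(\x,\w')=\langle\phi(\x),\w'\rangle$. The objective \eqref{eq:w_pgd} is a strictly convex quadratic, so $\w^{*}_{\gamma}$ solves the normal equation $(\Phi^{T}S\Phi+\gamma I)\w'=\Phi^{T}S\y$. Using the identity $(\Phi^{T}S\Phi+\gamma I)^{-1}\Phi^{T}=\Phi^{T}(S\Phi\Phi^{T}+\gamma I)^{-1}$ (immediate from $(\Phi^{T}S\Phi+\gamma I)\Phi^{T}=\Phi^{T}(S\Phi\Phi^{T}+\gamma I)$) and factoring $S\Phi\Phi^{T}+\gamma I=S(\Phi\Phi^{T}+\gamma S^{-1})$, I get the closed form $\w^{*}_{\gamma}=\Phi^{T}(\Phi\Phi^{T}+\gamma S^{-1})^{-1}\y=\sum_{i}\alpha^{*}_{i}\phi(\x_{i})$ with $\alpha^{*}=(K_{0}+\gamma S^{-1})^{-1}\y$, as stated in the text; setting $S=I$ gives $\w^{**}_{\gamma}=\Phi^{T}(K_{0}+\gamma I)^{-1}\y$. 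Since $\Phi$ has full row rank (equivalently $K_{0}=\Phi\Phi^{T}\succ0$), letting $\gamma\to0^{+}$ makes the $S$-dependent perturbation vanish and both limits equal $\Phi^{T}K_{0}^{-1}\y$, the minimum-$\ell_{2}$-norm interpolant of the data, which does not depend on $S$; hence $\w^{*}=\w^{**}$.

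For Part~(2) I would work on the intersection of the high-probability events of Lemma~\ref{lemma:conv_proof}, Theorem~\ref{Thm:precond_dynamic}, and the local Jacobian bounds of Lemma~\ref{lemma:precond_jac} together with Lemma~1 of \citet{lee2019wide} applied also at the test point $\x$ (and the bound $\abs{f(\x,\w_{0})}=O(\nu)$); a union bound keeps the failure probability below $\delta_{0}$. On this event the network trajectory stays in $B(\w_{0},Cm^{-1/2})$ with $\norm{\res_{t}}_{2}\le C$, and $\norm{\nabla f(\x,\w_{0})}_{2},\norm{J(\w_{t})}_{op}=O(\sqrt m)$, $\norm{\nabla f(\x,\w)-\nabla f(\x,\w_{0})}_{2},\norm{J(\w)-J(\w_{0})}_{op}\le O(\sqrt m)\norm{\w-\w_{0}}_{2}$ for $\w\in B(\w_{0},Cm^{-1/2})$. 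Since $h$ is linear and $\calL_{S}$ quadratic, iteration \eqref{eq:pgd_h} is solved exactly: with $\res'_{t}:=h(X,\w'_{t})-\y$ one has $\res'_{t+1}=(I-\eta_{0}K_{0}S)\res'_{t}$ and $\res'_{0}=-\y$, so $\res'_{t}=-(I-\eta_{0}K_{0}S)^{t}\y$. Put $\Delta_{t}:=\bigl(f(\x,\w_{t})-f(\x,\w_{0})\bigr)-h(\x,\w'_{t})$; then $\Delta_{0}=0$ and $\abs{h(\x,\w'_{T})-f(\x,\w_{T})}\le\abs{\Delta_{T}}+\abs{f(\x,\w_{0})}$, so it suffices to bound $\abs{\Delta_{T}}$. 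Applying the mean value theorem to $s\mapsto f(\x,\w_{t}+s(\w_{t+1}-\w_{t}))$ and inserting the PGD updates gives, with $\tilde\w_{t}\in B(\w_{0},Cm^{-1/2})$,
\begin{align*}
\Delta_{t+1}-\Delta_{t}=&-\eta\bigl\langle\nabla f(\x,\tilde\w_{t})-\phi(\x),\,J(\w_{t})^{T}S\res_{t}\bigr\rangle-\eta\bigl\langle\phi(\x),\,(J(\w_{t})-J(\w_{0}))^{T}S\res_{t}\bigr\rangle\\
&-\eta\bigl\langle\phi(\x),\,J(\w_{0})^{T}S(\res_{t}-\res'_{t})\bigr\rangle.
\end{align*}

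With $\eta=\eta_{0}/m$ and the scalings above, the first two terms are each $O(m^{-1/2})$, and since $T$ is fixed, summing over $t<T$ still gives $O(m^{-1/2})$. The third term is at most $\eta\norm{\phi(\x)}_{2}\norm{J(\w_{0})}_{op}\norm{S}_{op}\norm{\res_{t}-\res'_{t}}_{2}=O(1)\cdot\norm{\res_{t}-\res'_{t}}_{2}$: the two $\sqrt m$ factors and $\eta$ exactly cancel the $1/m$, leaving an $O(1)$ coefficient, so the whole argument reduces to showing $\norm{\res_{t}-\res'_{t}}_{2}\to0$ — this is the main obstacle. To handle it I would combine Theorem~\ref{Thm:precond_dynamic}, which for $m$ large shows $\res_{t}$ lies within an arbitrarily small $\epsilon'$ of $-(I-\eta_{0}KS)^{t}\y$, with the elementary estimate $\norm{(I-\eta_{0}KS)^{t}-(I-\eta_{0}K_{0}S)^{t}}_{op}\le t\,M^{t-1}\eta_{0}\norm{S}_{op}\norm{K-K_{0}}_{op}$ (with $M\ge\norm{I-\eta_{0}KS}_{op},\norm{I-\eta_{0}K_{0}S}_{op}$), which for each fixed $t$ tends to $0$ because $K_{0}\to K$ \citep{yang2019scaling}; here it is essential that $T$ is fixed, so the possible non-normality of $KS$ (whose powers need not be governed by its spectral radius) is harmless. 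Hence $\norm{\res_{t}-\res'_{t}}_{2}\le\epsilon'+o_{m}(1)$ for $t<T$, so $\abs{\Delta_{T}}\le T\bigl(O(m^{-1/2})+O(1)(\epsilon'+o_{m}(1))\bigr)$. Choosing $\nu$ small enough that $\abs{f(\x,\w_{0})}\le\epsilon/2$ (as in the proof of Theorem~\ref{Thm:precond_dynamic}) and then $m$ large gives $\abs{h(\x,\w'_{T})-f(\x,\w_{T})}\le\epsilon$ with probability at least $1-\delta_{0}$.
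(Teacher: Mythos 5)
Your proposal is correct and follows the same overall strategy as the paper: derive the explicit form of the preconditioned ridge minimizer and let $\gamma\to 0$ for Part~(1), then compare the network at the test point to its linearization via a telescopic sum for Part~(2). The differences are technical rather than conceptual. For Part~(1), the paper invokes the Representer Theorem (its Lemma~\ref{lem:krr_s}) to obtain $\alpha^*=(K_0+\gamma S^{-1})^{-1}\y$ and then passes to the limit; you instead solve the normal equations in parameter space and use a push-through identity to arrive at the same closed form. Both are valid; your version requires $\Phi$ to have full row rank (equivalently $K_0\succ0$), which is also implicit in the paper. For Part~(2), the paper bounds $\norm{\res_i-\res^{lin}_i}$ by re-running the proof of Theorem~\ref{Thm:precond_dynamic} with $K$ replaced by $K_0$, whereas you invoke the theorem once and then use the elementary telescoping bound $\norm{(I-\eta_0KS)^t-(I-\eta_0K_0S)^t}_{op}\leq tM^{t-1}\eta_0\norm{S}_{op}\norm{K-K_0}_{op}$, which vanishes for fixed $t$ as $m\to\infty$; your observation that fixed $T$ is what makes the possible non-normality of $KS$ harmless is a nice point that the paper leaves implicit. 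You also treat the constant offset $f(\x,\w_0)$ more carefully than the paper, separating it explicitly via $\Delta_t$ rather than informally identifying $\res^{lin}_0$ with $\res_0$. In short: same skeleton, slightly different joints, and both arrive at the same bound.
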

\begin{proof}
For point (1), applying lemma \ref{lem:krr_s} with the kernel $\kr(\x,\z)=\langle \phi(\x), \phi(\x) \rangle$ (and kernel matrix $K_0$) yields $h(\cdot, w^*)=\sum_{i=1}^n\alpha^*_i\langle \phi(\x), \phi(\x_i))$ with     
\[
    \alpha^*=\left(K_0 + \gamma S^{-1}\right)^{-1}\y,
\]
which uniquely determines that $w^*=\sum_{i=1}^n\alpha^*_i\phi(\x_i)$.

When $\gamma\to 0$ this becomes the same minimizer as of the standard kernel ridge regression problem with $L(\w)=\frac{1}{2}\norm{f(X, \w)-\y}_2^2$ (or equivalently, substituting $S=I$).

Point (2) is proved in lemma \ref{lem:conv_analysis_test}.
\end{proof}

\begin{lemma}\label{lem:krr_s}
    Let $\kr(\x,\z) = \langle \phi(\x), \phi(\z) \rangle$ be any kernel and $[K]_{ij}=\frac{1}{n}\kr(\x_i,\x_j)$ be its kernel matrix. Given $S\succ 0$, let  $L_S(f)=\frac{1}{2}\norm{S^\frac{1}{2}(f(X)-\y)}_2^2 + \frac{\gamma}{2}\norm{f}_\mathcal{H}^2$ (where $\gamma > 0$ is some regularization term).
    Then the unique minimizer $f^* = \arg\min_\w L(\w)$ takes the from $f^*(\x)=\sum_{i=1}^n\alpha^*_i\langle \phi(\x), \phi(\x_i))$ with
    \[
    \alpha^*=\left(K + \gamma S^{-1}\right)^{-1}\y
    \]
\end{lemma}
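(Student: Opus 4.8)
The plan is to solve the regularized least-squares problem in the RKHS $\hh$ directly using the representer theorem together with a linear-algebraic manipulation to identify the coefficients. First I would invoke the representer theorem: since the objective $L_S(f)=\frac12\norm{S^{1/2}(f(X)-\y)}_2^2+\frac{\gamma}{2}\norm{f}_\hh^2$ depends on $f$ only through the $n$ values $f(\x_1),\ldots,f(\x_n)$ and the RKHS norm, and since $\gamma>0$ makes the objective strictly convex and coercive in $f$, the unique minimizer lies in the span of $\{\phi(\x_1),\ldots,\phi(\x_n)\}$, i.e.\ $f^*(\x)=\sum_{i=1}^n\alpha_i\langle\phi(\x),\phi(\x_i)\rangle$ for some $\alpha\in\Real^n$. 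Substituting this ansatz, one has $f^*(X)=nK\alpha$ if $K$ carries the $\tfrac1n$ normalization (or $K\alpha$ without it — I would be careful to match the normalization used in the statement; I will write $K\alpha$ and absorb constants), and $\norm{f^*}_\hh^2=\alpha^T K\alpha$ (up to the same normalization), by the reproducing property.

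Next I would reduce the problem to a finite-dimensional optimization over $\alpha$. Plugging in, the objective becomes
\[
\tfrac12\norm{S^{1/2}(K\alpha-\y)}_2^2+\tfrac{\gamma}{2}\alpha^T K\alpha
= \tfrac12(K\alpha-\y)^T S(K\alpha-\y)+\tfrac{\gamma}{2}\alpha^T K\alpha .
\]
Setting the gradient with respect to $\alpha$ to zero gives
\[
K S(K\alpha-\y)+\gamma K\alpha = 0 ,
\]
i.e.\ $K\big(SK\alpha+\gamma\alpha-S\y\big)=0$. Since $K$ need not be invertible I cannot simply cancel it; instead I would argue that it suffices to exhibit one $\alpha$ solving $SK\alpha+\gamma\alpha = S\y$, because any such $\alpha$ satisfies the stationarity condition, and the induced function $f^*$ is then the unique minimizer (uniqueness of $f^*$ in $\hh$ follows from strict convexity, even though $\alpha$ itself may be non-unique when $K$ is singular). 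Rearranging $SK\alpha+\gamma\alpha=S\y$ as $S(K+\gamma S^{-1})\alpha = S\y$ and left-multiplying by $S^{-1}\succ0$ yields $(K+\gamma S^{-1})\alpha=\y$; since $K\succeq0$ and $\gamma S^{-1}\succ0$, the matrix $K+\gamma S^{-1}$ is positive definite, hence invertible, giving the claimed $\alpha^*=(K+\gamma S^{-1})^{-1}\y$.

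The main obstacle is the bookkeeping around the possible singularity of $K$: one must resist the temptation to cancel $K$ from the normal equations and instead verify that the particular solution $\alpha^*=(K+\gamma S^{-1})^{-1}\y$ of the reduced system indeed makes $f^*$ the global minimizer. This is handled by noting that $L_S$ is a strictly convex function of $f\in\hh$ (because of the $\gamma\norm{f}_\hh^2$ term), so stationarity is sufficient for global optimality and the minimizer is unique in $\hh$; any $\alpha$ with $(K+\gamma S^{-1})\alpha=\y$ produces this same $f^*$, and such $\alpha$ exists. A secondary point to get right is the normalization constant in the definition $[K]_{ij}=\tfrac1n\kr(\x_i,\x_j)$ versus $K_{ij}=\kr(\x_i,\x_j)$; I would state the computation for the normalization actually used so that the regularization term $\gamma S^{-1}$ appears exactly as written, with $\gamma$ rescaled consistently if needed.
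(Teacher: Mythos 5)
Your proposal follows the same route as the paper: invoke the representer theorem, reduce to a finite-dimensional problem in $\alpha$, form the normal equations, and factor through $K$ to reach $(K+\gamma S^{-1})\alpha=\y$. The one place you genuinely depart is exactly the place where the paper is loose. The paper's proof cancels $K$ by asserting ``As $K\succ 0$ and $S\succ 0$ it must hold that $0=(K+\gamma S^{-1})\alpha-\y$,'' but strict positive-definiteness of $K$ is not among the lemma's hypotheses (it only says ``any kernel''), so that cancellation is unjustified as written. Your version instead observes that $K$ may be singular, that the stationarity condition $K\bigl(SK\alpha+\gamma\alpha-S\y\bigr)=0$ is anyway a \emph{sufficient} condition for a global minimizer of the (convex) reduced objective, and that the particular $\alpha^*=(K+\gamma S^{-1})^{-1}\y$ exists because $K+\gamma S^{-1}\succ 0$ whenever $K\succeq 0$ and $S\succ 0$; strict convexity of $L_S$ in $f$ (not in $\alpha$) then pins down the minimizing function even if $\alpha$ itself is non-unique. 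That is a strictly cleaner argument and repairs a small gap in the paper. Your flag on the $\tfrac1n$ normalization is also well taken: the paper defines $[K]_{ij}=\tfrac1n\kr(\x_i,\x_j)$ in the statement yet writes $f^*(X)=K\alpha$ and $\norm{f^*}_{\mathcal H}^2=\alpha^TK\alpha$ in the proof, which is only consistent if the $\tfrac1n$ is dropped (or absorbed into $\alpha$ and $\gamma$); being explicit about that, as you propose, is the right call.
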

\begin{proof}
The Representer Theorem \citep{scholkopf2001generalized} states that any minimizer of the kernel ridge regression problem $f^* = \arg\min_\w L_S(f)$ is of the form $f^*(\x)=\sum_{i=1}^n\alpha^*_i\langle \phi(\x), \phi(\x_i))$.

Then $f^*(X)$ can be written as $K\alpha^*$ and $\norm{f^*}_\mathcal{H}^2={\alpha^*}^TK\alpha^*$. So we can equivalently solve for:
\[
\alpha^* = \arg\min_{\alpha\in\mathbb{R}^n} \frac{1}{2}\norm{S^\frac{1}{2}(K\alpha-\y)}_2^2+ \frac{\gamma}{2}\alpha^TK\alpha,
\]
where $K_{ij} = \langle(\phi(\x_i),\phi(\x_j)\rangle$. Now opening up the norm we get:
\[
\alpha^* = \arg\min_\alpha \frac{1}{2}\left(\alpha^TK^TSK\alpha -\alpha^TK^TS\y - \y^TSK\alpha + \y^TS\y\right) + \frac{\gamma}{2}\alpha^TK\alpha.
\]
Then as $K$, $S$ are symmetric, and $\y^TS\y$ doesn't depend on $\alpha$ the above simplifies to:
\[
\alpha^* = \arg\min_\alpha \frac{1}{2}\alpha^TK^TSK\alpha - \alpha^TK^TS\y + \frac{\gamma}{2}\alpha^TK\alpha
\]
\[
= \arg\min_\alpha \frac{1}{2}\alpha^T \left(KSK + \gamma K \right)\alpha - \alpha^TKS\y.
\]
Now deriving with respect to $\alpha$ and setting to $0$ we obtain:
\[
0=\left(KSK + \gamma K \right)\alpha - KS\y = KS\left(\left(K + \gamma S^{-1}\right)\alpha - \y\right).
\]
As $K\succ 0$ and $S\succ 0$ it must hold that:
\[
0=\left(K + \gamma S^{-1}\right)\alpha - \y.
\]
Since the loss is convex, $\alpha^*=\left(K + \gamma S^{-1}\right)^{-1}\y$ is the unique minimizer. 
\end{proof}

\begin{lemma}\label{lem:conv_analysis_test}
For $\delta_0 > 0$, $ \frac{2}{\lambda_{min}(KS)+\lambda_{max}(KS)}>\eta_0$ , $\epsilon>0$, $x\in\Real^d$ be a test point, $T>0$ number of iterations and $S$ such that $S\succ0$, there
exists $N \in  \mathbb{N}$ such that for every $m \geq N$, the following holds with probability at least $1-\delta_0$ over random initialization of $\w_0$ when applying preconditioned gradient descent to both $f$ and $h$ with learning rate $\eta=\eta_0/m$
\[
\abs{\langle \w'_T, \nabla f(\x,\w_0) \rangle - f(\x,\w_T)} < \epsilon
\]
where $\w'$ is as in \eqref{eq:pgd_h}.
\end{lemma}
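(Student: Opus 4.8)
The plan is to show that, iteration for iteration, the preconditioned network trajectory $\w_t$ and the preconditioned linearized trajectory $\w'_t$ produce essentially the same value at the test point, by exploiting that $h$ evolves under \emph{exact} linear dynamics while $f$ evolves under \emph{approximate} linear dynamics with $o(1)$ corrections, both driven by the (essentially constant) empirical NTK. Write $J_0:=\nabla f(X,\w_0)$, $\phi(\x):=\nabla f(\x,\w_0)$, $\res_t:=f(X,\w_t)-\y$ and $\res'_t:=h(X,\w'_t)-\y$; since $\w'_0=0$ we have $\res'_0=-\y$. Because $h(\cdot,\w')$ is linear, $\nabla_{\w'}h(X,\cdot)\equiv J_0$, so a PGD step on $\mathcal{L}_S$ yields the exact recursions $\res'_{t+1}=(I-\eta J_0 J_0^T S)\res'_t=(I-\eta_0 K_0 S)\res'_t$ and $\w'_{t+1}-\w'_t=-\eta J_0^T S\res'_t$. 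For the network, the mean-value computation already performed in the proof of Theorem~\ref{Thm:precond_dynamic} gives $\res_{t+1}=(I-\eta_0 K_0 S)\res_t-\delta_t$, where $\delta_t:=\eta_0(K_t-K_0)S\res_t+\eta(J(\tilde\w_t)-J(\w_t))J(\w_t)^T S\res_t$ and $\tilde\w_t\in[\w_t,\w_{t+1}]$; Lemmas~\ref{lemma:precond_jac} and~\ref{lemma:conv_proof} bound $\norm{\delta_t}_2=O(m^{-1/2})q^{t}$ for a fixed geometric rate $q<1$ (e.g.\ $q=1-\eta_0\lambda_{\min}(KS)/3$ for $m$ large), and also $\norm{\res_t}_2\le q^{t}R_0$.

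First I would control the residual gap $\Delta_t:=\res_t-\res'_t$, which satisfies $\Delta_{t+1}=(I-\eta_0 K_0 S)\Delta_t-\delta_t$ with $\Delta_0=\res_0-\res'_0=f(X,\w_0)$, hence $\norm{\Delta_0}_2\le\nu C_0$. Since $K\succ 0$ and $S\succ 0$, $K_0 S$ is similar to the SPD matrix $K_0^{1/2}SK_0^{1/2}$, so for $m$ large enough that $\eta_0<2/(\lambda_{\min}(K_0 S)+\lambda_{\max}(K_0 S))$ one gets $\norm{(I-\eta_0 K_0 S)^j}_{op}\le C_{\mathrm{cond}}\,q^{j}$ with $C_{\mathrm{cond}}$ a condition-number constant bounded uniformly in $m$ (as $K_0\to K$). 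Unrolling the recursion and summing geometric tails then gives, uniformly in $t$,
\[
\norm{\Delta_t}_2\le C_{\mathrm{cond}}q^{t}\nu C_0+C_{\mathrm{cond}}\sum_{j=0}^{t-1}q^{\,t-1-j}\norm{\delta_j}_2=O(\nu)+O(m^{-1/2}),
\]
and, crucially with no dependence on $T$, $\sum_{t\ge 0}\norm{\Delta_t}_2=O(\nu)+O(m^{-1/2})$.

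Next I would telescope the outputs: using the mean value theorem on $f(\x,\cdot)$ and linearity of $h(\x,\cdot)$, and subtracting term by term (noting $h(\x,\w'_0)=0$),
\[
f(\x,\w_T)-h(\x,\w'_T)=f(\x,\w_0)-\eta\sum_{j=0}^{T-1}\big[\nabla f(\x,\tilde\w_j)^T J(\w_j)^T-\phi(\x)^T J_0^T\big]S\res_j-\eta\sum_{j=0}^{T-1}\phi(\x)^T J_0^T S\,\Delta_j.
\]
The first summand is at most $\nu C_0'$ with high probability, since the last-layer scaling $c_\rho=\nu$ makes the output at initialization $O(\nu)$. For the second, applying Lemma~1 of \citet{lee2019wide} to the dataset augmented by $\x$ gives $\norm{\nabla f(\x,\w)}_2=O(\sqrt m)$ and $\norm{\nabla f(\x,\w)-\nabla f(\x,\w_0)}_2\le\kappa\sqrt m\,\norm{\w-\w_0}_2$ on $B(\w_0,Cm^{-1/2})$; combined with Lemma~\ref{lemma:conv_proof}(2) and Lemma~\ref{lemma:precond_jac} this bounds the bracketed matrix by $O(\sqrt m)$, so with $\eta=\eta_0/m$ and $\norm{\res_j}_2\le q^{j}R_0$ the whole sum is $O(m^{-1/2})\sum_j q^{j}=O(m^{-1/2})$. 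For the third, $\norm{\phi(\x)^T J_0^T}_{op}=O(m)$, so $\eta$ reduces it to $O(1)$ and the sum is $O(1)\sum_j\norm{\Delta_j}_2=O(\nu)+O(m^{-1/2})$ by the previous step. Altogether $\abs{f(\x,\w_T)-h(\x,\w'_T)}\le\nu C_0'+O(\nu)+O(m^{-1/2})$, which is below $\epsilon$ after first choosing $\nu$ small and then $m$ large.

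I expect the main obstacle to be the residual-gap step: getting an honest operator-norm contraction for the non-symmetric iteration matrix $I-\eta_0 K_0 S$ (handled through the similarity transform to $K_0^{1/2}SK_0^{1/2}$, at the cost of a benign condition-number factor), and verifying that $\delta_t$ decays geometrically rather than merely staying bounded. It is precisely the summability $\sum_t\norm{\Delta_t}_2<\infty$ with no $T$-dependence that lets the $O(1)$-sized operator $\eta\,\phi(\x)^T J_0^T S$ appearing in the third summand be absorbed; without the geometric decay one would only get a bound growing with $T$. Everything else is the standard ``wide networks evolve as their linearization'' bookkeeping of \citet{lee2019wide}, transplanted to the preconditioned setting via the already-established Lemmas~\ref{lemma:precond_jac} and~\ref{lemma:conv_proof}.
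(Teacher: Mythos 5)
Your proposal is correct and follows essentially the same strategy as the paper's proof: linearize the network at $\w_0$, telescope the residuals at the test point, control the on-data residual gap between the network and its linearization, and send $\nu\to 0$ then $m\to\infty$. The only notable difference is bookkeeping --- you derive the coupled contraction $\Delta_{t+1}=(I-\eta_0 K_0 S)\Delta_t-\delta_t$ directly and exploit geometric summability, which yields a bound with no explicit $T$ factor, whereas the paper quotes Theorem~\ref{Thm:precond_dynamic} for both $\res_i$ and $\res^{lin}_i$ and accepts a $T\cdot\max_i$ bound (harmless here since $T$ is fixed, so $N$ may depend on it).
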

\begin{proof}
Let $\x, y$ be some data point and its label, which are not necessarily in the train set. Denote by $\tilde \res_{t}=f(\x,\w_t)-y$, and $\tilde J(\w)=\nabla f(\x,\w)$.
We define a linearized model $f^{lin}(\x,\w'_{t})=f(\x,\w_0) + \tilde J(\w_0)\w'_{t}$ where $\w'_t$ is as in \eqref{eq:pgd_h}. In particular, note that $f^{lin}(\x,\w'_0)=f(\x,\w_0)$.

Let $h(\x,\w) = \langle \w, \nabla f(\x,\w_0) \rangle$. By the triangle inequality, 
\[
\abs{h(\x,\w'_T) - f(\x,\w_T)} \leq  \abs{f^{lin}(\x,\w'_{T}) - f(\x,\w_T)} + \abs{f(\x,\w_0)}
\]
The weights of the last layer are initialized $\mathcal{N}(0,\nu)$ with some constant $\nu$ which we may choose. Thus, by taking small enough $\nu$, $\abs{f(\x,\w_0)}<\frac{\epsilon}{3}$ with the desired probability. Notice that $\tilde \res^{lin}_{t}-\tilde\res_{t} =f^{lin}(\x,\w'_{t}) - f(\x,\w_t)$ and as such, it remains to show that $\abs{\tilde \res^{lin}_{T}-\tilde\res_{T}}<\frac{2\epsilon}{3}$.

By the mean value theorem, there exists some $\tilde \w$ s.t
\begin{align*}
    \tilde \res_{t+1}-\tilde \res_{t}
    &=\tilde J( \tilde\w)(\w_{t+1}-\w_t)\\
    =^{(1)}&-\eta \tilde J( \tilde\w) J(\w_t)^T S \res_t
    =-\eta \tilde J( \w_t)J(\w_t)^T S\res_t-\eta(\tilde J( \tilde\w )-\tilde J(\w_t))J(\w_t)^T S \res_t\\
    =^{(2)}&-\eta_0 K_0(\x,X) S\res_t-\underbrace{\eta_0((K_t(\x,X)-K_0(\x,X))S)\res_t-\eta(\tilde J( \tilde\w)-\tilde J( \w_t))J(\w_t)^T S \res_t}_{\epsilon(\x, t)}
\end{align*}
where $(1)$ is by the gradient descent definition and $(2)$ is by $K_t$ definition.

Thus, taking a telescopic series we get that for every $t$,
\begin{align*}
\tilde\res_{T} 
&= \tilde\res_{0} + \sum_{i=0}^{T-1}\tilde\res_{i+1}-\tilde\res_{i}
= \tilde\res_{0} -\eta_0\sum_{i=0}^{T-1}\left( K_0(\x,X)S\res_i + \epsilon(\x,i)\right)
\end{align*}

Now for the linear model, by definition of our PGD updates we have
\[
\w'_{t+1}-\w'_t=-\eta J(\w'_t)^TS\res^{lin}_t=-\eta J(\w_0)^TS\res^{lin}_t
\]
 and by definition of the linearized model,
\begin{align*}
f^{lin}(\x,\w'_{t+1})-f^{lin}(\x,\w'_{t})
=&f(\x,\w_0)+\tilde J(\w_0)(\w'_{t+1})-\left(f(\x,\w_{0})+\tilde J(\w_0)\w'_{t}\right)\\
=& \tilde J(\w_0)(\w'_{t+1}-\w'_{t}) = -\eta \tilde J(\w_0)J(\w_0)^TS\res^{lin}_t \\
=& -\eta_0 K_0(\x,X)S\res^{lin}_t
\end{align*}

And so 
\[
\tilde\res^{lin}_{t+1}-\tilde\res^{lin}_{t}=-\eta_0 K_0(\x,X)S\res^{lin}_t
\]
Thus, taking a telescopic series we get that for every $t$,
\begin{align*}
\tilde\res^{lin}_{T} 
&= \tilde\res^{lin}_{0} + \sum_{i=0}^{T-1}\tilde\res^{lin}_{i+1}-\tilde\res^{lin}_{i}
= \tilde\res^{lin}_{0} -\eta_0\sum_{i=0}^{T-1} K_0(\x,X)S\res^{lin}_i
\end{align*}

Now we can compare between the linearized and non linearized versions. Since $\tilde\res^{lin}_{0} = \tilde\res_{0}$ we have
\begin{align*}
\tilde\res^{lin}_{T} - \tilde\res_{t} 
= -\eta_0K_0(\x,X)S\left(\sum_{i=0}^{T-1}\res^{lin}_i -\res_i\right) + \left(\sum_{i=0}^{T-1}\epsilon(\x,i)\right)
\end{align*}

Notice that $\res^{lin}_i=(I-\eta_0 K_0S)^i\res_0$. Theorem \ref{Thm:precond_dynamic} combined with choosing $\nu$ sufficiently small, states that $\res_i=(I-\eta_0 KS)^i\res_0 \pm \xi(i)$ with $\norm{\xi_i}\underset{m\to\infty}{\longrightarrow}0$. Analogously, one could also show $\res_i=(I-\eta_0 K_0S)^i\res_0 \pm \xi(i)$ by replacing $K$ in the proof with $K_0$. As such, for every $i<T$, $\norm{\res^{lin}_i -\res_i}\leq\norm{\xi(i)}$ with $\norm{\xi_i}\underset{m\to\infty}{\longrightarrow}0$. Now by Cauchey Shwartz and the triangle inequality we get:
\begin{align*}
\abs{\tilde\res^{lin}_{T} - \tilde\res_{T}}
\leq& \norm{\eta_0K_0(\x,X)S}\left(\sum_{i=0}^{T-1}\norm{\res^{lin}_i -\res_i}\right) + \abs{\sum_{i=0}^{T-1}\epsilon(\x,i)}\\
=& \norm{\eta_0K_0(\x,X)S}\left(\sum_{i=0}^{T-1}\norm{\xi(i)}\right) + \abs{\sum_{i=0}^{T-1}\epsilon(\x,i)}\\
\leq& \norm{\eta_0K_0(\x,X)S}T\max_{0\leq i\leq T-1}\norm{\xi(i)} + T\abs{\max_{0\leq i\leq T-1}\epsilon(\x,i)}.
\end{align*}

Now each $\xi(i)$ tends to 0 as $m$ tends to infinity. So there exists some $N_0\in \mathbb{N}$ s.t for all $m>N_0$, $\max_{0\leq i\leq T-1}\norm{\xi(i)}<\frac{\epsilon}{3T\norm{\eta_0K_0(\x,X)S}}$ so that the entire left hand side is at most $\frac{\epsilon}{3}$.

It remains to bound $\abs{\max_{0\leq i\leq T-1}\epsilon(\x,i)}$. Fix some $t$ to be the argmax. Recalling the definition of $\epsilon(\x,t)$ and using the fact that $\res_t$ is bounded by Lemma \ref{lemma:conv_proof}, it suffices to bound $(K_0(\x,X)-K_t(\x,X)S$ and $(\tilde J(\tilde\w)-\tilde J( \w_t))J(\w_t)^T) S$.

\begin{align*}\label{eq:prec_proof_transition}
    &\norm{(K_0(\x,X)-K_t(\x,X)S}_F=\frac{1}{m}\norm{(\tilde J( \w_0)J(\w_0)^T-J(x, \w_t)J(\w_t)^T)S}_F\\
    &= \frac{1}{m}\norm{(\tilde J( \w_0)J(\w_0)^T - \tilde J( \w_0)J(\w_t)^T + (\tilde J( \w_0)J(\w_t)^T -\tilde J( \w_t)J(\w_t)^T)S}_F\\
    &\leq \frac{1}{m}\left(\norm{\tilde J( \w_0)}_{2}\norm{(J(\w_0)^T-J(\w_t)^T)S}_F +\norm{\tilde J( \w_0)-\tilde J(\w_t)}_{2}\norm{J(\w_t)^TS}_F\right)\\
    &\leq 2\kappa^2\norm{\w_0-\w_t}_2\leq \frac{6\kappa^3R_0}{ \lambda_{min}}{{m^{-1/2}}},
\end{align*}
where the last inequality holds since Lemma \ref{lemma:precond_jac} holds analogously for $\tilde J$ and using Lemma \ref{lemma:conv_proof} for $\norm{\w_0-\w_t}_2$. 

$(\tilde J(\tilde\w)-\tilde J( \w_t))J(\w_t)^T )S$ can be similarly be bound by applying by  Lemma \ref{lemma:precond_jac} analogously to \eqref{eq:jac_conv}. As such there is some $N_1$ s.t the entire right hand side is at most $\frac{\epsilon}{3}$. 

Taking $N=\max\{N_0,N_1\}$ completes the proof.
\end{proof}

\section{Consistency of Spectral Engineering}\label{appendix:engineering}
\begin{theorem}(Theorem \ref{thm:norm_conv} from the paper)
Let $\kr(\x,\z)=\sum_{k=1}^\infty \lambda_k\Phi_k(\x)\Phi_k(\z)$ and $\kr_g(\x,\z)=\sum_{k=1}^\infty g(\lambda_k)\Phi_k(\x)\Phi_k(\z)$ be two Mercer kernels with non-zero eigenvalues $\{\lambda_i\},\{g(\lambda_i)\}$ and eigenfunctions $\{\Phi_i\}$ such that $\forall \x\in \mathcal X,\abs{\Phi_i(\x)}\leq M$. Assuming $g$ is $L$ Lipchitz. Let $K,K_g$ be the corresponding kernel matrices on i.i.d samples $\x_1,..,\x_n\in \mathcal X$. Define the kernel matrix $\tilde K_g=V  D V^T$ where $V=(\vv_1,..,\vv_n)$ with $\vv_i$ the i'th eigenvector of $K$ and $ D$ is a diagonal matrix with $ D_{ii}=g(\hat \lambda_i)$ where $\hat \lambda_i$ is the i'th eigenvalue of $K$. Then, for $n\rightarrow \infty $
\begin{align*}
\norm{\tilde K_g-K_g}_F\overset{a.s}{\rightarrow} 0,
\end{align*}
where a.s.\ stands for almost surely.
\end{theorem}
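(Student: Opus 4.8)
The plan is to relate every matrix in sight to the random vectors $\psi_k:=\tfrac1{\sqrt n}\big(\Phi_k(\x_1),\dots,\Phi_k(\x_n)\big)^T\in\Real^n$, for which $K=\sum_{k\ge1}\lambda_k\psi_k\psi_k^T$ and $K_g=\sum_{k\ge1}g(\lambda_k)\psi_k\psi_k^T$ hold as convergent series (entrywise this is Mercer's identity \eqref{eq:mercer} divided by $n$). I would first record the a.s.\ consequences of the strong law of large numbers together with $\abs{\Phi_k}\le M$: for every fixed $k,l$, $\langle\psi_k,\psi_l\rangle=\tfrac1n\sum_i\Phi_k(\x_i)\Phi_l(\x_i)\to\langle\Phi_k,\Phi_l\rangle_\mu=\delta_{kl}$; moreover $\operatorname{tr}(K)=\tfrac1n\sum_i\kr(\x_i,\x_i)\to\int\kr(\x,\x)\,d\mu=\sum_k\lambda_k<\infty$, so $\norm{K}_{op}\le\operatorname{tr}(K)$ is eventually bounded by $B:=1+\sum_k\lambda_k$. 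Intersecting these countably many a.s.\ events, all hold simultaneously a.s. I would also note that necessarily $g(0)=0$ (else $\sum_kg(\lambda_k)$ diverges and $\kr_g$ is not Mercer), so that $0\le g(\lambda)\le L\lambda$.

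The heart of the proof is a moment lemma: for each fixed integer $m\ge1$,
\[
\Big\| K^m-\sum_{k\ge1}\lambda_k^m\,\psi_k\psi_k^T\Big\|_F\;\overset{a.s.}{\longrightarrow}\;0 .
\]
I would prove this by induction on $m$ (the case $m=1$ is an identity). Writing $K^{m+1}=K\cdot K^m$, inserting the inductive approximation, and using $K\psi_k=\sum_l\lambda_l\langle\psi_l,\psi_k\rangle\psi_l$, one obtains a double sum whose diagonal part converges to $\sum_k\lambda_k^{m+1}\psi_k\psi_k^T$ (because $\norm{\psi_k}^2\to1$) and whose off-diagonal part vanishes (because $\langle\psi_k,\psi_l\rangle\to0$ for $k\ne l$); the operator-norm bound on $K$ disposes of the image of the inductive error. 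Both infinite sums are controlled by dominated convergence on the counting measure: each summand $\to0$ a.s.\ and, using $\abs{\langle\psi_k,\psi_l\rangle}\le\norm{\psi_k}\norm{\psi_l}\le M^2$, is dominated by $M^4\lambda_k^m\lambda_l$, which is summable since $\sum_k\lambda_k^m\le\lambda_1^{m-1}\sum_k\lambda_k<\infty$. Intersecting over $m$ preserves ``a.s.''

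Next I would approximate $g$ by polynomials, but with a twist: given $\epsilon>0$, let $q$ be the Bernstein polynomial of $g$ transplanted to $[0,B]$. Then $q(0)=g(0)=0$, $\norm{q-g}_{\infty,[0,B]}\le\epsilon$ for large enough degree, and --- the decisive point --- $q$ is Lipschitz on $[0,B]$ with constant at most $cL$ for an absolute constant $c$, since Bernstein operators do not inflate Lipschitz constants. Hence $\abs{q(\lambda)-g(\lambda)}\le\min\!\big(\epsilon,(1+c)L\lambda\big)$ on $[0,B]$. Taking $n$ large enough that all $\hat\lambda_i$ and all $\lambda_k$ lie in $[0,B]$, I would then split
\[
\norm{\tilde K_g-K_g}_F\le\underbrace{\norm{\tilde K_g-q(K)}_F}_{\mathrm{(I)}}+\underbrace{\Big\|q(K)-\sum_k q(\lambda_k)\psi_k\psi_k^T\Big\|_F}_{\mathrm{(II)}}+\underbrace{\Big\|\sum_k\!\big(q(\lambda_k)-g(\lambda_k)\big)\psi_k\psi_k^T\Big\|_F}_{\mathrm{(III)}} .
\]
Writing $q=\sum_{m\ge1}a_m\lambda^m$ we get $\mathrm{(II)}\le\sum_m\abs{a_m}\,\|K^m-\sum_k\lambda_k^m\psi_k\psi_k^T\|_F\to0$ a.s.\ by the moment lemma (for this now-fixed $q$); $\mathrm{(III)}\le M^2\sum_k\min\!\big(\epsilon,(1+c)L\lambda_k\big)\to0$ as $\epsilon\downarrow0$ by dominated convergence against $\sum_k\lambda_k$; and since $\tilde K_g-q(K)$ is unitarily diagonal with entries $g(\hat\lambda_i)-q(\hat\lambda_i)$, $\mathrm{(I)}=\big(\sum_i(q-g)(\hat\lambda_i)^2\big)^{1/2}\le\big((1+c)L\epsilon\sum_i\hat\lambda_i\big)^{1/2}=\sqrt{(1+c)L\epsilon\,\operatorname{tr}(K)}$, using $\min(a,b)^2\le ab$, which tends a.s.\ to $\sqrt{(1+c)L\epsilon\sum_k\lambda_k}$ and then to $0$ as $\epsilon\downarrow0$. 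Letting $n\to\infty$ along the a.s.\ event and then $\epsilon\downarrow0$ along a sequence finishes the argument.

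The step I expect to be the main obstacle is bounding $\mathrm{(I)}$ and $\mathrm{(III)}$: a crude uniform polynomial approximation is useless, because $\mathrm{(I)}$ is a sum over all $n$ eigenvalues and a blanket bound $\abs{q(\hat\lambda_i)-g(\hat\lambda_i)}\le\epsilon$ only yields $\mathrm{(I)}\lesssim\sqrt n\,\epsilon$, while a $\lambda$-proportional bound reintroduces the (a priori large) Lipschitz constant of $q$. The fix is precisely the combination used above: $q(0)=0$ makes the error proportional to the eigenvalue (so the many tiny eigenvalues contribute little), Bernstein polynomials keep the Lipschitz constant bounded independently of $\epsilon$, and $\sum_k\lambda_k<\infty$ plus dominated convergence turns the $\min(\epsilon,\mathrm{const}\cdot\lambda_k)$ estimates into quantities vanishing with $\epsilon$. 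A second, more geometric route --- the one hinted at in the sketch above --- is to work with spectral subspaces: combine the a.s.\ convergence of the empirical eigenvalues \citep{rosasco2010learning} with a Davis--Kahan $\sin\Theta$ bound to show that the spectral projection of $K$ onto eigenvalues near a distinct limiting value $\tau$ converges in Frobenius norm to the orthogonal projection onto $\operatorname{span}\{\psi_k:\lambda_k=\tau\}$; there the $\sin\Theta$ estimate --- which is what actually resolves the eigenvalue-multiplicity difficulty flagged above --- is the crux.
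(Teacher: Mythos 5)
Your proof is correct, and it takes a genuinely different route from the paper's. The paper truncates the Mercer series of $K_g$ at a finite rank $R$ and then invokes two external spectral-perturbation results: convergence of empirical kernel eigenvalues to the integral-operator eigenvalues \citep{rosasco2010learning} and convergence of $\sum_{\vv:K\vv=\hat\lambda\vv}(\vv^T\Phi_i(X))^2$ to an eigenspace indicator \citep{braun2005spectral}, with $R\to\infty$ handling the tail. You instead avoid eigenvector-perturbation machinery entirely: you first observe that the Mercer hypothesis forces $g(0)=0$ (hence $g(\lambda)\le L\lambda$), approximate $g$ by a Bernstein polynomial $q$ which simultaneously has $q(0)=0$, small uniform error, and a Lipschitz constant bounded independently of the accuracy, and then prove the moment lemma $\bigl\|K^m-\sum_k\lambda_k^m\psi_k\psi_k^T\bigr\|_F\to 0$ from scratch by induction, using only $\langle\psi_k,\psi_l\rangle\to\delta_{kl}$ from the strong law of large numbers, the bound $\|\psi_k\|\le M$, and dominated convergence over the counting measure. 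The decomposition into (I) the functional-calculus error on the empirical spectrum, (II) the moment-lemma error for the fixed $q$, and (III) the $n$-free eigenfunction-series error is tight; the key observation that makes (I) and (III) work --- $q(0)=0$ plus a bounded Lipschitz constant for $q$ so that $\abs{q-g}\le\min(\epsilon,\mathrm{const}\cdot L\lambda)$ and $\min(a,b)^2\le ab$ converts $\sum_i$ over $n$ eigenvalues into $\epsilon\cdot\mathrm{tr}(K)$ --- is exactly the point you flag as the obstacle, and you resolve it correctly. Each route buys something: the paper's is shorter given the cited theorems, while yours is self-contained, never needs eigenvector convergence (thus sidesteps the multiplicity issue the paper has to fight), and makes transparent where the $L$-Lipschitz and $\abs{\Phi_i}\le M$ hypotheses enter. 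Two points worth stating explicitly in a write-up: since the degree of $q$ grows as $\epsilon\downarrow 0$, the moment lemma is needed for all $m\in\mathbb N$, so the a.s.\ events must be intersected over $m$, over $(k,l)$, and over a sequence $\epsilon_j\downarrow 0$; and the bound $\hat\lambda_i\le\mathrm{tr}(K)\le B$ used in (I) holds only eventually on that a.s.\ event, which should be part of the same intersection.
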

\begin{proof}
From the Mercer decomposition of $\kr_g$ we have that
\begin{align*}
     K_g &= \sum_{k=1}^\infty  g(\lambda_k)\Phi_k(X)\Phi^T_k(X) 
\end{align*}
where $\Phi_k(X)=\frac{1}{\sqrt{n}}(\Phi_k(x_1),..,\Phi_k(x_n))^T\in \Real^n$. We also define 
\begin{align*}
     K^{\leq R}_g &:= \sum_{k=1}^R  g(\lambda_k)\Phi_k(X)\Phi^T_k(X) \\
     K^{>R}_g &:= \sum_{k=R}^\infty  g(\lambda_k)\Phi_k(X)\Phi^T_k(X) 
\end{align*}
Let $\vv_1,..,\vv_n$ be the eigenvectors of $K$ (and therefore of $\tilde K_g$)  and fix $R>0$.
Then we have
\begin{align*}
     &\norm{\tilde K_g-K_g}_F\leq\norm{\tilde K_g-K^{\leq R}_g}_{F}+\norm{K^{>R}_g}_F=\norm{K^{>R}_g}_F+\sqrt{\sum_{j=1}^n\langle (\tilde K_g-K^{\leq R}_g)\vv_j,(\tilde K_g-K^{\leq R}_g)\vv_j \rangle }\\
     =&\norm{K^{>R}_g}_F+ \sqrt{\sum_{j=1}^n\langle g(\hat \lambda_j)\vv_j-K^{\leq R}_g\vv_j,g(\hat \lambda_j)\vv_j-K^{\leq R}_g\vv_j \rangle}\\
     =&\norm{K^{>R}_g}_F+\sqrt{\sum_{j=1}^n\left( g(\hat \lambda_j)^2-2 g(\hat \lambda_j)\langle \vv_j,K^{\leq R}_g\vv_j \rangle +\langle K^{\leq R}_g\vv_j,K^{\leq R}_g\vv_j \rangle\right)}\\
     =&\norm{K^{>R}_g}_F\\
     +&\left(\sum_{j=1}^n\left( g(\hat \lambda_j)^2-2 g(\hat \lambda_j)\sum_{k=1}^R g(\lambda_k)(\vv_j^T\Phi_k(X))^2 +\sum_{k,l=1}^R g(\lambda_k)g(\lambda_l)(\vv_j^T\Phi_k(X))(\vv_j^T\Phi_l(X))\Phi_l(X)^T\Phi_k(X)\right)\right)^\frac{1}{2}
\end{align*}
Since $\Phi_1,\Phi_2,...$ are orthonormal, and $\Phi_i(\x_1),..,\Phi_i(\x_n)$ are i.i.d, by the law of large numbers it holds that   $\Phi_l(X)^T\Phi_k(X)\rightarrow \mathbb{E}\Phi_l\Phi_k=\delta_{ij}$, therefore, the last term converges a.s   to 
\begin{align*}
\norm{K^{>R}_g}_F&
     +\left(\sum_{j=1}^n\left( g(\hat \lambda_j)^2-2 g(\hat \lambda_j)\sum_{k=1}^Rg(\lambda_k)(\vv_j^T\Phi_k(X))^2 +\sum_{k=1}^R g(\lambda_k)^2(\vv_j^T\Phi_k(X))^2\right)\right)^\frac{1}{2}\\
\end{align*}
Next we use \citet{braun2005spectral}[Theorem 4.10 and  Equation 4.15] to get that 
\begin{align*}
\sum_{i=1}^d(\vv_i^T\Phi(X))^2\rightarrow \begin{cases} 1~ \text{ if } \Phi \text{ is an eigenfunction of } \lambda_k\\
0 ~ \text{ else}
\end{cases}
\end{align*}
Which implies that
\begin{align*}
    \sum_{k=1}^R g(\lambda_k)\cdot(\vv_j^T\Phi_k(X))^2\rightarrow \delta_{j,k}g(\lambda_k)
\end{align*}.
Therefore we get that 
\begin{align*}
\norm{\tilde K_g-K_g}_F&\leq\norm{\tilde K_g-K^{\leq R}_g}_{F}+\norm{K^{>R}_g}_F \rightarrow \sqrt{\sum_{i=1}^R(g(\hat \lambda_j)-g( \lambda_j))^2+\sum_{i=R}^n g(\hat \lambda_j)^2}+\norm{K^{>R}_g}_F
\end{align*}
Finally, from the Lipchitzness, $\sum_{i=1}^R(g(\hat \lambda_j)-g( \lambda_j))^2\leq L^2\sum_{i=1}^R( \hat \lambda_j- \lambda_j)^2$ so by applying \citet{rosasco2010learning} (Proposition 10) we get 
\begin{align*}
\sqrt{\sum_{i=1}^R(g(\hat \lambda_j)-g( \lambda_j))^2+\sum_{i=R}^n g(\hat \lambda_j)^2}+\norm{K^{>R}_g}_F \rightarrow \norm{K^{>R}_g}_F+\sqrt{\sum_{i=R}^n g(\hat \lambda_j)^2} 
\end{align*}
Observing that $\norm{K^{>R}_g}_F+\sqrt{\sum_{i=R}^n g(\hat \lambda_j)^2 }\rightarrow_{R\rightarrow \infty} 0$ give us the desired result.
\end{proof}

\end{document}